\newtheorem{theorem}{Theorem}
\newtheorem{lemma}{Lemma}
\newtheorem{corollary}{Corollary}
\newtheorem{definition}{Definition}[section]
\begin{document}

\begin{center}

{\bf{\Large{Classification and Adversarial examples in an Overparameterized Linear Model: A Signal Processing Perspective}}}

\vspace*{.2in}

{\large{
\begin{tabular}{ccc}
  Adhyyan Narang$^{\dagger}$ & Vidya Muthukumar$^{\ddagger}$ & Anant Sahai$^{\dagger \dagger}$ \\
\end{tabular}}}

\vspace*{.2in}

\begin{tabular}{c}
Electrical and Computer Engineering, University of Washington$^{\dagger}$ \\
Electrical and Computer Engineering and Industrial and Systems Engineering, Georgia Institute of Technology$^{\ddagger}$ \\
Electrical Engineering and Computer Sciences, UC Berkeley $^{\dagger \dagger}$ \\
\end{tabular}

\vspace*{.2in}
\date{}

\end{center}




\begin{abstract}
  State-of-the-art deep learning classifiers are heavily overparameterized with respect to the amount of training examples and observed to generalize well on “clean” data, but be highly susceptible to infinitesmal adversarial perturbations.
  In this paper, we identify an overparameterized linear ensemble, that uses the ``lifted" Fourier feature map, that demonstrates both of these behaviors \footnote{These results were first presented at the ICML 2021 Workshop on Overparameterization: Pitfalls and Opportunities, June 2021.}.
  The input is one-dimensional, and the adversary is only allowed to perturb these inputs and not the non-linear features directly.
  We find that the learned model is susceptible to adversaries in an intermediate regime where classification generalizes but regression does not. Notably, the susceptibility arises despite the absence of model misspecification or label noise, which are commonly cited reasons for adversarial-susceptibility. These results are extended theoretically to a random-Fourier-sum setup that exhibits double-descent behavior.
  In both feature-setups, the adversarial vulnerability arises because of a phenomenon we term spatial localization: the predictions of the learned model are markedly more sensitive in the vicinity of training points than elsewhere. This sensitivity is a consequence of feature lifting and is reminiscent of Gibb's and Runge's phenomena from signal processing and functional analysis. Despite the adversarial susceptibility, we find that classification with these features can be easier than the more commonly studied “independent feature” models.
\end{abstract}
\newcommand\argmax{\operatorname*{\arg\max}}
\newcommand\argmin{\operatorname*{\arg\min}}
\newcommand{\appropto}{\mathrel{\vcenter{
  \offinterlineskip\halign{\hfil$##$\cr
    \propto\cr\noalign{\kern2pt}\sim\cr\noalign{\kern-2pt}}}}}

\newcommand\diag{\operatorname*{diag}}
\newcommand{\bigO}{\mathcal{O}}
\newcommand\sinc{\operatorname*{sinc}}
\newcommand\boldparfirst[1]{\noindent \textbf{#1}}
\newcommand\boldpar[1]{\vspace{1em}
\noindent \textbf{#1}}
\newcommand\numberthis{\addtocounter{equation}{1}\tag{\theequation}}

\newlength\mylen
\newcommand\myinput[1]{%
\settowidth\mylen{\KwIn{}}%
\setlength\hangindent{\mylen}%
\hspace*{\mylen}#1\\}
\newcommand{\Xaug}{X_{\mathsf{aug}}}
\newcommand{\Xval}{X_{\mathsf{val}}}
\newcommand{\yaug}{y_{\mathsf{aug}}}
\newcommand{\E}{\ensuremath{\mathbb{E}}}
\newcommand{\Prob}{\ensuremath{\mathbb{P}}}
\newcommand{\survival}{\ensuremath{\mathsf{SU}}}
\newcommand{\contamination}{\ensuremath{\mathsf{CN}}}

\newcommand{\B}{B}
\newcommand{\xtrain}{\boldsymbol{x}_{\mathsf{train}}}
\newcommand{\xtraini}{x_{\mathsf{train}, i}}
\newcommand{\xtest}{\boldsymbol{x}_{\mathsf{test}}}
\newcommand{\mtrain}{M_{\mathsf{train}}}
\newcommand{\strain}{S_{\mathsf{train}}}
\newcommand{\stest}{S_{\mathsf{test}}}
\newcommand{\ytrain}{\boldsymbol{y}_{\mathsf{train}}}
\newcommand{\ytraini}{y_{\mathsf{train}, i}}
\newcommand{\ztrain}{z_{\mathsf{train}}}
\newcommand{\phitrain}{\phi_{\mathsf{train}}}
\newcommand{\Atrain}{A_{\mathsf{train}}}
\newcommand{\Btrain}{\widehat{A}_{\mathsf{train}}}
\newcommand{\Atest}{A_{\mathsf{test}}}
\newcommand{\ALifted}{\mathsf{Lift}}
\newcommand{\minNorm}{\mathsf{Min L2}}
\newcommand{\ytest}{y_{\mathsf{test}}}
\newcommand{\ztest}{z_{\mathsf{test}}}
\newcommand{\phitest}{\phi_{\mathsf{test}}}
\newcommand{\phirfs}{\boldsymbol{\phi}^{\mathsf{RFS}}}
\newcommand{\phifour}{\boldsymbol{\phi}^{\mathsf{F}}}
\newcommand{\avec}{\phi}
\newcommand{\learnedfunc}{\widehat{f}}
\newcommand{\sgn}{\ensuremath{\mathsf{sgn}}}
\newcommand{\Sigmabold}{\ensuremath{\boldsymbol{\Sigma}}}
\newcommand{\alphabold}{\ensuremath{\boldsymbol{\alpha}}}
\newcommand{\betabold}{\ensuremath{\boldsymbol{\beta}}}
\newcommand{\coeffs}{\ensuremath{\boldsymbol{\widehat{\alpha}}}}
\newcommand{\coeffssimp}{\ensuremath{\boldsymbol{\widehat{\widehat{\alpha}}}}}
\newcommand{\coeffsRFS}{\ensuremath{\boldsymbol{\widehat{\beta}}}}
\newcommand{\alphaRFS}{\ensuremath{\boldsymbol{\widehat{\alpha}}^{\mathsf{RFS}}}}
\newcommand{\alphaRFSj}{\hat{\alpha}^{\mathsf{RFS}}_{j}}
\newcommand{\alphaRFSalias}{\hat{\alpha}^{\mathsf{RFS}}_{2kn}}
\newcommand{\coeffsRFSsimp}{\ensuremath{\coeffssimp^{\mathsf{RFS}}}}

\newcommand{\learnedRFS}{f_{\coeffsRFS}}

\newcommand{\smallcoeffs}{\ensuremath{\boldsymbol{\widehat{\xi}}_2}}
\newcommand{\scaledcoeffs}{\ensuremath{\boldsymbol{\widehat{\beta}}}}
\newcommand{\boldalpha}{\ensuremath{\boldsymbol{\alpha}}}
\newcommand{\boldlambda}{\ensuremath{\boldsymbol{\lambda}}}
\newcommand{\boldbeta}{\ensuremath{\boldsymbol{\beta}}}
\newcommand{\boldzeta}{\ensuremath{\boldsymbol{\zeta}}}
\newcommand{\boldxi}{\ensuremath{\boldsymbol{\xi}}}

\newcommand{\falphahat}{f_{\alphahat}}
\newcommand{\alphahat}{\widehat{\alpha}}

\newcommand{\pointwisemargin}{\gamma_{\mathsf{pw}}}
\newcommand{\nativemargin}{\ensuremath{\gamma_{\mathsf{native}}}}
\newcommand{\gradweightedmargin}{\ensuremath{\gamma_{\mathsf{GW}}}}

\newcommand{\what}{\widehat{w}_{svm}}
\newcommand{\re}{\mathbb{R}}
\newcommand{\R}{\mathbb{R}}

\newcommand{\lambar}{\vec{\lambda}}
\newcommand{\argmintwo}{\underset{\lambar, \Lambda}{argmin}}
\newcommand{\I}{\mathbb{I}}

\newcommand{\lambdabold}{\ensuremath{\mu}}
\newcommand{\Kmatrix}{\ensuremath{K}}
\newcommand{\invKmatrix}{\ensuremath{K}^{-1}}
\newcommand{\Ibold}{\ensuremath{\mathbf{I}}}
\newcommand{\Ebold}{\ensuremath{\mathbf{E}}}
\newcommand{\Ubold}{\ensuremath{\mathbf{U}}}
\newcommand{\Qbold}{\ensuremath{\mathbf{Q}}}
\newcommand{\Ybold}{\ensuremath{Y}}
\newcommand{\Jbold}{\ensuremath{\mathbf{J}}}

\newcommand{\JboldTruncated}{\ensuremath{\tilde{\mathbf{J}}}}

\newcommand{\lambdaoptsvm}{\ensuremath{\mu^*}}
\newcommand{\lambdaunconstrained}{\ensuremath{\widetilde{\mu}^*}}
\newcommand{\vecones}{\ensuremath{\mathbf{1}}}

\newcommand{\xtrue}{X_{true}}
\newcommand{\xtrash}{X_{alias}}
\newcommand{\wtrue}{w_{true}}
\newcommand{\wtrash}{w_{alias}}
\newcommand{\wtildetrash}{\widetilde{w}_{alias}}
\newcommand{\an}[1]{{\color{red} Adhyyan: #1}}
\newcommand{\vm}[1]{{\color{blue} Vidya: #1}}

\newcommand{\nativej}{\gamma_{native}^j}
\newcommand{\native}{\gamma_{native}}
\newcommand{\nativehatj}{\widehat{\gamma}_{native}^j}
\newcommand{\nativehat}{\widehat{\gamma}_{native}}
\newcommand{\nativehatmodel}{\widehat{\Gamma}_{native}}
\newcommand{\G}{\textbf{J}}
\newcommand{\Loss}{\mathcal{L}}
\newcommand{\aveci}{\ensuremath{\boldsymbol{\phi_i}}}
\newcommand{\phibold}{\ensuremath{\boldsymbol{\phi}}}
\newcommand{\Mbold}{\ensuremath{\mathbf{M}}}
\newcommand{\PertSet}{\mathcal{X}}
\newcommand{\learnedPredictor}{\falphahat}
\newcommand{\trueFunction}{\ensuremath{f_{true}}}
\newcommand{\norm}[1]{\left\lVert#1\right\rVert_2}

\newcommand{\specComplexity}{\mathbb{R_A}}

\newcommand{\classlossadv}{\ensuremath{\mathcal{C}}_{adv}}
\newcommand{\Xvec}{\ensuremath{\mathbf{X}}}
\newcommand{\XVecTilde}{\ensuremath{\tilde{x}}}

\newcommand{\sumjdk}{\ensuremath{\sum\limits_{j=1, j \neq \truek}^d}}
\newcommand{\regloss}{\ensuremath{\mathcal{R}}}
\newcommand{\classloss}{\ensuremath{\mathcal{C}}}
\newcommand{\reglossn}{\regloss(\alphahat_{\mathsf{real}};n)}
\newcommand{\classlossn}{\classloss(\alphahat_{\mathsf{binary}};n)}

\newcommand{\survivalb}{\ensuremath{\mathsf{SU}}}
\newcommand{\contaminationb}{\ensuremath{\mathsf{CN}}}
\newcommand{\survivalr}{\survivalb_r}
\newcommand{\contaminationr}{\contaminationb_r}
\newcommand{\survivaln}{\survival(1;n)}
\newcommand{\contaminationn}{\contamination(1;n)}
\newcommand{\survivalnreal}{\survivalb_r(1;n)}
\newcommand{\contaminationnreal}{\contaminationb_r(1;n)}
\newcommand{\wip}{\mathrm{w.p.}}
\newcommand{\truek}{\ensuremath{t}}

\section{Introduction}

Deep neural networks are often overparameterized and can achieve zero error on the training data, even if it is corrupted by arbitrary amounts of label noise~\cite{neyshabur2014search,zhang2016understanding}.
Thus, their state-of-the-art performance is not easily explained by traditional interpretations of statistical learning theory~\cite{zhang2016understanding}.
At the same time, this performance is also notably brittle: it sharply degrades when the input can be subject to small adversarial perturbations that are imperceptible to humans~\cite{goodfellow2014explaining}.
Recently, the benefits of such overparameterized models that \emph{interpolate} the training data have been empirically replicated in the simpler settings of kernel machines~\cite{belkin2018understand} and linear models~\cite{geiger2019jamming,belkin2019reconciling}.
We now have a much clearer understanding of the conditions under which these benefits of overparameterization arise, particularly for regression tasks~\cite{bartlett2020benign,belkin2019two,hastie2019surprises,mei2019generalization,mitra2019understanding,muthukumar2020harmless,nakkiran,xie2020weighted}.

In this context, it is natural to examine whether we can use the emerging framework of good generalization of interpolating models to identify, distill, and understand overparameterized scenarios that perform well with respect to test classification accuracy, but are brittle to infinitesmal amounts of adversarial corruption. However, this is far from a trivial task.
In general, the impact of an adversarial perturbation on the input data has a highly non-linear effect on the prediction decision, and is challenging to quantify precisely.
For example, the optimal adversarial perturbation typically involves solving a nonconvex optimization problem and is not always efficiently computable, let alone expressible in closed form. Accordingly, most existing precise mathematical characterizations of the adversarial risk, whether in underparameterized or overparameterized regimes, are focused on linear models \emph{using the original data as features}, which we henceforth call a \emph{linear-model-on-data}.
The assumption of linear-model-on-data is a significant simplification, as the worst-case perturbation can be expressed in closed form; more importantly, it abstracts away central components of even the simplest nonlinear models that are used in practice.
The overparameterized regime in particular admits another critical limitation of the linear-model-on-data assumption, which is that perturbations would be allowed in higher dimension than the number of samples, making the adversary unreasonably powerful.

It is therefore instructive to look for the simplest example of an overparameterized nonlinear model that exhibits the \emph{generalizable-but-brittle} phenomenon.
The central contribution of this paper is to provide the first precise characterization of classification and adversarial risk for the \emph{overparameterized-linear-model-on-features}.
To see the connection to this type of model and neural networks, we note that learning in neural networks is commonly described as involving two different kinds of learning.
During training, the early stages learn a good (typically high-dimensional) feature map; while the last stage learns a near-optimal linear model on this feature map.
Consequently, our model is most closely related to the setting where the first stage of neural network learning — feature learning — is randomized independently rather than learned from training data ~\cite{chizat2019lazy,jacot2018neural}.
Importantly, even though the model is now linear \emph{in the feature map}, it continues to be nonlinear \emph{on the input data}.

Crucially, the adversarial examples in our model are shown to arise in the infinite sample-size limit due to \emph{deficiencies in the estimation process} as a consequence of interpolation, even when there is no label noise\footnote{In pioneering work,~\cite{belkin2018overfitting} also uncover adversarial examples as a consequence of interpolation, but with different asymptotics ($n \to \infty$ for an arbitrarily small perturbation $\epsilon$) and for their results, the presence of non-zero label noise is key. Our results show that adversarial examples can arise from interpolation even without label noise.}.
The starting point for our main insight is contained in recent work~\cite{muthukumar2020classification} that showed that classification can ``generalize well'' in the sense of asymptotic consistency in certain high-dimensional regimes, despite poor comparative regression performance.
This discrepancy is, at a high level, attributed due to the relative forgiveness of the 0-1 test loss compared to the squared loss; thus, even though classification is successful, the original function is fitted poorly.
This leads to the natural hypothesis that this poor function fitting might lead to the proliferation of adversarial examples.
We formally prove this hypothesis in our paper under the simple toy model of a Fourier feature map on $1$-dimensional data.
This model allows us to clearly visualize the learned function and brings the central mechanism for the proliferation of adversarial examples into sharp focus.
Nevertheless, our findings do extend empirically to other classes of features and $>1$-dimensional data.
Our analysis is notable for being sharp with matching upper and lower bounds on the adversarial risk; this is in contrast to just showing the existence of adversarial examples for a learned-model, which is tantamount to providing a lower-bound on adversarial risk.

The estimation-centric explanation is in contrast to the majority of existing explanations for adversarial examples, which suggest that these are an unavoidable consequence of various aspects of the ML pipeline: (a) high dimensionality of input data~\cite{gilmer2018adversarial,fawzi2018adversarial,mahloujifar2019curse,shafahi2018adversarial} (b)  model misspecification~\cite{nakkiran2019adversarial} (c) label noise~\cite{fawzi2016robustness,ford2019adversarial,belkin2018overfitting} (d) larger sample complexity (\textit{upper bounds}) of adversarial robustness v.s.~regular generalization~\cite{schmidt2018adversarially,yin2019rademacher}.
To rule-out the above as possible causes for adversarial examples in our model, we intentionally study a setup with low-dimensional input-data and without misspecification or label-noise.
The proposed estimation-centric hypothesis is an optimistic one, since it suggests that we can alter parts of our learning process to learn adversarially-robust models.

\paragraph{Our contributions:}

In this paper, we identify simple examples of the overparameterized linear model that admit good generalization \textit{in binary classification tasks} on test data, but whose performance provably degrades in the presence of small adversarial perturbations.
Importantly, \emph{the regression task does not generalize well} for these examples (in a manner first made explicit by~\cite{muthukumar2020classification,chatterji2021finite} and built on since in subsequent work~\cite{wang2021benign,cao2021risk}), which is key to the proved separation between classification performance and adversarial robustness.
Our models invoke standard Fourier and polynomial feature liftings on one-dimensional input data, endowed with non-uniform weights on the features that change with the number of samples $(n)$ and the dimension of the feature map $(\B)$.
The estimator considered is the minimum-$\ell_2$-norm interpolator of the binary labels, or, equivalently, the minimum-Hilbert-norm interpolator of the equivalent reproducing kernel Hilbert space (RKHS) that is defined by this feature map.
We consider high-dimensional overparameterized regimes in which $\B \gg n$ and study asymptotics as $n \to \infty$.
Letting the feature scalings vary 
with $(n,d)$ enables us to explore settings that deviate from the traditional RKHS theory, in the sense that kernel ridge regression (KRR) need not generalize well; see, e.g. Chapter 13,~\cite{wainwright2019high}.
An informal summary of our results is described below:
\begin{enumerate}
\item We show that in contrast to the regression error, classification can ``generalize well" in our high-dimensional regimes, in the sense of statistical consistency, i.e. we show that the classification test error goes to $0$ as $n \to \infty$.
Further, this favorable performance of classification tasks can occurs in settings where the corresponding \emph{linear-model-on-data} would not generalize even in classification tasks~\cite{muthukumar2020classification}; see Corollary~\ref{cor:riskbilevel} for a discussion. This shows that lifted models can behave qualitatively differently from independent feature models when the underlying data is low-dimensional.
\item We precisely characterize the adversarial risk of classification tasks to infinitesmal perturbations of the order of $1/n$ \textit{on the underlying one-dimensional input data space}, and show a separation between classification and adversarial risk: while the classification risk will go to $0$ as $n \to \infty$, the adversarial risk can go to $1$.
In this model, adversarial examples occur as a fundamental consequence of the estimation/learning process. This happens despite the setup appearing safe along many other commonly explored axes --- there is neither label noise nor model misspecification; there are plenty of training samples; and the attacker cannot target the lifted features directly.
\item Finally, we extend our results to a setup that we introduce called \emph{random-Fourier-sum} features.
While the details of this model are contained in Section~\ref{sec:rfs}, we provide a brief description here.
Each component of the RFS feature model comprises of a weighted sum of sinusoids where the weights are \emph{random}; in fact, independent and identically distributed.
Notably, the number of RFS features is completely customizable and distinct from the number of Fourier frequencies used in the original Fourier feature map\footnote{We elaborate on this point in the description of the RFS model in Section~\ref{sec:rfs}. In the meantime, the high-level analogy that we recommend to the careful reader is between a reproducing kernel Hilbert space with weighted Fourier feature maps (the original model) and the infinite width limit of a wide linear model (the RFS model).}.
Moreover, this model exhibits double-descent behavior as a function of the number of RFS features.
We show that if the number of RFS features grows faster than the number of frequencies included in the Fourier model, both the classification and adversarial risk approach the risk of the original Fourier feature model.
This is a significantly more non-trivial result to show than the corresponding regression risk as both the classification and adversarial test error are discontinuous in their arguments.
We also show empirically that the complementary scenario in which the number of RFS features is smaller than the number of Fourier frequencies (but still larger than the number of training examples) admits qualitatively distinct behavior: while adversarial examples continue to proliferate, they are widely distributed across the training data domain and no longer ``spatially localized''.
\end{enumerate}

\begin{figure*}[bthp]
  \begin{subfigure}[]{0.32\linewidth}
    \centering
    \includegraphics[width=\textwidth]{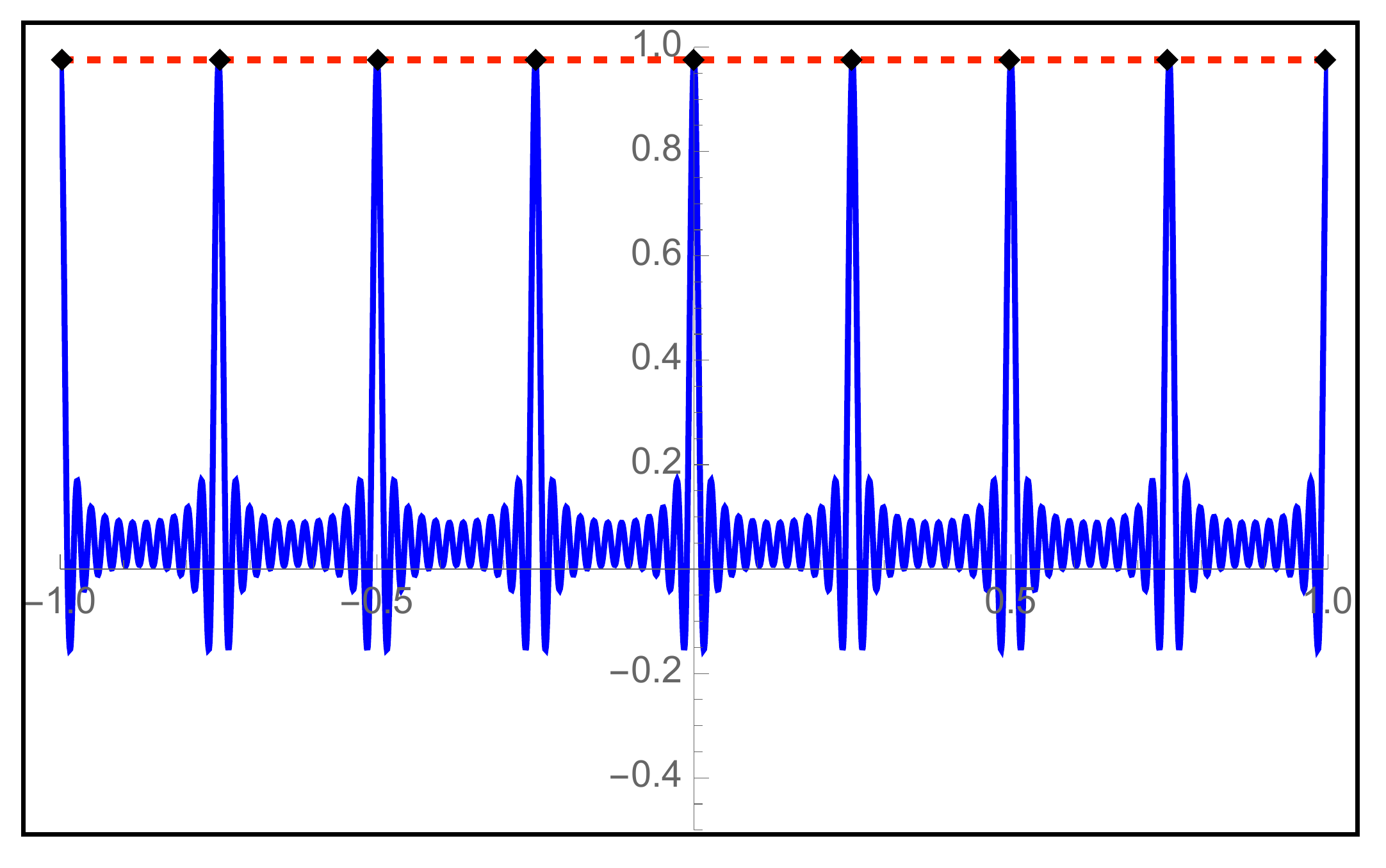}
  \caption{Overall recovered function $\hat{f}$.}
  \label{subfig:learned_function_total}
  \end{subfigure}
  \begin{subfigure}[]{0.32\linewidth}
    \centering
    \includegraphics[width=\textwidth]{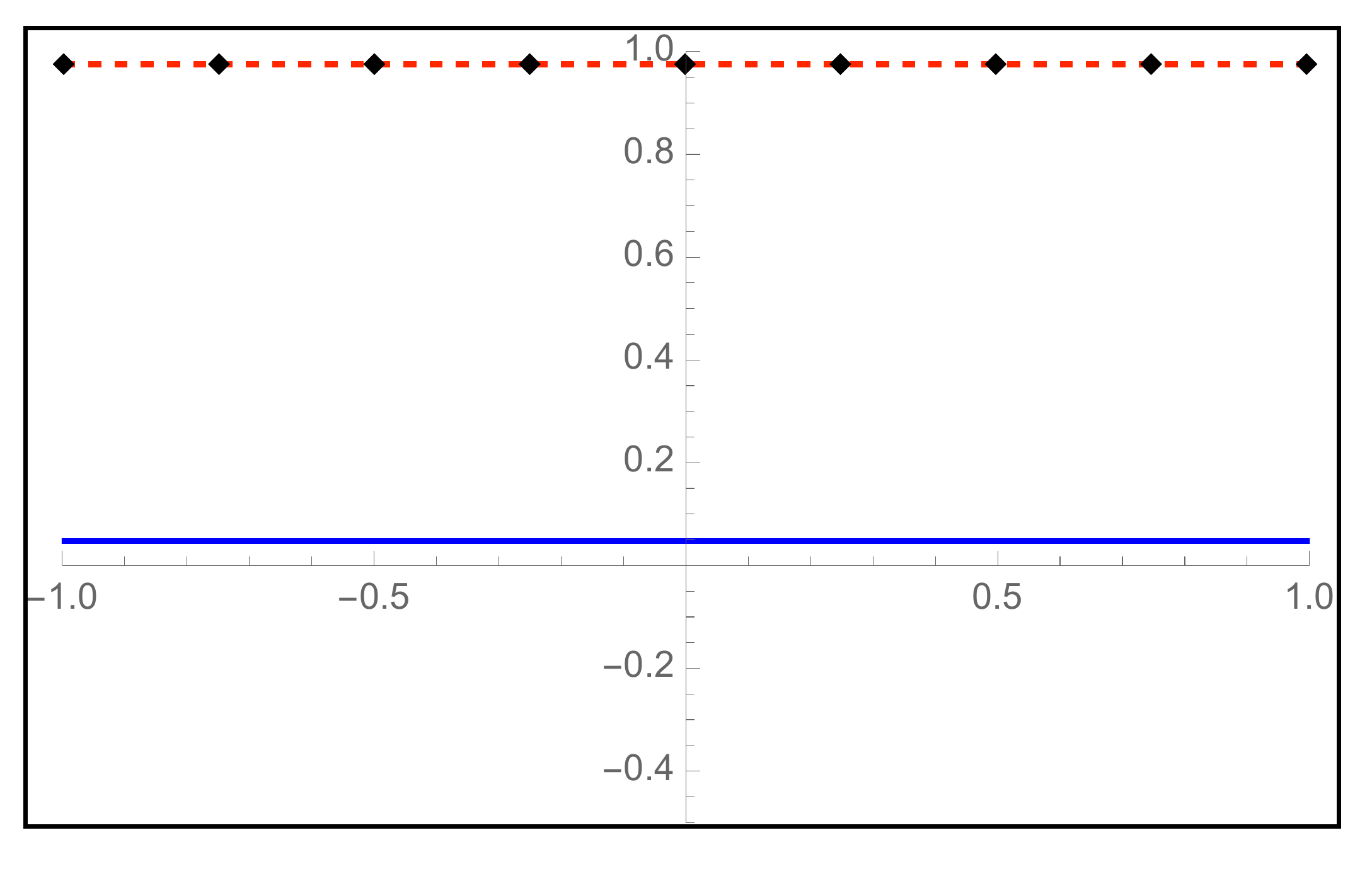}
  \caption{``True function" component of $\hat{f}$.}
  \label{subfig:learned_function_constant}
  \end{subfigure}
  \begin{subfigure}[]{0.32\linewidth}
  \begin{center}
    \includegraphics[width=\textwidth]{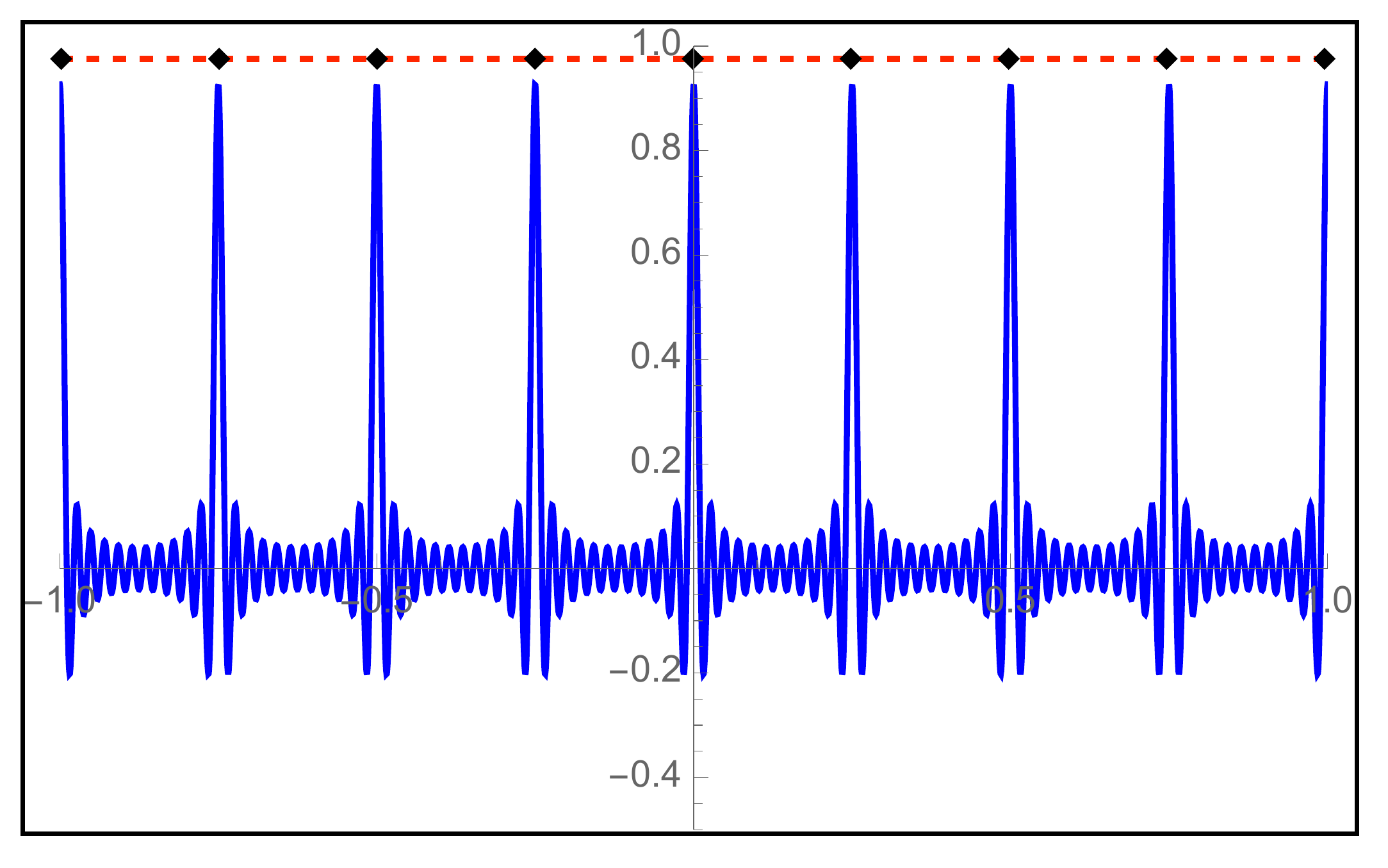}
  \end{center}
  \caption{``Aliased" component of $\hat{f}$.}
  \label{subfig:learned_function_aliases}
  \end{subfigure}
  \caption{Illustration of function recovered by minimum-$\ell_2$-norm interpolation with the Fourier feature map on 1-D data when the true function (and labels) are constant. Constant function and regularly spaced labels are pictured in dotted red and black respectively. Training data is regularly spaced in the interval $[-1,1]$.}
\end{figure*}\label{fig:illustrationresult}

\paragraph{Mechanism for results: Brittleness of classification test loss and Gibbs' phenomenon}

Our analysis uncovers the aforementioned results as direct consequences of the classical signal-processing phenomena of \textit{undersampling} and \textit{aliasing}.
Figure~\ref{fig:illustrationresult} showcases all of our results in an ``ultra-toy'' model.
Consider $n$ training samples that are regularly spaced on the interval $[-1,+1]$, and
labels generated by a true ``constant function'' $f^*(x) = +1$.
The model considered is the minimum-$\ell_2$-norm interpolator of this training data, using linear combinations of Fourier (sinusoid) features upto frequency $\B \gg n$ with a particular type of unequal scaling --- the constant feature is larger than the others, but only slightly.
Figure~\ref{subfig:learned_function_total} highlights that the ensuing recovered function $\hat{f}(\cdot)$ is very different from $f^*(\cdot)$; thus, implying poor regression performance.
However, the recovered function $\hat{f}(\cdot)$ hovers above $0$ \textit{almost everywhere}, and so will correctly predict binary labels $+1$ almost everywhere.
Finally, while classification is successful almost everywhere, the recovered function dips below $0$ at regular intervals of length $2/n$, yielding ubiquitous adversarial examples if the adversary can perturb inputs by up to $1/n$.

Underlying all of these results is the classical phenomenon of \textit{aliasing} that arises in statistical signal processing (SP), as a consequence of \textit{undersampling} the data with respect to the number of frequencies, i.e. $n \ll \B$. Undersampling is the SP counterpart of overparameterization.
Consider the set of functions that interpolates the labels: while this includes the true constant function $f^*(x)$, it also includes higher-frequency aliases of the form $f_{\ell}(x) := \cos(2\pi (\ell n) x)$ for values $\ell = 1,\ldots, \B/n$.
The minimum-$\ell_2$-norm interpolator can be precisely shown to select a function that includes a non-trivial proportion of all of these aliases.
Figure~\ref{subfig:learned_function_constant} shows the component of the recovered function consisting of the ``true" constant signal, and Figure~\ref{subfig:learned_function_aliases} shows the component of the recovered function comprising the higher-frequency aliases (which is centered at $0$).
Observe that the true signal is barely preserved --- its value is just above $0$.

This \textit{dissipation of signal} (which has been classically observed in statistical signal processing~\cite{chen2001atomic}) is the primary mechanism leading to poor regression performance.
The exact extent of dissipation is controlled by the relative scaling of the true constant feature and the aliases; hence, this scaling serves as a ``knob" that helps us control the type of inductive bias in the eventual solution.
In contrast to regression performance, average binary classification accuracy only depends on the recovered function remaining above $0$ almost everywhere.
In other words, the survived proportion of signal only needs to be enough to outweigh the \textit{contamination} effect of the aliases.
To this end, Figure~\ref{subfig:learned_function_aliases} shows a ``spatial localization" effect in the overall contamination by aliases: it is close to $0$ almost everywhere, but very large around the locations of the training points (and in the positive direction).
Thus, most of the contamination energy manifests in an \textit{asymptotically benign} manner, allowing for what we show is unusually good generalization in classification tasks.
Finally, the spatial localization effect also includes sharp ``contrarian'' dips in the contamination effect that manifest 
right next to the training  points.
Our analysis provides a precise interpretation of these dips through an explicit connection to Gibb's and Runge's phenomena in numerical analysis\footnote{Gibbs is a classical phenomena that manifests in the approximation of a discontinuous function by its truncated Fourier series leading  to ``overshoot'' peaks around the discontinuities. Spiritually, this happens here as the min-norm interpolation of samples effectively attempts to approximate a set of Dirac deltas at the training points. When the signal is sufficiently attenuated to be close to the decision boundary (in this case, the zero-line), the Gibbs-type overshoot spawns adversarial examples in the neighborhood of the training points.  
}, and shows that they are the cause of adversarial examples in our model.

\section{Related work}

\paragraph{Generalization:}
Under some conditions (outside of the more commonly studied neural-tangent-kernel (NTK) regime~\cite{jacot2018neural}), the outcome of the training process of an infinitely wide neural network can be interpreted as a data-dependent kernel~\cite{dou2020training,long2021properties}.
This process is believed to admit significant approximation-theoretic benefits~\cite{ghorbani2020neural}.
Consequently, the ensuing attractive test performance is not easily explained by data-dependent generalization bounds based on the Rademacher complexity and the training data margin\footnote{See, e.g.~\cite{bartlett2020failures} for a proof of failure of all such ``model-dependent" generalization bounds in overparameterized regimes.}.
Intriguing preliminary evidence exists that kernels that are too flexible for a regression task could generalize well in classification\footnote{Other intriguing and non-standard properties arise for these sufficiently high-dimensional linear models, such as the proliferation of support vectors and ensuing connection between the hard-margin SVM and the least-norm interpolation of discrete labels~\cite{hsu2020proliferation}.}.
This evidence is currently provided in high-dimensional linear models with independent sub-Gaussian and Gaussian features~\cite{muthukumar2020classification,chatterji2021finite,wang2021benign,cao2021risk}.
While these have been recently studied as a proxy for kernel methods, they comprise examples of the very different \emph{linear-model-on-data} paradigm.
Hence, these caricatures cannot be easily interpreted as feature maps of input data, nor can their precise mathematical analysis be easily extended to the more realistic \emph{linear-model-on-features} setting.

Perhaps one of the most intriguing open questions in this setup is whether the benefits of overparameterization and interpolation continue to hold for the \emph{linear-model-on-features} paradigm when the data dimension is held to a constant\footnote{Effectively modeling the dimension of real-world data in modern ML is an active area of research: for example, the ``manifold hypothesis'' posits that while the input dimension is often high, the effective dimension of the data is low.} with respect to the number of samples $n$.
In some cases, such as the proportional regime where the data dimension $p \propto n$, there is some universality in behavior for the linear-model-on-features and linear-model-on-data; see, e.g.~\cite{hastie2019surprises} for a recent discussion.
However, if $p$ is constant with respect to $n$, the picture is significantly more mixed and we cannot expect any universal behavior.
Indeed, examples such as the Laplace kernel~\cite{rakhlin2019consistency} demonstrate that interpolation need not generalize well here.
Thus, we need to consider stylized models for a tractable analysis.
The Fourier feature map represents one such stylized model where minimum-$\ell_2$-norm interpolation can be observed to generalize well even when the input data is $1$-dimensional in a manner that parallels the easier \emph{linear-model-on-data} paradigm~\cite{muthukumar2020harmless,muthukumar2020classification}.
It is a particularly attractive choice because of its use in designing measurement matrices used in the compressive sensing literature~\cite{candes2005decoding,candes2006robust}, as well as more classical connections between the idea of overparameterization and \emph{undersampling} of a high-frequency signal.
Our work leverages these connections and represents a first step towards a sharp characterization of the classification generalization error of this \emph{linear-model-on-features} using tools from signal processing and harmonic analysis.
Specifically, we show in this work that classification with the Fourier feature map can generalize well beyond the high-dimensional regimes considered in~\cite[Appendix A]{muthukumar2020classification}, settling one of the open questions that was raised there.
Consequently, classification is shown to be significantly easier than the corresponding linear-model-on-data case.

\paragraph{Adversarial examples:}
Several schools of thought suggest adversarial examples as an unavoidable consequence of various aspects of the ML pipeline: a) high dimensionality of input data~\cite{gilmer2018adversarial,fawzi2018adversarial,mahloujifar2019curse,shafahi2018adversarial}, b) model misspecification~\cite{nakkiran2019adversarial}; c) label noise~\cite{fawzi2016robustness,ford2019adversarial,belkin2018overfitting}, d) larger sample complexity(\textit{upper bounds}) of adversarial robustness v.s.~regular generalization~\cite{schmidt2018adversarially}.
This motivates a commonly conjectured fundamental tradeoff between regular accuracy and robustness.

A recent, alternative school of thought postulates that a ground-truth classifier that generalizes well \textit{and} is robust not only exists, but is contained within the model class.
According to this perspective, the inability of vanilla training procedures to find robust models is a consequence of suboptimalities in the \textit{estimation process}.
~\citet{raghunathan2020understanding} explore this possibility in the context of uncovering suboptimalities in the inductive bias of adversarial training \textit{for regression tasks}.
Their results have some potential connections with ours, but focus on the inductive bias of adversarial training rather than the originally trained model.
With regard to the latter,~\citet{ilyas2019adversarial} obtained an insightful characterization of ``informative-but-not-robust" features in state-of-the-art neural networks.
These features would generalize well, but be brittle to adversarial perturbations.
Overall, our work also supports the estimation-centric explanations for adversarial examples: indeed, several candidate models exist within our model family that would generalize well and be robust, including interpolating models. Our work has simple manifestations of all of these types of features: Figure~\ref{subfig:learned_function_constant} above shows an informative and robust feature, Figure~\ref{subfig:learned_function_aliases} shows uninformative features, and Figure~\ref{subfig:learned_function_total} shows an informative but not robust feature that arises as a aggregation of the two that is selected by minimum-$\ell_2$-norm interpolation.
Thus, these types of features arise as a consequence of our ordinary inductive bias and overparameterization\footnote{This perspective of \textit{under-specification} leading to brittleness in performance (along many additional axes besides adversarial robustness) is recently supported by extensive real-world experiments~\cite{d2020underspecification}.}.
This link between ``informative-but-not-robust'' features and minimum-$\ell_2$-norm interpolation in overparameterized models appears to be new; but is somewhat natural in light of the mechanism behind the generalization analysis that is provided in~\cite{muthukumar2020classification}.

From a mathematical standpoint, analyzing adversarial robustness in a precise and meaningful manner for overparameterized models is highly challenging.
An interesting recent line of work characterizes the worst-case (over the training data domain) or \textit{global} Lipschitz constant of a function class that interpolates training data, and proves a conjecture under fairly universal conditions that increased overparameterization can reduce this global Lipschitz constant~\cite{bubeck2020law,bubeck2021universal}.
However, a poor global Lipschitz constant only implies the presence of \textit{one} adversarial example, not many; thus, \textit{necessary} conditions for an adversarially robust model remain open.
As mentioned in the introduction, precise analyses of the adversarial error for the case of \emph{linear-model-on-data}~\cite{javanmard2020precise} do exist.
These analyses heavily leverage the closed-form expression for the adversarial perturbation and are not easily extendible to the more realistic case of linear-model-on-features.
Further, in the highly overparameterized regime, the linear-model-on-data grants the adversary an unrealistic level of power by allowing them to perturb many more degrees of freedom than the number of training examples.
This motivates our present study of adversarial examples on extremely low-dimensional data mapped onto high-dimensional features.
Here, analyzing the precise downstream effect of an adversarial perturbation is non-trivial largely owing to the non-linearity of most such feature maps.
Our analysis is notable for: a) being sharp with matching upper and lower bounds on the adversarial risk, and b) uncovering adversarial examples in infinite-sample size, as a consequence of interpolation, \textit{despite} the absence of high dimensionality in input data, model misspecification, and label noise.
We conduct our analysis in the highly stylized model of the Fourier feature map, which leaves many directions for future work open; we discuss these directions in brief in Section~\ref{sec:simulations}.

\section{Setup}
\label{sec:setup}

\paragraph{Data and feature map:} Let $\strain := (\xtrain, \ytrain) \in (\re \times \{-1,+1\})^{n}$ be our training dataset, where $\xtrain$ denotes the one-dimensional training data and $\ytrain$ denotes the binary labels.
The $i^{th}$ training point is denoted by $\xtraini \in \re$, and its corresponding label is denoted as $\ytraini \in \{-1, +1\}$.
For the purposes of a clean and easily interpretable theoretical analysis, we make an ``ultra-toy" assumption that the training data points are chosen deterministically on a grid, i.e. $\xtraini = -1 + 2i/n$.
Section~\ref{sec:simulations} provides evidence that our theoretical predictions carry over to the case of training data chosen uniformly at random, i.e. $\xtraini \sim \text{Unif}[-1,1]$.
As is standard, we will evaluate population test error over data also chosen $\sim \text{Unif}[-1,1]$.

For ease of exposition, we consider the labels to be generated by a constant function, i.e. $y = f^\ast(x) = 1$ regardless of the point $x$ at training and test time.
We note that our theory applies to any generative model of the form $y = \sgn(\sin(2\pi k x))$, where $k < n$ is an integer (under appropriate adjustments to the feature map).
The problem of learning a robust classifier is only well-defined when the true classifier is robust, i.e. does not change sign when the input is perturbed slightly. The constant function is a quintessential example of such a function, hence motivating the choice of $f^\ast(\cdot)$.

Our model class consists of linear combinations of the high-dimensional Fourier feature map $\boldsymbol{\phi}: \R \to \R^\B$, defined as $\boldsymbol{\phi}(x) = \left[\frac{1}{\sqrt{2}}, \sin(\pi x), \cos(\pi x), \ldots \sin(\frac{\B}{2} \pi x), \cos(\frac{\B}{2}\pi x)\right]$.
This constitutes a high-dimensional ``lifting" of one-dimensional data.
The feature map consists of the first $\B$ elements of the Fourier basis, which can be verified to be orthonormal with respect to the uniform distribution on $[-1,1]$.
It also includes the true constant feature that generates the labels; thus, the training data is well-specified under this model.
To study the effects of overparameterization and interpolation, we select $\B \gg n$.

The motivation for the feature-map is to study the linear-model-on-features case, which is a significant generalization of the linear-model-on-data setup. We choose the Fourier feature-map because it (and random variants) have recently been successfully studied to explain modern ML phenomena \cite{muthukumar2020harmless, rahimi2008random, belkin2019two}, and because it allows us to leverage rich insights from signal processing to contextualize our problem.
Our focus of study is on the minimum-$\ell_2$-norm interpolator,
\begin{definition}[Minimum-$\ell_2$-norm interpolator]
  \label{def:minl2}
The minimum-$\ell_2$-norm interpolator of labels on Fourier features with weighting $\Sigmabold$ is given by
\begin{align*}
  \coeffs_2 & := \argmin_{\boldsymbol{\alpha} \in \re^d} \| \Sigmabold^{-\frac{1}{2}} \boldsymbol{\alpha}\|_2 \\
	    & s.t \; \; \boldsymbol{\phi}(\xtrain) \boldsymbol{\alpha} = \ytrain,
\end{align*}
where $\boldsymbol{\phi}(\xtrain)$ denotes the $n \times d$ training data matrix.
\end{definition}
Note that in the above definition, we have chosen the convention so that $\coeffs_2$ denotes the coefficients on the \emph{unweighted} Fourier features; accordingly, to model the effect of non-uniform weighting, we consider the minimum-\emph{weighted}-$\ell_2$-norm interpolator as above.
Equivalently, we could have denoted coefficients on the \emph{weighted} version of the Fourier features, in which case we would simply consider the minimum-$\ell_2$-norm interpolator.
These solutions and their generalization error are clearly equivalent.

The minimum-$\ell_2$-interpolator is known to require an explicit inductive bias that favors the location of the true feature (in this case, the constant feature).
This is why the features are weighted \textit{non-uniformly}, with higher weights on candidate (true) features than others\footnote{This is in spirit connected to the concept of minimum-Hilbert-norm interpolation using smooth kernels with a fast eigenvalue decay.
For expositions on why this non-uniform weighting is required for good generalization of the $\ell_2$-inductive bias in high-dimensional linear regression, see the recent analyses of benign overfitting~\cite{bartlett2020benign,muthukumar2020harmless}.
}.
In Definition~\ref{def:minl2}, $\Sigmabold := \text{diag}(\lambda_1,\ldots,\lambda_\B)$ denotes a diagonal weighting matrix with non-negative and decreasing entries; the higher the weight $\lambda_i$ is on feature $i$, the less it is penalized in the objective function.
An explicit choice of the non-uniform weighting is a parameterized ``bilevel'' structure of one higher weight and one lower weight, formally defined below.

\begin{definition}[Bi-level ensemble$(n,p,q)$]
  \label{def:bilevel_covariance}
Set the parameters $\B := n^p$ and $\gamma = n^{-q}$.
Now, define $\Sigmabold(n,p,q)$ as the diagonal matrix with entries as:
\begin{align*}
  \lambda_j=  \begin{cases}
  \gamma \B, & j = 1\\
  \lambda_L := {\frac{(1 - \gamma)\B}{\B-1}}, & \text{otherwise}.
  \end{cases}
\end{align*}
For this ensemble, we will fix $(p,q)$ and study the evolution of various quantities as a function of $n$.
\end{definition}

Intuitively, the parameter $p$ dictates the rate of growth of dimension with $n$, and the parameter $q$ dictates the relative weighting of the features.
Roughly speaking, a smaller value of $q$ implies easier generalization as the relative weight on the true feature vs the aliases, $\lambda_1/\lambda_L$, is higher. When $q = 0$, then $\lambda_1 = d$ and  $\lambda_L = 0$, and when  $q = p$, the weight is isotropic across all features i.e  $\lambda_1 = \lambda_L$.
The bilevel ensemble is a member of a ubiquitous family of high-dimensional models, such as spiked covariance models~\cite{wang2017asymptotics} and weak (random) feature models~\cite{belkin2019two,mei2019generalization}.
See Appendix A,~\cite{muthukumar2020classification}, for a detailed discussion on the properties of the bilevel ensemble.

The value of $\coeffs_2$ is unique and has a well-known analytical expression when $\Sigmabold^{\frac{1}{2}} \boldsymbol{\phi}(\xtrain)$ is full-rank (as is typically the case).
Lemma~\ref{lem:coeffs}, a version of which initially appeared in~\cite{muthukumar2020harmless} shows that the ``bilevel" weighting on features together with regularly spaced training data admits a particularly simple and interpretable expression for the coefficients recovered by least-norm interpolation. For a fixed $p$, the classification and adversarial error is monotonically non-decreasing in $q$ : a higher value of $q$ will lead to less weight placed on the true feature and higher weight on the aliases, causing poorer function recovery and higher error.

\paragraph{Test evaluation:}
Let $\learnedfunc(x) = \langle \coeffs_2, \boldsymbol{\phi}(x) \rangle$ denote the learned function parameterized by the minimum-$\ell_2$-norm interpolator, denoted by $\coeffs_2$. Clearly, this function depends on $(n, \B, \Sigmabold)$.
Accordingly, sometimes we will specify these using a subscript, but leave the dependence implicit when it is clear from context.
For a learned function $f: \re \mapsto \{-1, +1\}$, define the classification risk as follows. (where $\mathbb{I}$ denotes the indicator function, and all expectations and probabilities are calculated with respect to $x \sim \text{Unif}[-1,1]$):
\begin{equation}
  \label{def:class_risk}
  \mathcal{C}(f) = \Prob \left[\sgn(f(x)) \neq \sgn(f^\ast(x))\right]
\end{equation}
Further, we define the adversarial classification risk as:
\begin{equation}
  \label{def:adv_risk}
  \mathcal{C}_{\mathsf{adv}}(f, \epsilon) = \E \left[\max_{\XVecTilde \in \PertSet(x, \epsilon)} \mathbb{I}\left[\sgn(f(\XVecTilde)) \neq \sgn(f^\ast(x))\right]\right]
\end{equation}
Since our data is $1$-dimensional, the choice of the perturbation set is unambiguous: we pick $ \PertSet(x, \epsilon) = \{\XVecTilde: |\XVecTilde - x| \leq \epsilon \}$.
Whenever unspecified, we use the standard Euclidean norm for vectors and the spectral-norm (maximum singular value) for matrices.

\section{Adversarial Classification vs Classification}
\label{sec:main_result}

The goal of this section is to show the existence of classifiers which achieve asymptotically zero classification error but an adversarial error of $1$: for such a classifier, a randomly chosen test-point will have a zero probability of being misclassified but there will always exist an allowable perturbation that can induce misclassification. 
In the context of the bilevel model that was defined in Definition~\ref{def:bilevel_covariance}, we pose the following question: for any fixed $p$, is there a value (or a range of values) of $q$ for which the resulting minimum-$\ell_2$-norm interpolator will display the generalizable-but-brittle behavior, even as $n \to \infty$?

Answering this question requires a precise characterization of both classification and adversarial risk. 
We adopt the following strategy to do this:
\begin{enumerate}
  \item Using ideas from signal processing, we obtain an exact
    expression of the weights $\coeffs$ learned by the minimum-$\ell_2$-norm
    interpolator in Definition~\eqref{def:minl2}.
  \item The ensuing function turns out to be periodic, and a single period of the function exactly corresponds to the Dirichlet kernel, which is commonly used in Fourier analysis \cite{muscalu2013classical}.
  We then calculate the envelope of this kernel and characterize the (approximate) locations of zero-crossings of the recovered function.
  \item We use the zero-crossings to characterize the length of each misclassification region in both the average and adversarial case, and hence the risk.
\end{enumerate}

\subsection{Key ingredients for analysis}

In general, sharply characterizing the adversarial risk in highly overparameterized models is challenging, and the existing analyses for regression and classification do not provide readily applicable tools for this purpose.
In this section, we show that techniques from Fourier analysis provide a clean characterization of both classification and adversarial risk.
The lemma below describes the structure of the learned coefficients $\coeffs$ arising from minimum-$\ell_2$-norm interpolation with Fourier features on regularly spaced training data.
This result first appeared in~\cite{muthukumar2020harmless} and is restated here for completeness.

\begin{lemma}[Learned coefficients]
  \label{lem:coeffs}
Consider the lifted Fourier featurization on $n$ regularly spaced training data points, constant labels, and weighting scheme $\{\lambda_j\}_{j=1}^\B$ defined as a function of $(n,\B)$ as in Definition~\ref{def:bilevel_covariance}.
Then, the weights learned by the minimum-$\ell_2$-norm interpolator \eqref{def:minl2} can be written in closed form as: \[
    \widehat{\alpha}_i = \begin{cases}
      \frac{\sqrt{2} \lambda_1}{\sqrt{2} \lambda_1 + 2 \lambda_L \left(\frac{\B -1}{2n} \right)}, & i = 1 \\
      \frac{2 \lambda_L}{\sqrt{2} \lambda_1 + 2 \lambda_L \left(\frac{\B - 1}{2n}\right)}, & \exists k: i = 2kn, \\
      0, & \text{otherwise}
    \end{cases}
    .\]
\end{lemma}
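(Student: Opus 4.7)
The plan is to compute the min-norm interpolator $\coeffs_2$ in closed form by exploiting the symmetry of the regularly-spaced training grid together with the constant labels. Starting from the standard expression $\coeffs_2 = \Sigmabold\, \boldsymbol{\phi}(\xtrain)^{\top} \mu$ with dual variable $\mu := \bigl(\boldsymbol{\phi}(\xtrain)\,\Sigmabold\, \boldsymbol{\phi}(\xtrain)^{\top}\bigr)^{-1}\vecones$, the key claim is that $\mu$ is a scalar multiple of $\vecones$. Once this scalar $c$ is identified, each coordinate reduces to $\widehat{\alpha}_j = c\,\lambda_j\, S_j$ with $S_j := \sum_i \phi_j(\xtraini)$, and the Lemma follows from evaluating $S_j$ for each $j$ and solving a single linear equation for $c$.

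To establish $\mu \propto \vecones$, I would show that the kernel matrix $K := \boldsymbol{\phi}(\xtrain)\Sigmabold \boldsymbol{\phi}(\xtrain)^{\top}$ has equal row sums. Writing $\sum_{i'} K_{i,i'} = \sum_j \lambda_j\, \phi_j(\xtraini)\, S_j$, the claim reduces to three elementary identities on the uniform grid $\xtraini = -1 + 2i/n$: every sine feature satisfies $S_j = 0$; every cosine feature $\cos(k\pi x)$ whose frequency $k$ is not an integer multiple of $n$ satisfies $S_j = 0$; and for the constant feature together with its aliases $\cos(kn\pi x)$, $k \geq 1$, both $S_j$ and the value $\phi_j(\xtraini)$ are nonzero constants independent of $i$. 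These are all immediate consequences of the discrete orthogonality identity $\sum_{i=0}^{n-1} e^{2\pi \mathrm{i}\,\ell\, i/n} = n\cdot\mathbf{1}\{n \mid \ell\}$. The last point is the crucial one: the only features contributing to the row sum are themselves constant on the training grid, so the row sum does not depend on $i$; hence $K\vecones = r\vecones$ and $\mu = r^{-1}\vecones$.

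With $\mu = c\vecones$ in hand, the formula $\widehat{\alpha}_j = c\,\lambda_j\, S_j$ automatically vanishes except at the indices corresponding to the constant feature and its aliases. Under the feature-map ordering used in the paper, these are precisely $j = 1$ and $j = 2kn$ for $k = 1, 2, \ldots, (B-1)/(2n)$, giving the support structure asserted in the Lemma. The scalar $c$ is then fixed by enforcing the interpolation constraint $\sum_j \phi_j(\xtraini) \widehat{\alpha}_j = 1$ at any single training point, which collapses to a single linear equation in $c$ after the zero coefficients are discarded. Solving this equation and substituting $\lambda_1 = \gamma B$, $\lambda_L = (1-\gamma)B/(B-1)$, produces the two non-trivial values in the Lemma.

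The main obstacle is the careful trigonometric bookkeeping: verifying the geometric-sum identity above, tracking the $(-1)^{k}$ factors that arise when $\cos(k\pi x)$ and $\sin(k\pi x)$ are evaluated at $\xtraini = -1 + 2i/n$ (whose interaction with the parity of $n$ determines the sign of each alias), correctly matching the feature-space index $2kn$ to the $k^{\text{th}}$ aliased cosine under the paper's particular ordering, and counting exactly $(B-1)/(2n)$ aliases inside the allowed frequency range so as to recover the normalization $2\lambda_L\cdot(B-1)/(2n)$ in the denominator of the Lemma. Conceptually, a cleaner reframing is to group features into equivalence classes under the sampling operation: only the class containing the constant feature contributes to interpolating $\vecones$, and within that class the weighted-least-norm objective splits the required aggregate coefficient across the constant and its aliases in proportion to $\lambda_j\, c_j$, where $c_j$ is the common value of $\phi_j$ on the training grid.
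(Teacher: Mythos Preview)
Your proposal is correct and arrives at the same result, but it follows a genuinely different route from the paper's proof. The paper works in the primal: it changes variables to $\boldzeta = \Sigmabold^{-1/2}\boldalpha$, observes that the non-alias features are orthogonal to $\vecones$ on the training grid (so placing weight on them only increases the norm without aiding interpolation), discards them, and is left with a single linear constraint on the reduced vector of alias-coefficients. That one-constraint minimum-norm problem is then solved by a direct application of Cauchy--Schwarz, giving $\widehat{\boldxi} = \widetilde{\boldlambda}^{1/2}/\|\widetilde{\boldlambda}^{1/2}\|^2$, which is mapped back to the original coordinates. Your approach instead goes through the dual/kernel formulation $\coeffs_2 = \Sigmabold\,\boldsymbol{\phi}(\xtrain)^\top K^{-1}\vecones$ and establishes that $\vecones$ is an eigenvector of $K$ by showing the row sums of $K$ are constant; this exploits exactly the same trigonometric identities (aliases are constant on the grid, non-aliases sum to zero) but packages them differently. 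The paper's route is arguably lighter---Cauchy--Schwarz on a reduced problem rather than an eigenvector argument for an $n\times n$ matrix---while yours is more systematic and generalizes more transparently to other label vectors $\ytrain$ that happen to be eigenvectors of $K$. Either way the bookkeeping you flag (sign factors $(-1)^{kn}$, index-to-frequency mapping, alias count $(\B-1)/(2n)$) is the same in both proofs.
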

From the definition of our feature-map, this lemma implies that the learned function adopts the form

\begin{equation}
  \label{eq:form_of_learned}
\widehat{f}(x) = \frac{a}{\sqrt{2}} + b \left[ \sum_{j=1}^{N_A} \cos(j n\pi x) \right]
\end{equation}
In the above, $a = \widehat{\alpha}_1$ and $b = \widehat{\alpha}_i$ for all $i = 2n$. 
This calculation is a result of the spiritual connection to matched filtering in signal processing.
While this \textit{exact} analysis  requires training data to be regularly spaced, Section~\ref{sec:simulations} provides evidence to show that the same phenomena arise when training data is drawn uniformly at random.
Denote using $N_A = (\B-1)/2n$ the number of aliases of the constant function in the first $\B$ real Fourier features.
In this section, we sharply evaluate the classification and adversarial risk of $\learnedfunc(\cdot)$ with respect to the constant function as a function of the parameters $a,b,\B,n$. While the intention is to provide analysis for the risk of the min-$\ell_2$-interpolator of Definition~\ref{def:minl2}, the results directly apply to any trigonometric function of the form above.

Since the learned function is periodic with period $2/n$, the probability of misclassification in the intervals $[-1, 1]$ or $[-1/n, 1/n]$ is the same. For simplicity of exposition, we restrict our attention to the latter.
We then observe that we can write $1 + 2\sum_{j=1}^{N_A} \cos(j n \pi x) = D_{N_A}(n \pi x) := \frac{\sin((N_A + 1/2) n \pi x)}{\sin(n \pi x/2)}$, where $D_{N_A}(\cdot)$ denotes the Dirichlet kernel\footnote{This kernel
  is visually extremely similar to the more well-known sinc kernel, as
  can be seen in Figure~\ref{fig:dirichletkernel}.}, which is commonly used in Fourier analysis to show the convergence of various Fourier series approximations to a function~\cite{muscalu2013classical}.
From this, we get 
\begin{equation}
\label{eq:form_of_learned_kernel} 
\widehat{f}(x) = \frac{2a - \sqrt{2}b}{2\sqrt{2}} + \frac{b}{2} \cdot \frac{\sin((N_A + 1/2) n \pi x)}{\sin(n \pi x/2)}.
\end{equation}

\begin{figure}[htpb]
    \centering
    \includegraphics[width=0.4\textwidth]{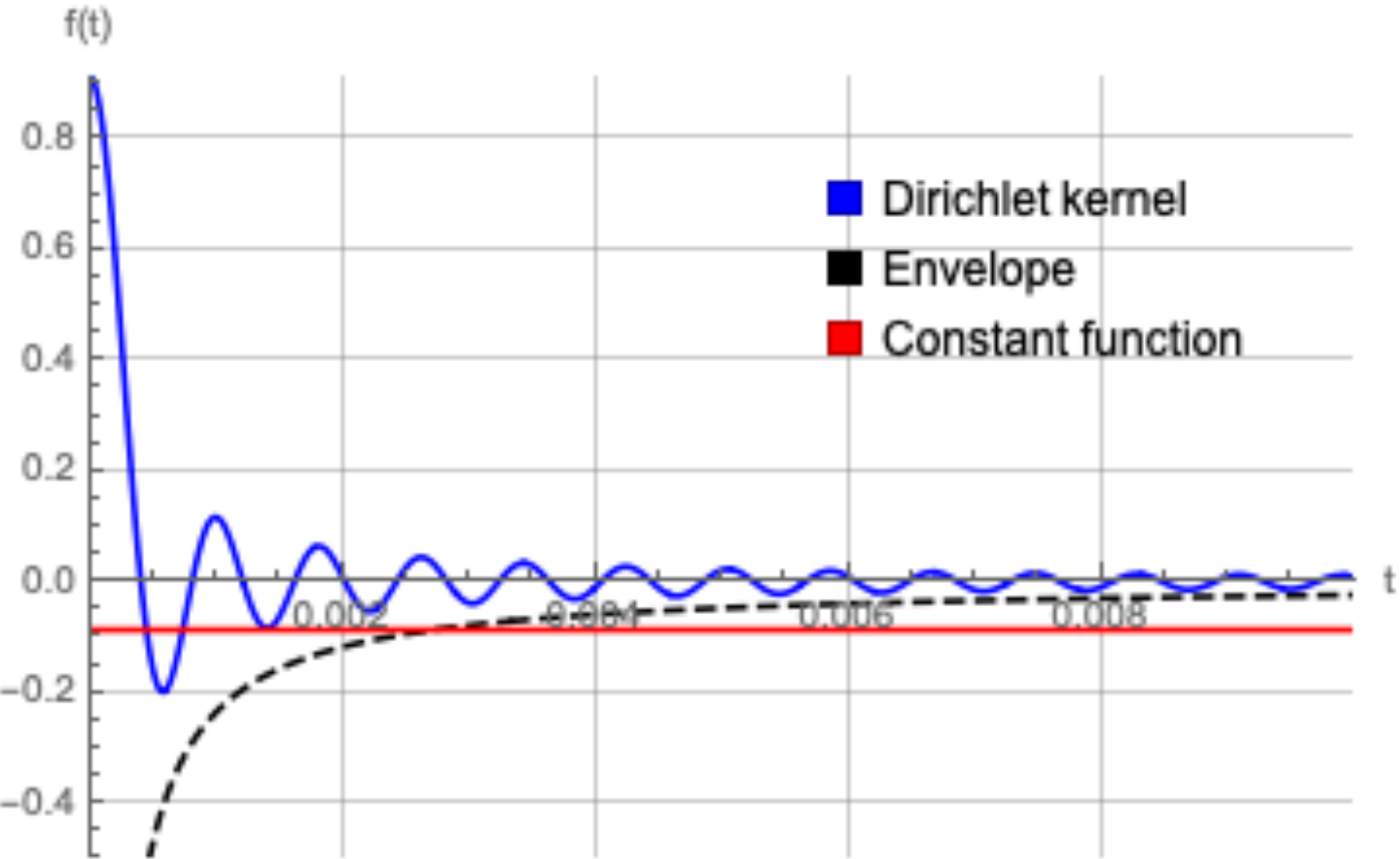}
  \caption{Depiction of the Dirichlet kernel component and its
    (negative) envelope as compared to the constant component of
    $\widehat{f}(x)$ for parameters $a = 0.13, b = 0.0055, \B = 4930, n =
    30$. Observe that $k^* = 2$ in this example.}
  \label{fig:dirichletkernel}
\end{figure}

A visualization of the Dirichlet kernel is shown in Figure~\ref{fig:dirichletkernel}.
Observe the presence of both positive and negative lobes of decaying amplitude.
It is well-known, e.g.~see~[Exercise 1.1]\cite{muscalu2013classical}, that the envelope of the Dirichlet kernel is given by $|D_N(\pi x)| \leq \frac{2}{\pi x} \text{ for all } N \geq 1$.
The envelope is depicted in Figure~\ref{fig:dirichletkernel}, and this characterization is critically used to sharply analyze both classification and adversarial risk.

\paragraph{From zero-crossings to risk.} Let $\mathcal{Z} = \{l_1, r_1 \ldots l_k, r_k\}$ be the roots of the learned function $\widehat{f}(\cdot)$ in sorted order; in the intervals $(l_i, r_i)$, the learned function's output is negative and in the intervals $(r_i, l_{i+1})$, its output is positive. Using this notation, the classification risk in the interval $(-1, 1)$ can be expressed as $\mathcal{C}(f) = \frac{1}{2} \sum_{i=1}^k (r_i - l_i)$.
By extending each of the misclassification intervals by $\epsilon$ on either side and accounting for overlaps between the intervals, define $\widetilde{r_i} = r_i + \epsilon$ and $\widetilde{l_i} = \max(l_i - \epsilon, \widetilde{r}_{i-1})$. It is easy to verify from the expressions that $\widetilde{r_i} \geq \widetilde{l_i} \geq \widetilde{r}_{i-1}$; this implies that there are no overlaps between the intervals $\{\widetilde{l_i}, \widetilde{r_i} \}_{i=1}^K$. Denote $\widetilde{\mathcal{Z}} = \{\widetilde{l_1}, \widetilde{r_1}, \ldots \widetilde{l_K}, \widetilde{r_K} \}$.
The adversarial classification risk can be similarly evaluated as $\mathcal{C}_{\mathsf{adv}}(f, \epsilon) = \frac{1}{2}\sum_{i=1}^K (\tilde{r_i} - \tilde{l_i})$.
It is easy to see that $\mathcal{C}_{\mathsf{adv}}(f,\epsilon)$ is increasing in $\epsilon$.

Our main results will encapsulate the following:
\begin{enumerate}
\item Lower bounding the adversarial risk for $\epsilon = 2/n$.
\item Upper bounding the classification risk (as it turns out, by the adversarial risk $\mathcal{C}_{\mathsf{adv}}(f,2\pi/\B)$).
\end{enumerate}


For both cases, it will be beneficial to consider each negative lobe
of the Dirichlet kernel seperately. We first prove an intermediate
result about the number of lobes that induce zero-crossings for the
learned function. Define the set of the indices of definitely benign
lobes as $S_{bl} = \{k: f(x) > 0 \; \; \forall x \in \left[\frac{2k-1}{h(\B,n)}, \frac{2k}{h(\B,n)} \right] \}$, where we denote $h(\B,n) := (\B - 1 + n)/2$ as shorthand.
Let the first lobe that does not induce a misclassification be denoted by $k^* = \min_{k \in S_{bl}} k$.
Obtaining a handle on this quantity in terms of the parameters of our setup will aid us in obtaining expressions for the risk.
For instance, for the choice of parameterization $(a,b,d,n)$ in Figure~\ref{fig:dirichletkernel}, we see that $k^{*} = 2$.

We show in the following lemma that the number of ``zero-crossing" lobes $k^*$ can be both upper and lower bounded.
Lemma~\ref{lem:k_star} is proved in Appendix~\ref{sec:kstarproof}.
\begin{lemma}[Expressions for $k^*$]
  \label{lem:k_star}

As defined above, we can characterize
\begin{align}\label{eq:kstarupperbound}
    k^{*} \leq \frac{2\sqrt{2} \cdot b \cdot (\B - 1 + n) + n(2a - \sqrt{2}b)}{2 \pi n (2a - \sqrt{2} b)} .
\end{align}
We also get $k^* \geq 1$ provided that
\begin{align}\label{eq:kstarlowerbound}
0.2172 \sqrt{2} b \cdot (\B - 1 + n) > n (2a - \sqrt{2}b) ,
\end{align}
\end{lemma}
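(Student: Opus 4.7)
The plan is to work with the closed form $\widehat{f}(x) = C + \tfrac{b}{2}D_{N_A}(n\pi x)$ with $C := \tfrac{2a - \sqrt{2}b}{2\sqrt{2}}$ from~\eqref{eq:form_of_learned_kernel}. Writing $h := h(\B,n) = (\B-1+n)/2$, the numerator $\sin(h\pi x)$ of $D_{N_A}(n\pi x)$ vanishes at $x = \ell/h$ and changes sign across each such zero, so the $k$-th negative lobe is exactly $I_k := [(2k-1)/h,\; 2k/h]$. Consequently $k \in S_{bl}$ iff $\tfrac{b}{2}|D_{N_A}(n\pi x)| < C$ for every $x \in I_k$, and the entire lemma is recast as a question of when the positive offset $C$ dominates the envelope of the Dirichlet term on a given lobe.

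\textbf{Upper bound~\eqref{eq:kstarupperbound}.} I would apply the envelope $|D_{N_A}(n\pi x)| \leq 2/(n\pi x)$ cited from~\cite{muscalu2013classical}. Because the envelope is monotonically decreasing in $x$, its maximum on $I_k$ occurs at the left endpoint, giving the uniform bound $|D_{N_A}(n\pi x)| \leq 2h / (n\pi(2k-1))$ throughout $I_k$. Thus a sufficient condition for $k \in S_{bl}$ is $\tfrac{bh}{n\pi(2k-1)} \leq C$, which rearranges into a linear inequality in $k$. Solving and substituting $2h = \B - 1 + n$ yields a threshold of the form $\tfrac{2\sqrt{2}\, b\,(\B-1+n)}{2\pi n(2a-\sqrt{2}b)} + \Theta(1)$; rounding up to the nearest integer and absorbing the additive term into the $\tfrac{1}{2\pi}$ reported in the statement produces the claimed upper bound on $k^*$.

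\textbf{Lower bound~\eqref{eq:kstarlowerbound}.} Here I would certify that the first negative lobe $I_1 = [1/h, 2/h]$ is \emph{not} benign by exhibiting one point where $\widehat f < 0$. The natural choice is the midpoint $x_0 = 3/(2h)$, where $\sin(h\pi x_0) = \sin(3\pi/2) = -1$ and hence $D_{N_A}(n\pi x_0) = -1/\sin(3n\pi/(4h))$. Applying $\sin(y) \leq y$ gives $|D_{N_A}(n\pi x_0)| \geq 4h/(3n\pi)$, and the requirement $\widehat f(x_0) < 0$ reduces to $n(2a - \sqrt{2}b) < \tfrac{2}{3\pi}\,\sqrt{2}\,b\,(\B - 1 + n)$, which matches the stated inequality up to the numerical constant $\tfrac{2}{3\pi} \approx 0.2122$. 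Recovering the slightly stronger $0.2172$ requires sharpening the $\sin$-estimate: the argument $3n\pi/(4h)$ is small in the overparameterized regime $\B \gg n$, so adding the next Taylor correction (or equivalently invoking $\sin(y) \leq y - y^3/8$ for $y \in (0,\pi/2]$) tightens the envelope at $x_0$ enough to yield the improved constant.

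\textbf{Main obstacle.} Given Lemma~\ref{lem:coeffs} and the Dirichlet-kernel envelope, the algebra for both directions is routine; the only delicate choice is the test point for the lower bound, and once $x_0 = 3/(2h)$ is fixed the only non-structural step is extracting the explicit numerical constant, which amounts to a sharper one-variable estimate on $\sin$ rather than any new conceptual argument.
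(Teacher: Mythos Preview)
Your upper-bound argument is essentially the paper's: both threshold the monotone envelope $2/(n\pi x)$ at the left endpoint of the lobe and solve the resulting linear inequality in $k$.

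For the lower bound your strategy (exhibit one point in $I_1$ where $\widehat f<0$) is the same as the paper's, but the paper does \emph{not} use the midpoint $x_0=3/(2h)$; it invokes the known global minimum of the Dirichlet kernel, $\min_x D_{N_A}(n\pi x)\le -C'(N_A+1/2)$ with $C'\approx 0.4344$, which immediately yields the constant $C'/2=0.2172$. Your midpoint choice gives $|D_{N_A}(n\pi x_0)|=1/\sin(3n\pi/(4h))$, whose leading term is $4h/(3n\pi)$ and hence the constant $2/(3\pi)\approx 0.2122$. The proposed fix---sharpening the denominator via $\sin y\le y-y^3/8$---does \emph{not} close this gap: the correction it produces is only $O((n/h)^2)$ and vanishes in the overparameterized regime $\B\gg n$, so asymptotically you are stuck at $0.2122$. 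The discrepancy is structural, not a matter of a sharper one-variable $\sin$ bound: in the large-$N_A$ limit $D_{N_A}(n\pi x)\approx (2h/n)\,\mathrm{sinc}(h\pi x)$, and the minimum of $\mathrm{sinc}$ occurs at $u\approx 4.493$ (the first positive root of $\tan u=u$), not at your $u=3\pi/2\approx 4.712$. Evaluating at the midpoint therefore cannot recover $0.2172$; to match the stated constant you must either move the test point to the sinc argmin or, as the paper does, cite the Dirichlet-kernel minimum directly.
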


In what follows, we will use the quantity $k^*$ as a crucial relative indicator of both classification and adversarial risk.

\subsection{Expressions for classification and adversarial risk}

With the above ingredients, we can now characterize the classification and adversarial risk.
Our first result characterizes the classification and adversarial risk for all learned functions of the form $\widehat{f}(x)$.

\begin{theorem}\label{thm:riskabdn}
  For the min-$\ell_2$-interpolator (Definition~\ref{def:minl2}) on Fourier features $\widehat{f}(x)$, the classification $0-1$ error evaluated with respect to the ground-truth constant function is upper bounded by
\begin{align}\label{eq:risk_abdn}
  \mathcal{C}(\widehat{f}) &\leq \frac{\sqrt{2} b \cdot h(\B,n)}{\pi h(\B,n) (2a - \sqrt{2}b)} + \frac{n}{2h(\B,n)} .
\end{align}
Moreover, we have the following characterizations of the adversarial risk:
\begin{subequations}
\begin{align}
  \mathcal{C}_{\mathsf{adv}}(\learnedfunc,2\pi/h(\B,n)) &\leq \frac{\sqrt{2} b \cdot h(\B,n)}{\pi h(\B,n) (2a - \sqrt{2}b)} + \frac{n}{2h(\B,n)}\label{eq:advsmall_abdn} \\
\mathcal{C}_{\mathsf{adv}}(\learnedfunc,1/n) &= \begin{cases}
1 \text{ if Equation~\eqref{eq:kstarlowerbound} holds, }\\
0 \text{ otherwise. }
\end{cases}\label{eq:advmed_abdn}
\end{align}
\end{subequations}
\end{theorem}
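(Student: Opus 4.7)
My plan is to work directly with the closed-form representation \eqref{eq:form_of_learned_kernel}, in which $\widehat{f}(x)$ decomposes into a constant offset $(2a - \sqrt{2}b)/(2\sqrt{2})$ plus a scaled Dirichlet kernel $(b/2)\, D_{N_A}(n\pi x)$, and to exploit that the cosine sum in \eqref{eq:form_of_learned} is periodic with period $2/n$ and even in $x$. This reduces both $\mathcal{C}(\widehat{f})$ and $\mathcal{C}_{\mathsf{adv}}(\widehat{f}, \epsilon)$ to $n$ times the Lebesgue measure of the corresponding sign-flipping (respectively $\epsilon$-inflated) set inside the half-period $[0, 1/n]$. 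The zeros of $D_{N_A}(n\pi x)$ lie at $x = j/h(\B,n)$ for integer $j$, partitioning $[0, 1/n]$ into sign-alternating lobes of width $1/h(\B,n)$, and the Dirichlet envelope $|D_{N_A}(n\pi x)| \le 2/(n\pi x)$ forces successive negative lobes to shrink.

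For the classification bound \eqref{eq:risk_abdn}, I would argue that $\widehat{f}$ can only change sign inside a negative lobe whose peak amplitude overcomes the offset, so by the definition of $k^*$ all contributing lobes have index $k < k^*$. Each such lobe contributes at most $1/h(\B,n)$ to the misclassified length per half-period, yielding $\mathcal{C}(\widehat{f}) \le n\,k^*/h(\B,n)$; substituting the explicit upper bound on $k^*$ from \eqref{eq:kstarupperbound} and simplifying produces the stated expression. For the small-$\epsilon$ adversarial bound \eqref{eq:advsmall_abdn} at $\epsilon = 2\pi/h(\B,n)$, I would inflate each misclassified interval $(l_i, r_i)$ into $(l_i - \epsilon, r_i + \epsilon)$ using the no-overlap convention $\widetilde{l_i} = \max(l_i - \epsilon, \widetilde{r}_{i-1})$ defined in the text, and argue that the inflated set still lies within the union of lobes $k \le k^*$ because lobes with $k > k^*$ are dominated by the offset with a Dirichlet-envelope margin that absorbs the $\epsilon$ perturbation. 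Redoing the lobe count with this slightly enlarged per-lobe contribution gives the same right-hand side as the classification bound.

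For the dichotomy \eqref{eq:advmed_abdn}, the critical observation is that $\epsilon = 1/n$ is exactly half of the period $2/n$. If \eqref{eq:kstarlowerbound} holds, Lemma~\ref{lem:k_star} implies that the first negative lobe strictly crosses zero, so every period contains at least one misclassified point $x_0$; expanding $x_0$ by $1/n$ on each side covers an interval of length $2/n$, that is, a full period, so every $x \in [-1,1]$ admits an allowed perturbation that flips the predicted sign and $\mathcal{C}_{\mathsf{adv}}(\widehat{f}, 1/n) = 1$. If \eqref{eq:kstarlowerbound} fails, I would use a complementary envelope estimate (of the same flavour as in the proof of Lemma~\ref{lem:k_star}) to show that the negative excursions of $(b/2)\, D_{N_A}(n\pi x)$ never exceed the offset across any lobe, so $\widehat{f} > 0$ uniformly on $[-1,1]$ and no adversarial example exists, giving $\mathcal{C}_{\mathsf{adv}}(\widehat{f}, 1/n) = 0$.

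The delicate part is the small-$\epsilon$ bound \eqref{eq:advsmall_abdn}: because $\epsilon = 2\pi/h(\B,n)$ is about $2\pi$ times the lobe width $1/h(\B,n)$, a naive per-lobe inflation overlaps several neighbouring lobes, so the merging rule $\widetilde{l_i} = \max(l_i - \epsilon, \widetilde{r}_{i-1})$ and the residual slack from the envelope analysis within each bad lobe must be combined carefully to avoid picking up an extra multiplicative $\epsilon\,h(\B,n)$ factor and to match the classification bound term-for-term. The other two parts then reduce to straightforward consequences of Lemma~\ref{lem:k_star} and the periodicity of $\widehat{f}$ once the Dirichlet kernel picture is set up.
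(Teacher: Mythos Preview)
Your outline mirrors the paper's proof closely: reduce by periodicity to a half-period, use the Dirichlet-lobe structure, bound the number of sign-changing lobes by $k^*$ via Lemma~\ref{lem:k_star}, and for $\epsilon=1/n$ use that a half-period perturbation always reaches the global minimum. The one place your description is off is the mechanism behind the small-$\epsilon$ bound~\eqref{eq:advsmall_abdn}. Your first-pass claim that ``the inflated set still lies within the union of lobes $k\le k^*$ because the envelope margin absorbs the perturbation'' is not a valid argument: a point $x$ sitting in a \emph{good} lobe $k>k^*$ can still be adversarially vulnerable if $x-\epsilon$ lands inside a misclassified sub-interval of lobe $k^*$ or earlier, and the positivity margin of $\widehat f(x)$ itself is irrelevant to that. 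You sense this in your final paragraph but do not resolve it.

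The paper's resolution is the merging statement isolated as Lemma~\ref{lem:overlaps}: with the specific choice $\epsilon=2\pi/h(\B,n)$, consecutive inflated intervals telescope, $\widetilde l_k=\widetilde r_{k-1}$ for all $k\le k^*$ and $\widetilde l_1=0$, so the adversarial-bad set on the half-period collapses to the single connected interval $[0,\widetilde r_{k^*}]$. Its length is then at most $r_{k^*}+\epsilon$, and substituting the upper bound~\eqref{eq:kstarupperbound} on $k^*$ gives~\eqref{eq:advsmall_abdn} directly, with~\eqref{eq:risk_abdn} following from $\mathcal C(\widehat f)\le \mathcal C_{\mathsf{adv}}(\widehat f,2\pi/h(\B,n))$. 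So there is no per-lobe margin bookkeeping to do at all; the telescoping is the whole point.
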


Theorem~\ref{thm:riskabdn} is a direct consequence of the characterizations of $k^*$ in Lemma~\ref{lem:k_star}, and is proved in Appendix~\ref{sec:riskabdnproof}.
Note that the expressions for classification risk, and adversarial risk to perturbations of $\mathcal{O}(1/\B)$ are \textit{upper bounds}; however, we will see that the implications for the bilevel ensemble are sharp in the asymptotic sense.
The expression for adversarial risk to perturbations of $\mathcal{O}(1/n)$ is actually exact.


\begin{table*}[t]
\caption{Asymptotic generalization errors for various values of $q$ (which
  controls how strongly min-norm learning favors the true feature by scaling it
  as $\sqrt{n^{p-q}}$ during the lifting) across regression, classification and
  adversarial robustness tasks. ``IF'' denotes
  independent-sub-Gaussian feature models studied earlier and ``LFF''
  denotes the lifted Fourier features of this paper.}
\label{tab:results}
\vskip 0.15in
\begin{center}
\begin{small}
\begin{sc}
\begin{tabular}{lcccc}
\toprule
Regime & Regression & \multicolumn{2}{c}{Classification}
  & Adversarial \\
 & LFF and IF & LFF &IF
  &(LFF, $\epsilon=1/n$) \\
\midrule
$q < 1$ & 0 & 0 & 0  & 0 \\
$1 < q < 1 + \frac{p-1}{2}$ & {\color{red}1}   & 0  & 0   & {\color{red}1}    \\
$1 + \frac{p-1}{2} < q < p$ & {\color{red}1}   & 0  & {\color{red}$\frac{1}{2}$}   & {\color{red}1} \\
$q > p$ & {\color{red}1}   & {\color{red}{$\frac{1}{2}$}}  & {\color{red}$\frac{1}{2}$}   & {\color{red}1}
\end{tabular}
\end{sc}
\end{small}
\end{center}
\vskip -0.1in
\end{table*}


We now state the direct consequences of Theorem~\ref{thm:riskabdn} for the classification and adversarial risk for the minimum-$\ell_2$-norm interpolator in the bilevel ensemble.
\begin{corollary}\label{cor:riskbilevel}
The minimum-$\ell_2$-norm interpolator of constant-labeled, regularly spaced data using Fourier features with the bilevel ensemble (parameterized by $p$ and $q$) incurs classification risk
\begin{align}\label{eq:risk_bilevel}
\mathcal{C}(\learnedfunc) \leq  \min\{ 1/2, C \cdot n^{q - p}\},
\end{align}
and adversarial risk
\begin{subequations}
\begin{align}
\mathcal{C}_{\mathsf{adv}}(\learnedfunc, 2\pi/h(\B,n)) &\leq \min\{1, C' \cdot n^{q - p} \} \label{eq:advsmall_bilevel}\\
\mathcal{C}_{\mathsf{adv}}(\learnedfunc,2/n) &= \begin{cases}
1 \text{ if } q > 1 \text{ and } n \geq n_0(q) ,\\
0 \text{ if } q < 1 .
\end{cases} \label{eq:advmed_bilevel}
\end{align}
\end{subequations}
where $C,C'$ are universal positive constants that do not depend on $n$, and $n_0(q)$ is a strictly decreasing function in $q$.
The function $n_0(q)$ is explicitly specified in the appendix.
\end{corollary}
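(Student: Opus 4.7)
The plan is to specialize each bound in Theorem~\ref{thm:riskabdn} by substituting the bilevel weights of Definition~\ref{def:bilevel_covariance} into the closed-form coefficients of Lemma~\ref{lem:coeffs}, and verify that each inequality collapses to the claimed rate. Write $a = \sqrt{2}\lambda_1/D$ and $b = 2\lambda_L/D$ with $D := \sqrt{2}\lambda_1 + \lambda_L(\B-1)/n$ (reading this off Lemma~\ref{lem:coeffs}), so that the ratio appearing in \eqref{eq:risk_abdn} and \eqref{eq:advsmall_abdn} telescopes to
\[
\frac{\sqrt{2}\,b}{2a - \sqrt{2}\,b} \;=\; \frac{\lambda_L}{\lambda_1 - \lambda_L}.
\]
With $\lambda_1 = n^{p-q}$ and $\lambda_L = (n^p - n^{p-q})/(n^p - 1) \to 1$, this ratio is $\Theta(n^{q-p})$ when $q < p$. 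Substituting into \eqref{eq:risk_abdn} gives $\mathcal{C}(\learnedfunc) \leq \frac{1}{\pi}\cdot\frac{\lambda_L}{\lambda_1 - \lambda_L} + \frac{n}{\B - 1 + n}$; for $q < p$ both terms are $o(1)$ and absorb into $C\cdot n^{q-p}$, while for $q \geq p$ one falls back on the trivial bound $1/2$. This yields \eqref{eq:risk_bilevel}, and the same substitution in \eqref{eq:advsmall_abdn} yields \eqref{eq:advsmall_bilevel}.

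\textbf{Adversarial risk at $\epsilon = 2/n$.} Since $\mathcal{C}_{\mathsf{adv}}(\learnedfunc,\cdot)$ is monotone in $\epsilon$ and $2/n > 1/n$, \eqref{eq:advmed_abdn} reduces the question to whether the condition \eqref{eq:kstarlowerbound} holds. Substituting $a,b$ and cancelling $D$, \eqref{eq:kstarlowerbound} becomes
\[
0.2172\,\lambda_L\,(\B - 1 + n) \;>\; n\,(\lambda_1 - \lambda_L).
\]
At leading order this reads $0.2172\, n^p > n\cdot n^{p-q}$, i.e.\ $n^{q-1} > 1/0.2172$. For $q > 1$ this holds for all $n \geq n_0(q) := (1/0.2172)^{1/(q-1)}$ (folding the lower-order $o(1)$ corrections into $n_0$), and $n_0(q)$ is strictly decreasing in $q$; this gives the first case of \eqref{eq:advmed_bilevel}. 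For $q < 1$, the inequality $n^{q-1} > 1/0.2172$ fails for every $n \geq 1$, so Lemma~\ref{lem:k_star} guarantees that the first lobe of the Dirichlet kernel does not induce a zero-crossing of $\learnedfunc$. Since the envelope $2/(\pi x)$ decays monotonically in $x$, every subsequent lobe has strictly smaller amplitude, so it cannot dip below zero either; hence $\learnedfunc > 0$ on all of $[-1,1]$ and $\mathcal{C}_{\mathsf{adv}}(\learnedfunc, 2/n) = 0$.

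\textbf{Main obstacle.} The only non-routine parts are (i) cleanly extracting the closed-form, monotone-in-$q$ expression for $n_0(q)$ after absorbing the $o(1)$ corrections from $\lambda_L \neq 1$ and from the $-1$ in $\B - 1 + n$; and (ii) promoting the first-lobe conclusion of Lemma~\ref{lem:k_star} to \emph{every} lobe via the decay of the Dirichlet envelope, which is what makes the $q < 1$ adversarial risk exactly $0$ rather than just a vanishing upper bound. The remaining bounds are obtained by direct substitution into Theorem~\ref{thm:riskabdn}.
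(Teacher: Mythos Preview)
Your approach is the same as the paper's: substitute the bilevel scalings into Theorem~\ref{thm:riskabdn} and simplify. Your telescoping identity $\frac{\sqrt{2}\,b}{2a-\sqrt{2}\,b}=\frac{\lambda_L}{\lambda_1-\lambda_L}$ is in fact cleaner than what the paper does (it computes $a\asymp n^{-(q-1)}$ and $b$ separately and then forms the ratio), and your $n_0(q)\approx(4.6)^{1/(q-1)}$ differs from the paper's $\big((2+\sqrt{2})/(0.2172\cdot 2\sqrt{2})\big)^{1/(q-1)}\approx(5.55)^{1/(q-1)}$ only by a harmless constant in the base coming from slightly different slack in the lower-order terms.

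There is one genuine gap in your $q<1$ argument. You write that failure of the leading-order inequality means ``Lemma~\ref{lem:k_star} guarantees that the first lobe does not induce a zero-crossing.'' But Lemma~\ref{lem:k_star} only asserts that \eqref{eq:kstarlowerbound} is \emph{sufficient} for $k^*\geq 1$; its failure does not give $k^*=0$. Your subsequent envelope step (monotone decay $\Rightarrow$ later lobes are no worse) is fine, but its premise---that the first lobe already stays above zero---is not established by the failure of \eqref{eq:kstarlowerbound}. The paper sidesteps this entirely: for $q<1$ it just checks that \eqref{eq:kstarlowerbound} fails (for large enough $n$) and invokes the ``$0$ otherwise'' branch of \eqref{eq:advmed_abdn} as a black box. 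Also note that the monotonicity of $\mathcal{C}_{\mathsf{adv}}(\learnedfunc,\cdot)$ in $\epsilon$ only carries the ``$=1$'' conclusion from $\epsilon=1/n$ up to $\epsilon=2/n$; for the ``$=0$'' conclusion at $\epsilon=2/n$ you need the stronger fact (implicit in the proof of Theorem~\ref{thm:riskabdn}) that in the ``otherwise'' case $\learnedfunc\geq 0$ on the entire domain, so the adversarial risk vanishes for \emph{every} $\epsilon$. Once you route through \eqref{eq:advmed_abdn} directly, your item~(ii) in ``Main obstacle'' disappears and the $q<1$ case becomes a one-line substitution, as in the paper.
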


Corollary~\ref{cor:riskbilevel} uncovers the existence of separating regimes between regression, classification, and adversarial robustness in the bilevel ensemble.
See Table~\ref{tab:results} for a comparison between the three tasks, and a comparison to equivalent evaluations under independent-feature models.
We elaborate on two special cases below:
\begin{enumerate}
\item The case $q \in (0,1)$, for which we have $\lim_{n \to \infty} \mathcal{C}(\learnedfunc), \mathcal{C}_{\mathsf{adv}}(\learnedfunc, 2\pi/h(\B,n)), \mathcal{C}_{\mathsf{adv}}(\learnedfunc,2/n) = 0$.
This corresponds to the classical regime studied for kernel ridge regression in which signal is preserved.
Clearly, the preservation of signal precludes our mechanism for the creation of adversarial examples\footnote{This would yield a corresponding illustration to Figure~\ref{fig:illustrationresult} in which the constant-signal component is close to $1$ instead of $0$!}.
\item The case $q \in (1,p)$, for which we have $\lim_{n \to \infty} \mathcal{C}(\learnedfunc), \mathcal{C}_{\mathsf{adv}}(\learnedfunc, 2\pi/h(\B,n)) = 0$ but $\lim_{n \to \infty} \mathcal{C}_{\mathsf{adv}}(\learnedfunc,2/n) = 1$.
Corollary~\ref{cor:riskbilevel} shows that in this regime, classification generalizes well despite the failure of regression.
However, the minimum-$\ell_2$-norm interpolator is highly sensitive to perturbations on the order of $1/n$; thus, \emph{there is a stark separation in classification and adversarial risk in this regime, even though the magnitude of the perturbations also goes to $0$ as $n \to \infty$.}
\end{enumerate}

Remarkably, the second regime includes both the cases $1 < q < (p+1)/2$ and
$(p+1)/2 < q < p$: the former corresponds to good generalization of
binary classification tasks in ensembles of sub-Gaussian independent
features, but the latter does
not~\cite{muthukumar2020classification}. Hence, classification on the Fourier-features of this paper is less sensitive to
having a strong weight on the true feature than that on independent-features. The reason is that, in the present case, the ``contamination"
of the spurious features is concentrated around the training points due to Gibbs phenomenon and is low everywhere else;
hence, even though the overall level of
``contamination" may be high, the contribution of the spurious features to prediction on a randomly drawn test point is low. This demonstrates
a qualitative difference between linear-model-on-features and linear-model-on-data, and motivates the study of feature-maps $\phi$ with
different properties. 

Corollary~\ref{cor:riskbilevel} highlights a phase transition at $q = 1$: adversarial examples will
not occur for $q < 1$, and adversarial examples will occur for $q > 1$.
Second, for a given $q > 1$, there is a phase transition at $n_0(q)$: adversarial examples will always occur for a large enough number of training samples.
We will now empirically demonstrate these phase transitions for models with both regularly spaced and uniform-at-random data.

\section{Random-Fourier-Sum (RFS) Model}
\label{sec:rfs}

The Fourier features that we have considered thus far provide a precise proof-of-concept of the idea of ``generalizable but brittle" overparameterized models.
In this section, we connect the stylized Fourier featurization to examples of ``weak-feature" families proposed in the literature that admit an explicit benefit in overparameterization.
We do this by introducing a family of \emph{random-Fourier-sum} (RFS) models that exhibit the well-known \emph{double descent} behavior: in particular, the classification test error decays with the number of random features introduced in the model.
We show that the behavior of both the classification and adversarial test error in the RFS model approaches that of the original stylized Fourier model in a scaling limit where the number of RFS features grows at a sufficiently fast rate relative to the number of training examples.
In Section~\ref{sec:simulations} , we explore an alternate slower growth rate of the number of RFS features for which the behavior is instead analogous to the more commonly studied \emph{independent-feature} models, i.e. where each component of the feature vector is independent. For example, most theoretical analyses of double descent assume independent-feature models, with the notable exception~\cite{mei2019generalization}.

We formally define the RFS model below.
Each RFS feature is expressed as a linear combination of $B$ Fourier features, where the weights in the linear combination are sampled from a Gaussian distribution\footnote{The analysis that we provide can also be readily extended to independent sub-Gaussian weights.}. 
The Fourier features and labels are given by $y = 1$, just as in Section~\ref{sec:setup}, and $B$ is a bandwidth parameter that controls how many Fourier features are represented in each RFS feature.
Finally, $\gamma$ parameterizes the strength of the true Fourier feature in each RFS feature, similarly to the bilevel weighting in Definition~\ref{def:bilevel_covariance}.
This featurization can be visualized as a neural network with two linear layers that uses the $B$ Fourier features as inputs to the second layer; see Figure~\ref{fig:rfs_nn} for this visualization.
The first layer is trained ``lazily'' (using the nomenclature of~\cite{chizat2019lazy}) by sampling from the Gaussian distribution and the parameters in the second layer are chosen by the minimum-$\ell_2$-norm interpolator from Definition~\ref{def:minl2}.
Of course, this ``lazy-training'' outcome is very different from the actual training outcome of a neural network.
However, it preserves one of the central difficulties in analyzing adversarial robustness, i.e. the non-linear effect of an adversarial perturbation on input data.

\begin{definition}[Random Fourier Sum Features $(n, p, q, T)$]
  \label{def:rfs}
  Let $\Sigmabold(n,p,q)$ (henceforth abbreviated to $\Sigmabold$) be a diagonal matrix as specified above in Definition~\ref{def:bilevel_covariance}, and set the number of features $d = n^T$. Then each entry of $\phirfs(x) \in \R^d$ is defined as $\forall i \in [d], \phirfs_i (x) = \langle \boldsymbol{z_i}, \phifour(x) \rangle$. 
  Here, $\boldsymbol{z_i} \text{ i.i.d.} \sim \mathcal{N}(\boldsymbol{0}, \Sigmabold)$ for all $i \in [d]$.
\end{definition}
Similar to Definition~\ref{def:minl2}, we define the minimum-$\ell_2$-norm interpolator on RFS features as
\begin{align}
 \label{eq:beta_2}
 \widehat{\betabold}_2 & := \argmin_{\betabold \in \re^d} \|\betabold\|_2 \\
	  & s.t \; \; \phirfs(\xtrain) \betabold = \ytrain \nonumber .
\end{align}
We define the learned function induced by these coefficients as $\widehat{g}(x) = \langle \widehat{\betabold}_2, \phirfs(x) \rangle$.

Figure~\ref{fig:rfs_dd} plots the test classification and regression loss of $\widehat{g}$ as a function of the number of RFS features, and clearly shows the double-descent behavior~\cite{belkin2019reconciling,geiger2019jamming}.
To better understand why this occurs, let us now connect at a high level the RFS model to other random-feature models that were recently shown to display the same double-descent behavior.
In particular, we consider the mathematical smodels provided by~\cite{belkin2019two,mei2019generalization,rahimi2008random}.
While there are important semantic differences in the modeling assumptions in these papers, in all cases the features are random and individually too ``weak" to ensure accurate prediction by themselves.
Hence, generalization error reduces when the number of features are increased due to both an
approximation-theoretic benefit and a reduction in ``variance'' from fitting noise.
This effect is reminiscent of the benefits of overparameterization in ensemble models like random forests and boosting, which were recently discussed in an insightful unified manner~\cite{wyner2017explaining}.
The RFS features serve as a archetypal example of such ``weak-features" since each RFS feature contains some component of the true constant feature, but is diluted by contributions from the spurious higher-frequency Fourier features as well.

Our RFS features are qualitatively similar to the random-Fourier-features (RFF) of~\cite{rahimi2008random}, that are shown in \cite{belkin2019two} to exhibit double-descent. However, in the RFF, the input-data is high-dimensional and the frequencies of the Fourier-features are random. In our case, the input is low-dimensional in order to constrain the adversary's power, and each RFS feature is obtained as a random linear-combination of Fourier features with fixed frequencies. 
The RFS features are also distinct from the features considered by \cite{mei2019generalization} in a similar manner.

\tikzset{%
every neuron/.style={
circle,
draw,
minimum size=1cm
},
neuron missing/.style={
draw=none,
scale=3,
text height=0.333cm,
execute at begin node=\color{black}$\vdots$
},
}

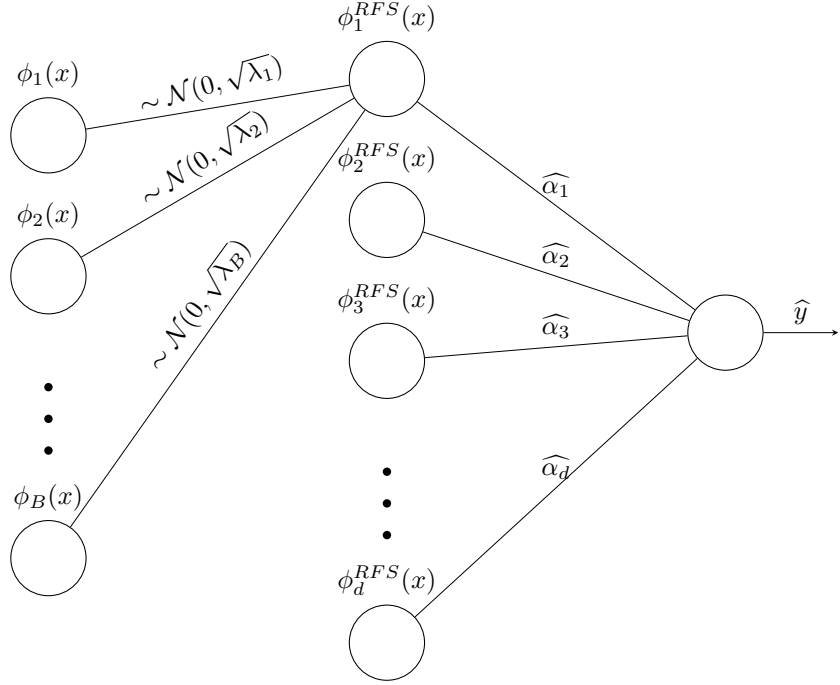
\begin{figure}
  \begin{center}
    \begin{tikzpicture}[x=1.5cm, y=1.5cm, >=stealth]

    \foreach \m/\l [count=\y] in {1,2,missing,3}
      \node [every neuron/.try, neuron \m/.try] (input-\m) at (4,2-\y*1.25) {};
    \foreach \m [count=\y] in {1,2,3,missing,4}
    \node [every neuron/.try, neuron \m/.try ] (hidden-\m) at (7, 2.5 - \y*1.25){};

    \foreach \m [count=\y] in {1}
      \node [every neuron/.try, neuron \m/.try ] (output-\m) at (10,-1) {};

    \foreach \l [count=\i] in {1,2,B}
      \node [above] at (input-\i.north) {$\phi_\l (x)$};

    \foreach \l [count=\i] in {1,2,3,d}
    \node [above] at (hidden-\i.north) {$\phi^{RFS}_\l (x)$};

    \foreach \l [count=\i] in {1}
      \draw [ultra thin, ->] (output-\i) -- ++(1,0)
	node [above, midway] {$\widehat{y}$};

    \foreach \i in {1,...,2}
      \foreach \j in {1}
      \draw [ultra thin] (input-\i) edge node[above, rotate = \i*(15+1*\i)] {$\sim \mathcal{N}(0, \sqrt{\lambda_{\i}})$} (hidden-\j);

  \draw [ultra thin] (input-3) edge node[above, rotate = 3*(15+1*3)] {$\sim \mathcal{N}(0, \sqrt{\lambda_{B}})$} (hidden-1);

    \foreach \i in {1,...,3}
      \foreach \j in {1}
      \draw [ultra thin] (hidden-\i) edge node[above] {$\widehat{\alpha_{\i}}$}(output-\j);

      \draw [ultra thin] (hidden-4) edge node[above] {$\widehat{\alpha_{d}}$}(output-1);
    \end{tikzpicture}
  \end{center}
\caption{RFS features visualized as a ``lazily"-trained neural network
  with two linear layers after a nonlinear layer that generated
  Fourier features from the input data, denoted by $x$.}
\label{fig:rfs_nn}
\end{figure}

\begin{figure}[htpb]
  \begin{subfigure}[]{0.33\textwidth}
  \begin{center}
    \includegraphics[width=\textwidth]{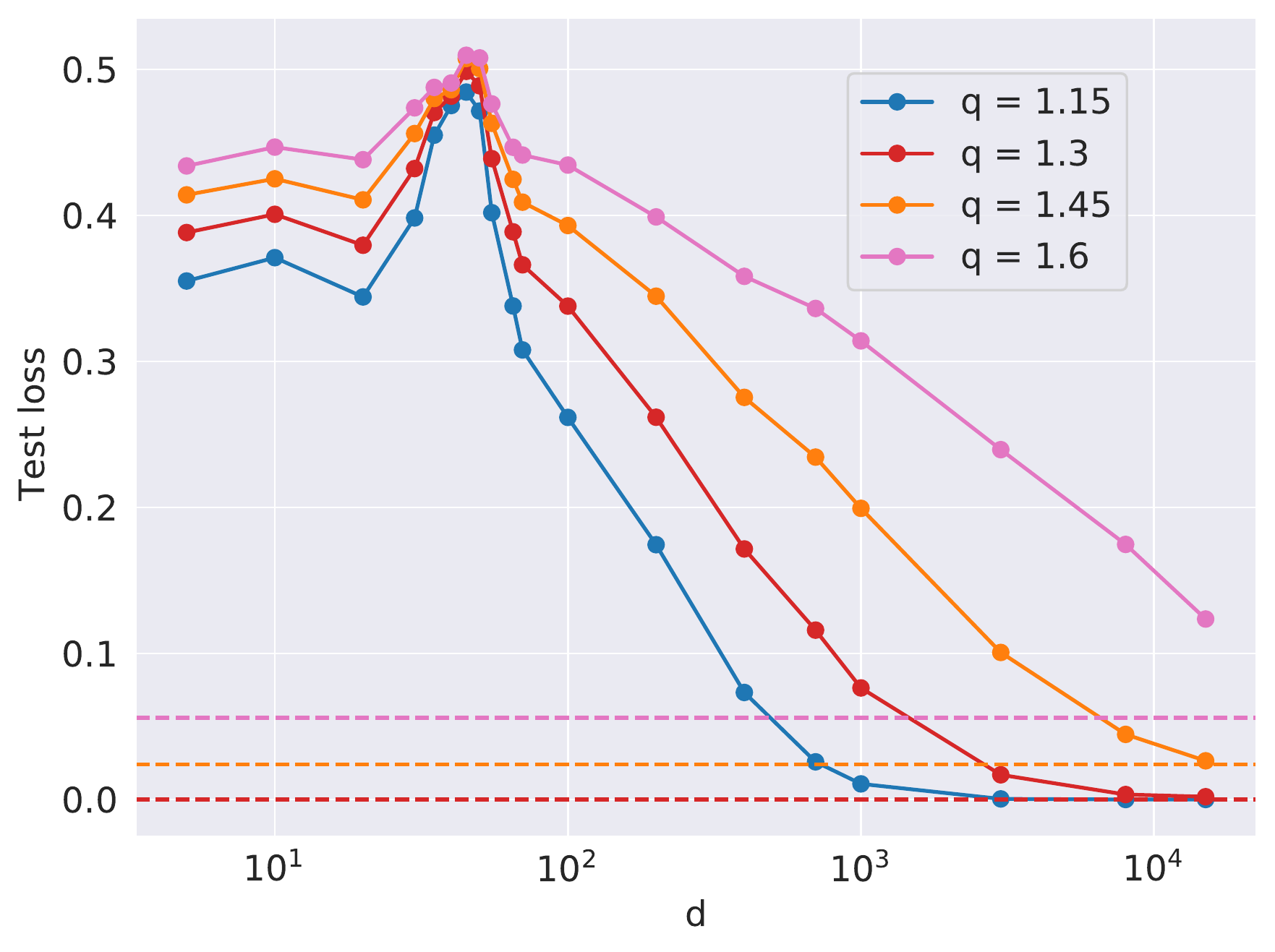}
  \end{center}
  \caption{Test classification loss}
  \label{subfig:rfs_dd_class}
  \end{subfigure}
  \begin{subfigure}[]{0.33\textwidth}
  \begin{center}
    \includegraphics[width=\textwidth]{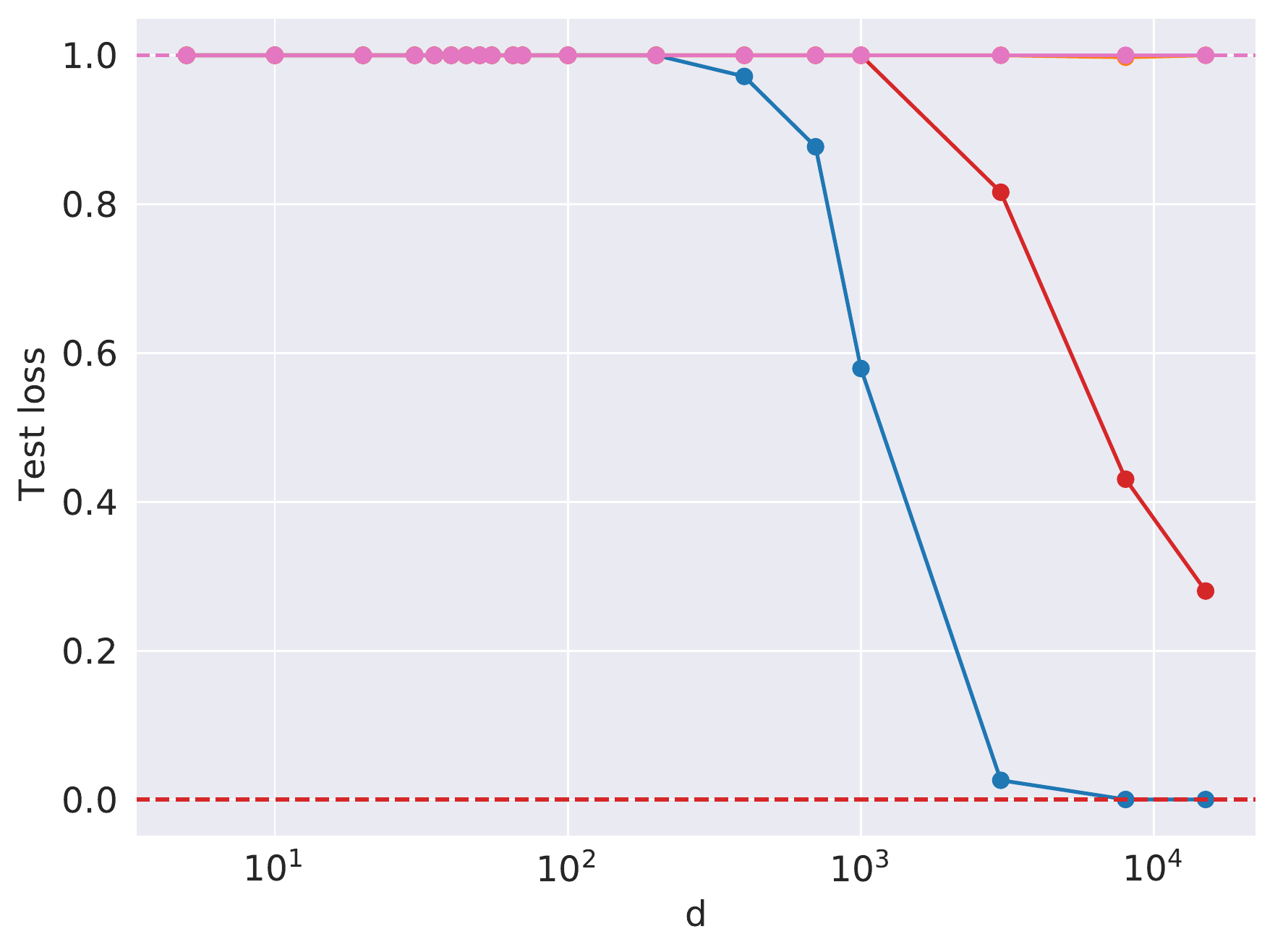}
  \end{center}
  \caption{Test adversarial loss}
  \label{subfig:rfs_dd_class_adv}
  \end{subfigure}
  \begin{subfigure}[]{0.33\textwidth}
  \begin{center}
    \includegraphics[width=\textwidth]{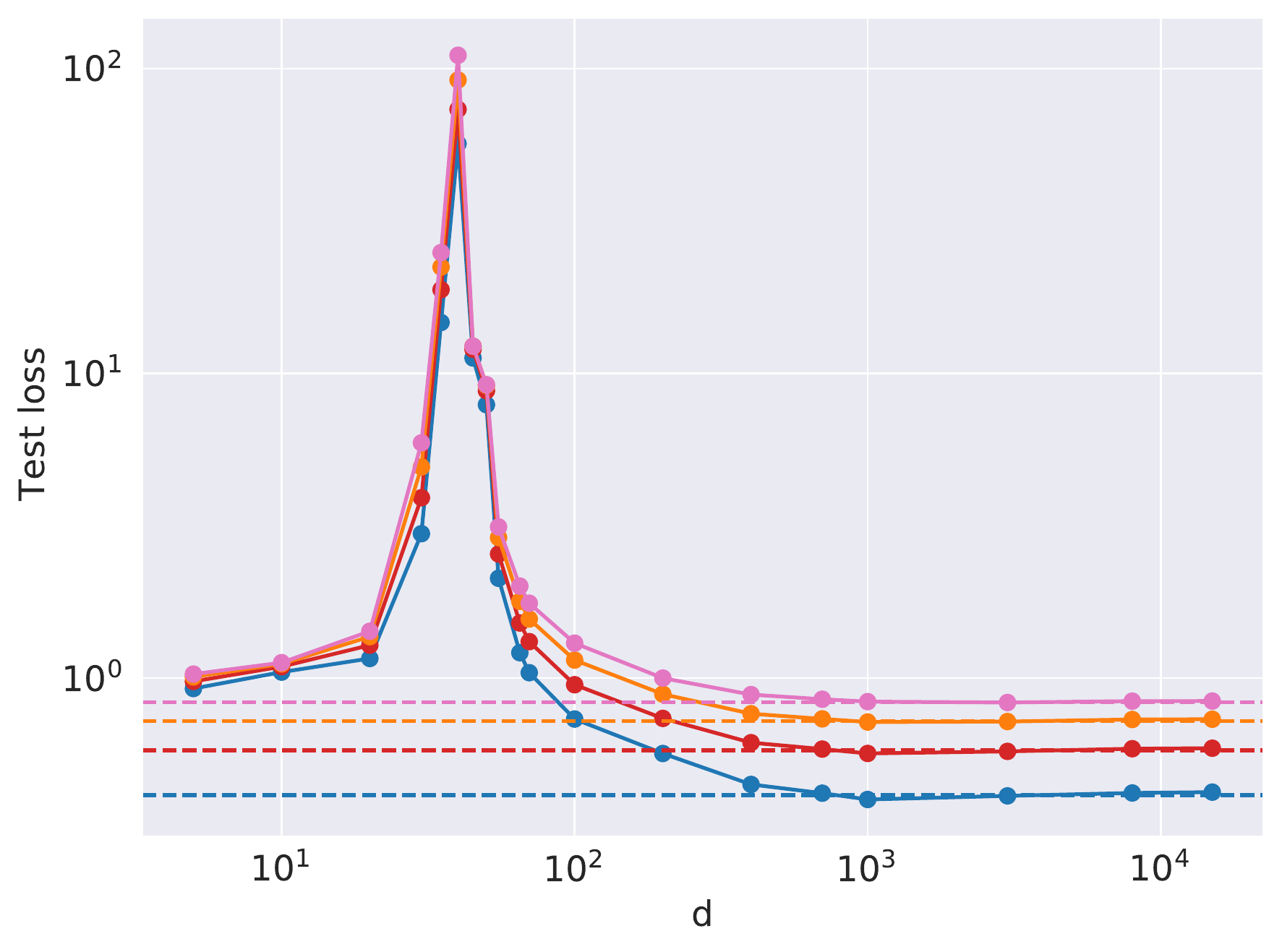}
  \end{center}
  \caption{Test regression loss}
  \label{subfig:rfs_dd_reg}
  \end{subfigure}
  \caption{Random Fourier Sum features benefit from overparameterization for three types of test error: regression, classification, and adversarial error (measured to perturbations of magnitude $1/n$). All the parameters scale as specified in Definition~\ref{def:rfs}; we fix $n=40, p = 2$, and plot different values of $q$ as specified in the legend of Figure~\ref{subfig:rfs_dd_class}.
  The dashed line indicates the test error incurred by the minimum-$\ell_2$-norm interpolator on the original Fourier features, which also appears to be the performance approached by the RFS features as $d \to \infty$.
  Theorem~\ref{thm:rfs_large_d} confirms this behavior.
  }
  \label{fig:rfs_dd}
\end{figure}

Since each RFS feature only places some weight on the true Fourier feature,
it satisfies the criteria for weak-features laid out by~\cite{belkin2019two}.
This is precisely why the RFS model exhibits double-descent and benefits from overparameterization.
A close look at Figure~\ref{fig:rfs_dd} reveals that as the number $d$ of RFS features is increased, the generalization error in all cases (regression, classification and adversarial test error) approaches the performance that would be obtained if instead of having RFS features, we just had clean access to the $B$
sinusoidal features weighted as per the bi-level model studied in this paper, i.e. the test error that was characterized in Theorem~\ref{thm:riskabdn}.
The following theorem shows that this does, in fact, happen provided that the number of RFS features (given by $d$) grows at a sufficiently fast rate relative to the number of underlying Fourier frequencies (given by $B$) and the number of training examples (given by $n$).

\begin{theorem}[RFS converges to Fourier]
  \label{thm:rfs_large_d}
  Let the training data be chosen at regularly-spaced intervals, and $\widehat{f}$ denote the interpolator of Fourier-features (Definition~\ref{def:minl2})
  with weighting $\Sigmabold(n, p, q)$ as specified in the Definition~\ref{def:bilevel_covariance};
  Further, let $\widehat{g}$ denote the interpolator of RFS features in Equation~\eqref{eq:beta_2}, where the features in Definition~\ref{def:rfs} are configured by $(n, p, q, T)$.
  Then, for $q > 1$, we have the following results:
\begin{enumerate}
  \item  If $T > 6p - 1$:
\[
  \lim_{n \to \infty} \classloss(\learnedfunc) = \lim_{n \to \infty} \classloss(\widehat{g})
 \]
 \item  If $T > 6p + 1 - 2q$:
\[
  \lim_{n \to \infty} \classlossadv \left(\learnedfunc, \frac{1}{n} \right) = \lim_{n \to \infty} \classlossadv \left(\widehat{g}, \frac{1}{n} \right)
 \]
\end{enumerate}
\end{theorem}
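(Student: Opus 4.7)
The plan is to show that as $d\to\infty$, the RFS interpolator $\widehat{g}$ converges uniformly in $x$ to the Fourier interpolator $\widehat{f}$ at a rate governed by $\sqrt{\B/d}$, then lift this uniform closeness to convergence of the classification and adversarial risks using the envelope analysis already established for $\widehat{f}$. The first step is to express $\widehat{g}$ as a trigonometric polynomial in the \emph{original} Fourier basis. Stack the random weight vectors into $Z\in\R^{d\times\B}$ with i.i.d.\ rows $z_i\sim\mathcal{N}(0,\Sigmabold)$, so that $\phirfs(x)=Z\phifour(x)$ and $\phirfs(\xtrain)=F Z^T$ with $F:=\phifour(\xtrain)\in\R^{n\times\B}$. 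The closed-form min-norm RFS interpolator is $\coeffsRFS_2=ZF^T(FZ^TZF^T)^{-1}\ytrain$. Defining the \emph{effective Fourier coefficients} $\widehat\eta:=Z^T\coeffsRFS_2\in\R^\B$, so that $\widehat{g}(x)=\phifour(x)^T\widehat\eta$, a short calculation (in which the factor $d$ cancels from numerator and denominator) yields
\[
  \widehat\eta \;=\; M F^T(FMF^T)^{-1}\ytrain, \qquad M := Z^TZ/d,
\]
which is structurally identical to $\coeffs_2=\Sigmabold F^T(F\Sigmabold F^T)^{-1}\ytrain$ with $\Sigmabold$ replaced by $M$.

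Second, I would control the deviation $M-\Sigmabold$ using matrix concentration. Writing $z_i=\Sigmabold^{1/2}g_i$ with $g_i\sim\mathcal{N}(0,I_\B)$ gives $M=\Sigmabold^{1/2}(G^TG/d)\Sigmabold^{1/2}$ for $G$ a standard Gaussian $d\times\B$ matrix, and standard Wishart concentration yields $\|G^TG/d-I_\B\|\lesssim \sqrt{\B/d}=n^{(p-T)/2}$ with overwhelming probability. Thus $\|M-\Sigmabold\|\lesssim \lambda_1\,n^{(p-T)/2}$. A Neumann expansion of $(FMF^T)^{-1}$ around $(F\Sigmabold F^T)^{-1}$, combined with the fact that on the regular grid $FF^T\approx(\B/2)I_n$ by orthogonality of the DFT so that $F\Sigmabold F^T$ is well-conditioned, propagates this into an $\ell_2$-bound on $\widehat\eta-\coeffs_2$. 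Finally, $|\widehat{g}(x)-\widehat{f}(x)|\leq \|\phifour(x)\|_2\,\|\widehat\eta-\coeffs_2\|_2\leq \sqrt{\B/2}\,\|\widehat\eta-\coeffs_2\|_2$, giving a uniform bound $\sup_x|\widehat{g}(x)-\widehat{f}(x)|\leq \rho_n$ polynomial in $n$, with an exponent that decreases as $T$ grows.

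Third, I would convert uniform closeness into risk convergence. For classification,
\[
  |\classloss(\widehat{g}) - \classloss(\widehat{f})| \;\leq\; \Prob\bigl[|\widehat{f}(x)| \leq \rho_n\bigr],
\]
which I would bound using the explicit form $\widehat{f}(x)=c_1+(b/2)D_{N_A}(n\pi x)$ from \eqref{eq:form_of_learned_kernel} together with the envelope $|D_{N_A}(\theta)|\leq 2/\theta$. Near each zero-crossing of $\widehat{f}$, the sublevel set $\{|\widehat{f}|\leq\rho_n\}$ has width controlled by the local slope, read off directly from the Dirichlet kernel; summing over the $O(n^{q-1})$ zero-crossings per period translates $\rho_n\to 0$ into the threshold $T>6p-1$. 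For the adversarial risk at radius $1/n$, Corollary~\ref{cor:riskbilevel} and Lemma~\ref{lem:k_star} guarantee that in the regime $q>1$, $\widehat{f}$ attains a negative Gibbs dip of depth $|v_\star|$ inside every length-$2/n$ window; as long as $\rho_n<|v_\star|/2$ the same dip survives in $\widehat{g}$, forcing $\classlossadv(\widehat{g},1/n)\to 1=\lim_n\classlossadv(\widehat{f},1/n)$. Tracking $|v_\star|$ through the bilevel parameters yields the sharper threshold $T>6p+1-2q$.

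The main obstacle is this third step: both $\classloss$ and $\classlossadv$ are discontinuous functionals of the predictor, so neither pointwise nor $L^2$ convergence of $\widehat\eta$ is sufficient. What is required is uniform-in-$x$ closeness at a rate strictly sharper than the relevant margin scale of $\widehat{f}$ --- the typical distance of $\widehat{f}(x)$ from zero for classification, and the depth of the Gibbs dips for adversarial risk. The two different threshold conditions on $T$ emerge precisely from beating these two distinct margin scales, both of which shrink polynomially in $n$.
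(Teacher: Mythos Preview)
Your proposal is correct and follows essentially the same route as the paper's proof: express $\widehat{g}$ in the original Fourier basis via the empirical covariance $Z^TZ/d$, apply Wishart concentration and a Neumann expansion to bound $\|\alphaRFS-\coeffs_2\|_2$, lift this to a uniform-in-$x$ bound on $|\widehat{g}-\widehat{f}|$, and then compare that bound to the Gibbs-dip depth (for the adversarial claim) and to the baseline level $c=\frac{a}{\sqrt{2}}-\frac{b}{2}\asymp n^{1-q}$ (for the classification claim). One small point: the paper handles the classification step by rerunning the envelope argument of Lemma~\ref{lem:k_star} with the shifted constant $\tilde c:=c-\sup_x|\widehat{g}-\widehat{f}|$ rather than through the sublevel set $\{|\widehat{f}|\le\rho_n\}$, and in your formulation the binding constraint is actually $\rho_n<c$ (so that the entire bulk region where $\widehat{f}\approx c$ stays outside the sublevel set), not the widths near individual zero-crossings---this is exactly the ``typical distance from zero'' you identify in your final paragraph, and it is what produces the threshold $T>6p-1$.
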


 \begin{figure}[htpb]
  \centering
  \includegraphics[width=0.8\textwidth]{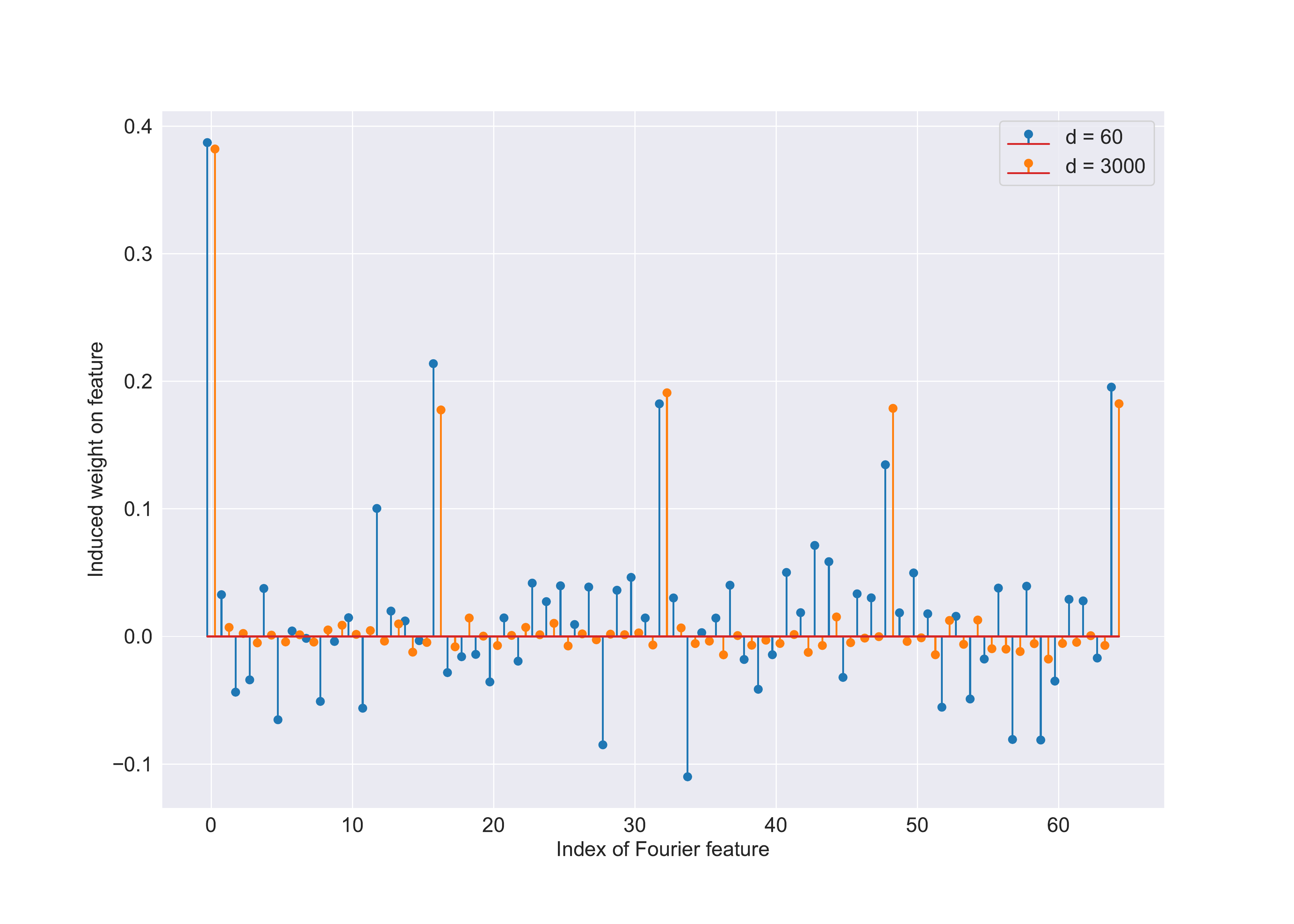}
  \caption{The effective learned coefficients on the $B=65$ underlying Fourier
    features, denoted by $\alphaRFS$, as the number $d$ of RFS features is increased. Here, $n = 8, p = 2, B=65$ and $q = 1.45$.}
  \label{fig:coeff_stem}
\end{figure}
To interpret the result of Theorem~\ref{thm:rfs_large_d}, it is useful to take a step back and interpret the RFS model in a slightly different way.
In particular, each of the RFS features can also be thought of as a noisy version of the
true (constant) feature since it is corrupted by contributions by the
spurious high-frequency Fourier terms.
Viewed from this lens, the each RFS feature is ``useful but non-robust" in the taxonomy of~\cite{ilyas2019adversarial};
Thus, Theorem~\ref{thm:rfs_large_d} is a concrete theoretical illustration of their explanation for adversarial examples.

Theorem~\ref{thm:rfs_large_d} shows that if the width $d$ of the second-layer of the neural-net in Figure~\ref{fig:rfs_nn} tends to infinity at a high enough rate (relative to $n$ and  $B$), the classification and adversarial risk of the following two interpolators are exactly equal:
\begin{enumerate}
  \item min-$\ell_2$ interpolator $\learnedfunc(\cdot)$ trained on the Fourier features output by the first layer.
  \item min-$\ell_2$ interpolator $\widehat{g}(\cdot)$ trained on the RFS features output by the second layer
\end{enumerate}
But $\learnedfunc(\cdot)$ above is exactly the interpolator that was shown in Section~\ref{sec:main_result} to exhibit generalizable-but-brittle behavior.

To understand the main strategy for the proof of Theorem~\ref{thm:rfs_large_d},
we first make the simple observation that any linear model on RFS features
is also a linear model on Fourier features, which follows from Definition~\ref{def:rfs},
and is expanded below.
\begin{equation}
  \label{eq:rfs_to_fourier}
  q(x) = \sum_{i=1}^d c_i \phirfs_i(x) = \sum_{i=1}^d c_i  \sum_{j=1}^B z_{ij} \phi_j(x) =  \sum_{j=1}^B \left(\sum_{i=1}^d c_i z_{ij} \right) \phi_j(x)
\end{equation}
Hence, we can introduce the effective coefficients, denoted by $\alphaRFS$, that the min-$\ell_2$-norm interpolator on the RFS features places on the original Fourier basis: 
It is easy to verify from Equation~\eqref{eq:rfs_to_fourier} that $\widehat{\alpha}^{\mathsf{RFS}}_i = \sum_{i=1}^d \widehat{\beta}_{2, i}  z_{ij}$.
Using results from random matrix theory, we show an explicit rate at which $\alphaRFS$ converge to $\coeffs_2$ in the $\ell_2$-norm metric.
This convergence can be understood by examining Figure~\ref{fig:coeff_stem} which plots the effective coefficients $\alphaRFS$ on the 64 underlying Fourier features that are learned for a case with $n=8$ and a particular favoring of the
constant feature. 
The two stem plots show what happens as we increase $d$, the number of RFS features; clearly, the effective coefficients on the constant feature and its appropriate cosine
aliases converge to non-zero values, while the effective
coefficients on the other underlying Fourier features are much smaller.
As $n \to \infty$, the magnitude of the effective coefficients on the non-alias features turns out to be vanishingly small relative to the magnitude of the effective coefficients on the constant feature and its aliases.

While this convergence of $\alphaRFS$ to $\coeffs_2$ directly implies convergence of the regression test error, proving the same result for classification and adversarial test error is more involved owing to the discontinuity of these loss functions.
Essentially, we use the non-asymptotic rate of convergence of $\alphaRFS$ to $\coeffs_2$ to show that the difference between the two functions $\learnedfunc(\cdot)$ and $\widehat{g}(\cdot)$ is bounded on the entire domain.
To obtain the asymptotic equivalence in error for $\learnedfunc(\cdot)$ and $\widehat{g}(\cdot)$, the steps for the proof for Theorem~\ref{thm:riskabdn}
are reproduced by ensuring that the bounded difference between the functions is not sufficient to flip the sign
of any specified test point. 
The formal argument is provided in Appendix~\ref{sec:rfsproof}.

\section{Empirical demonstrations}
\label{sec:simulations}

Theorems~\ref{thm:riskabdn} and~\ref{thm:rfs_large_d} make striking conceptual and quantitative predictions.
In particular, they predict for feature-familiies that exhibit spatial-localization
that (a) adversarial examples will emerge in the near vicinity of
training points when the fraction $a$ of the true signal component drops below a critical level; and (b) the ``contamination" effect from orthogonal components to the true signal can concentrate in a benign fashion in the near vicinity of training points, leading to surprisingly good generalization that is not admitted by corresponding independent-feature models.
In this section, we demonstrate that these predictions are empirically borne out well beyond the idealized Fourier feature map that we used for tractable mathematical analysis.

\subsection{Random Training Data}
The insights from our theory also allow us to predict the expected behavior of classification and adversarial test error on \emph{randomly drawn} training data, which is more common in machine learning.
Figure~\ref{fig:poppingplots} depicts the results of experiments
designed to validate the predictive power of insight (a),
which is that adversarial examples appear in the vicinity of training points when the coefficient $a$ on the true feature drops below a critical value  $a_c$.

The solid curves plot the coefficient $a$ on the true feature,
and the black curve is a natural theoretical prediction for $a_c$, detailed in the following paragraph.
The dashed curves plot the adversarial error resulting from an adversary capable of perturbing test points up to $\epsilon = \frac{1}{n}$.
In Figure~\ref{subfig:en_reg_four}, the performance of the regularly-spaced training case is illustrated, and as expected, the phase transition at which adversarial examples appear ubiquitously appears just as the corresponding solid curve drops below the black curve i.e when $a < a_c$.
Figure~\ref{subfig:en_random_four} illustrates
the performance with the same Fourier features and inductive bias, but
training points chosen uniformly at random.
We notice that the phase transition occurs more smoothly, but occurs at the same place.
It is also interesting to note the evolution of the adversarial error with the number of training points $n$.
In the regularly spaced case (Figure~\ref{subfig:en_reg_four}) the adversarial error is observed to approach $1$as $n \to \infty$, since every test point is within  $\frac{1}{n}$ distance of a training point.
However, in the randomly spaced case (Figure~\ref{subfig:en_random_four}), it is observed to approach $1-\frac{1}{e}$.
The intuition for this scaling is as follows: when $n$ training points are drawn uniformly at random, there is a probability of approximately $\frac{1}{e}$ that none of the training points will be within $\frac{1}{n}$ distance of an arbitrary test point.
This $\frac{1}{e}$ probability arises from a balls-and-bins-based calculation that is provided in Appendix~\ref{sec:randomlyspaced}.

\begin{figure}[htpb]
  \centering
  \begin{subfigure}[]{0.6\linewidth}
    \centering
    \includegraphics[width=\textwidth]{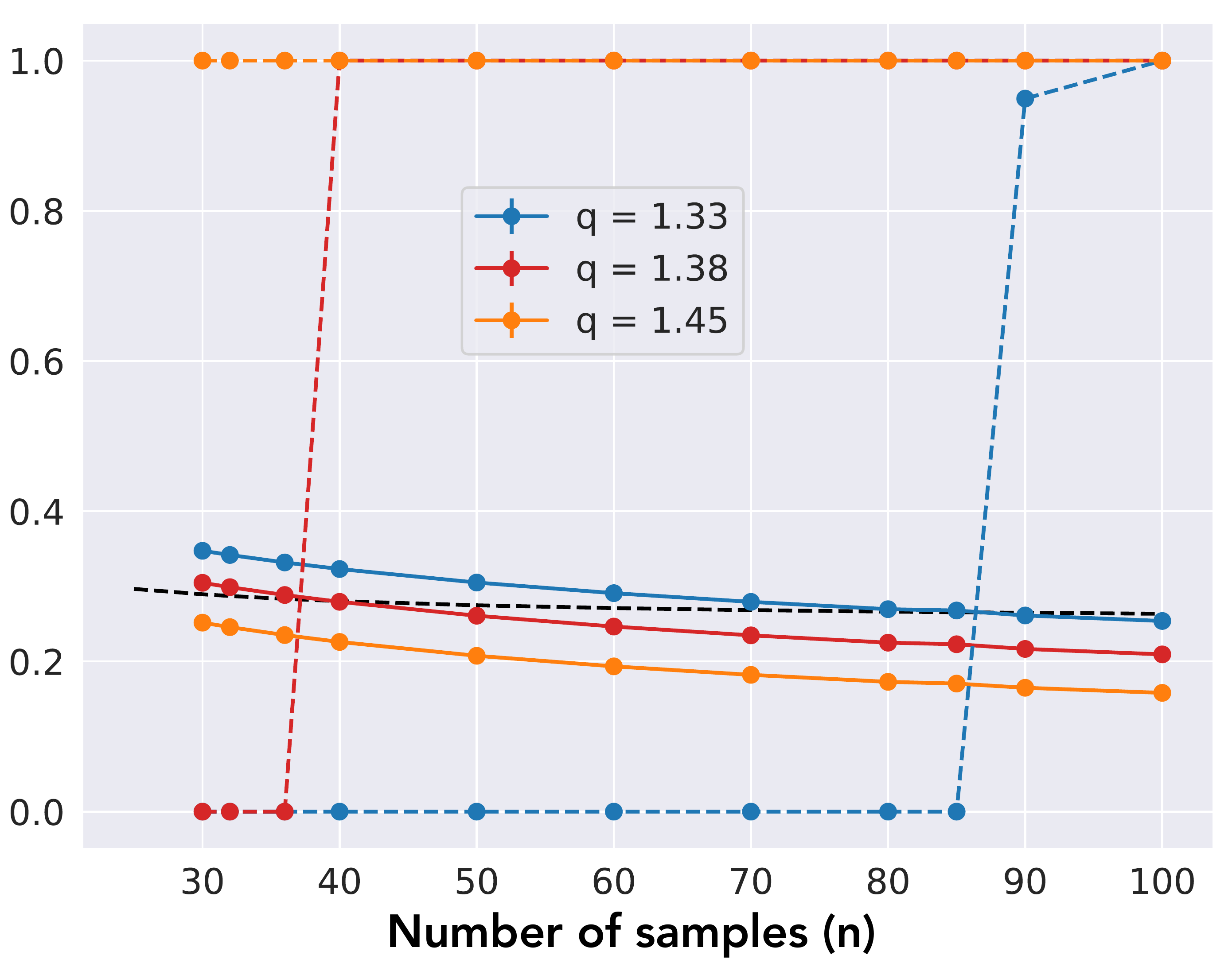}
  \caption{Fourier, regularly spaced}
  \label{subfig:en_reg_four}
  \end{subfigure}

  \begin{subfigure}[]{0.6\linewidth}
    \centering
    \includegraphics[width=\textwidth]{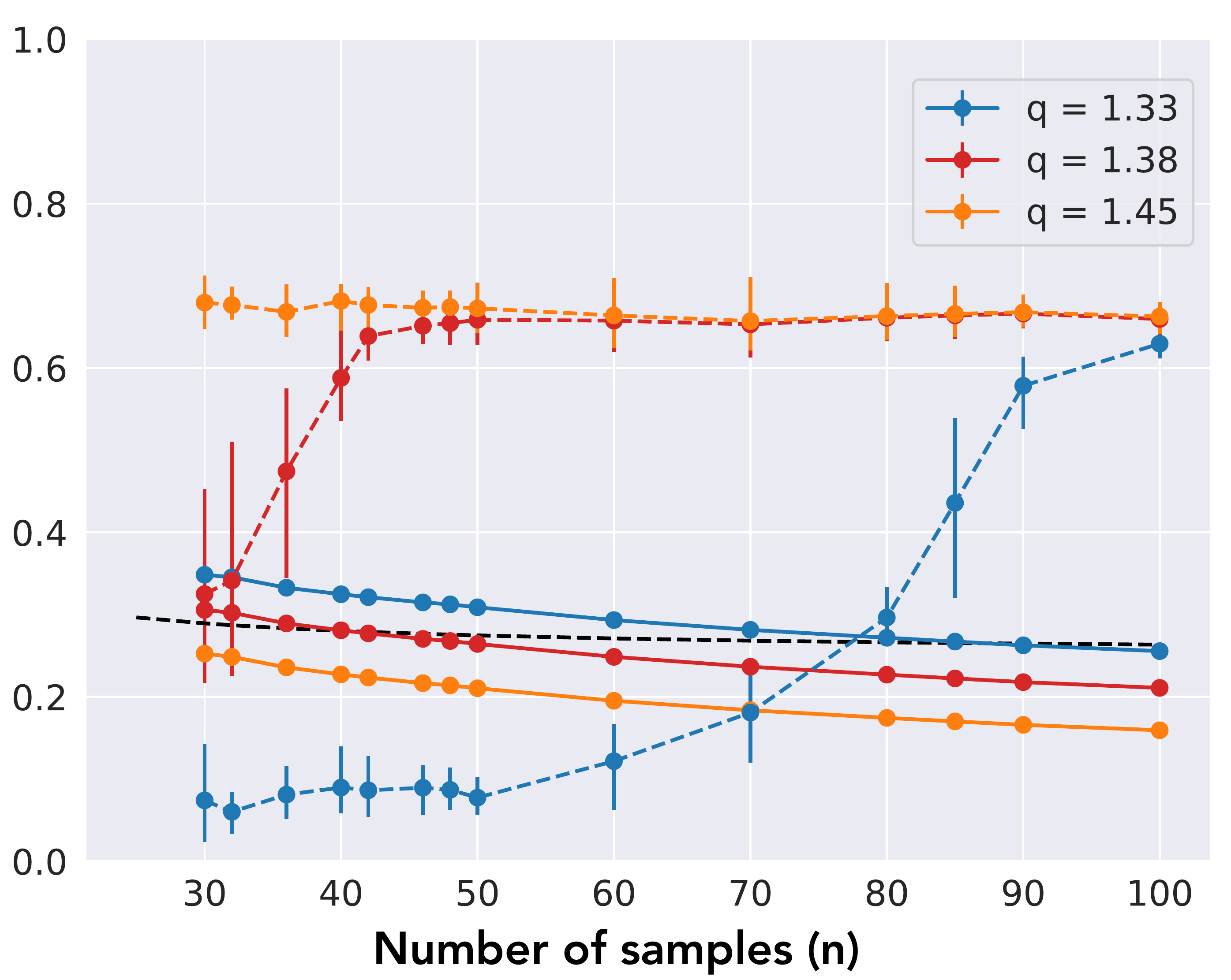}
  \caption{Fourier, uniformly sampled}
  \label{subfig:en_random_four}
  \end{subfigure}
  \caption{\textbf{Illustration of phase transition in Adversarial
      Error as a function of $n$:} All parameters vary according to
    the bilevel scalings in Definiton~\eqref{def:bilevel_covariance}
    with $p=2$ and the choice of $q$ for each curve as
    specified in the legend.
  }  \label{fig:poppingplots}
\end{figure}

To understand how the prediction for $a_c$ was obtained, first
recall from Equation~\ref{eq:form_of_learned_kernel} that the learned
function is expressed as a weighted sum of the constant function and the Dirichlet kernel as below:
\[
\widehat{f}(x) = \frac{2a - \sqrt{2} b}{2\sqrt{2}} + \frac{b}{2} \cdot D_{N_A}(n \pi x)
\]
The Dirichlet kernel has a characteristic sinc-like shape
with a global minimum of about $(N_A + 1) (-0.21)$. For any single period of $\widehat{f}(x)$, this is also the
global minimum of $\widehat{f}(x)$; call it $x^\ast$.
Hence, denoting $s = \frac{2a - \sqrt{2} b}{2\sqrt{2}}$, at $x^\ast$, we have
\[
  \widehat{f}(x^\ast) = s - \frac{b}{2} (N_A + 1) 0.21 = s + (1-s) 0.21
\]
If this evaluates to $< 0$, then there is a misclassification at $x^\ast$, and adversarial examples will
exist in the vicinity of each training point.
This condition is satisfied when $s$ is smaller than $\approx \frac{1}{6}$.
Meanwhile, to interpolate the training data, we know
that $\frac{a}{\sqrt{2}} + \frac{B}{2n}b = 1$ since $N_A =
\frac{B}{2n}$.
Solving these two equations yields a condition for $a$, which is denoted by the dashed black line.

\subsection{Legendre Polynomial Features}

\begin{figure}[htpb]
  \begin{center}
    \includegraphics[width=0.9\linewidth]{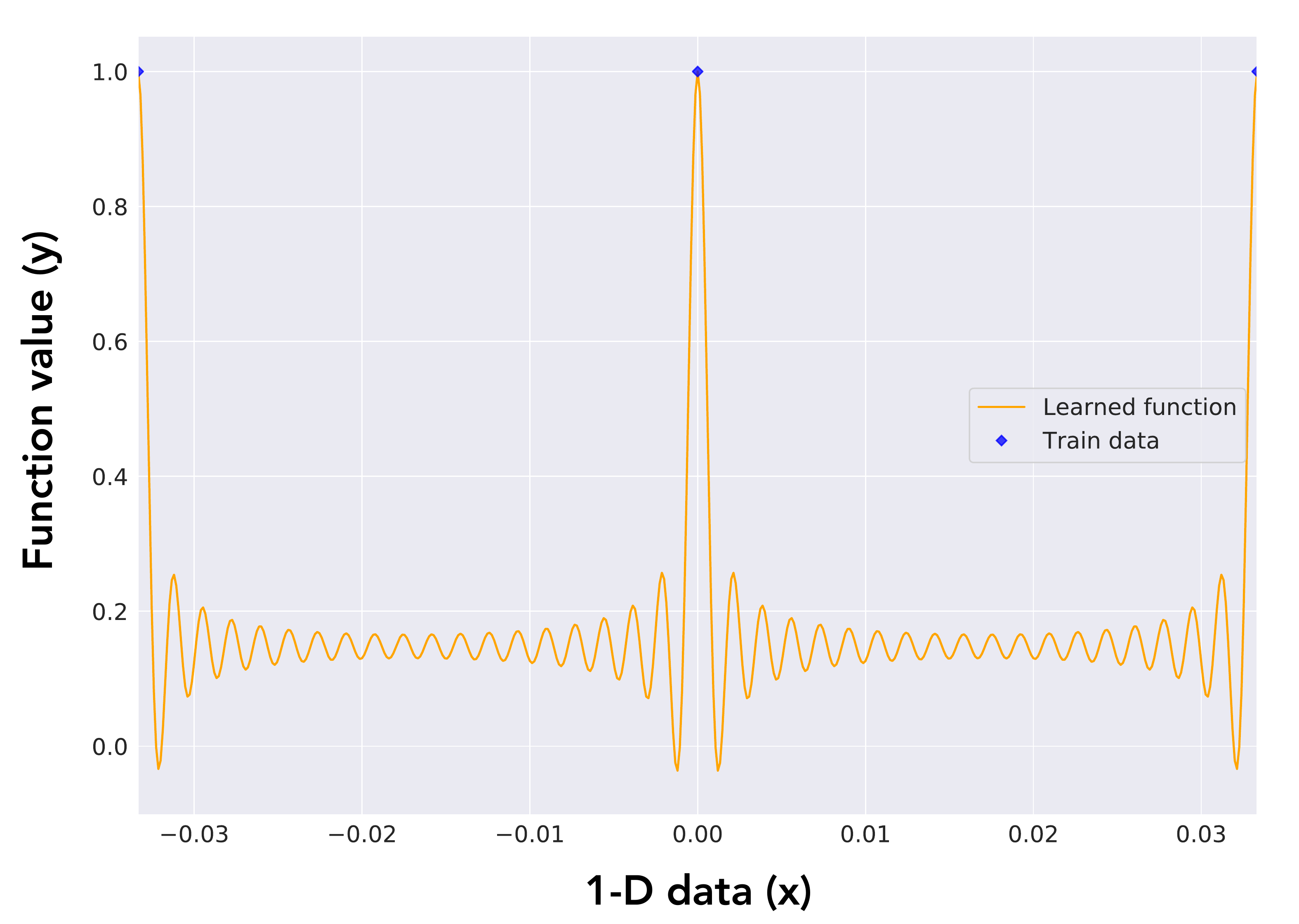}
  \end{center}
  \caption{Functions learned using Legendre features with
    regularly-spaced training data.}
  \label{fig:legendre_plot}
\end{figure}

The most basic question is whether the phenomena that we identified are restricted to Fourier features, or whether they are more general.
Accordingly, we test our predictions in the simple $1$-dimensional model, but with the Legendre polynomial feature map instead of the Fourier feature map.  Figure~\ref{fig:legendre_plot} illustrates the typical local behavior of functions learned using overparameterized Legendre polynomials for our ultra-toy example from Figure~\ref{fig:illustrationresult}; both the spatial localization and Gibbs-like overshoot is clearly visible.
In fact, there is a strong resemblance to the Dirichlet kernel from
Figure~\ref{fig:dirichletkernel}.

A similar phase transition behavior for adversarial examples as for the randomly spaced data above, is seen in the dashed curves in Figure~\ref{subfig:en_legendre} for Legendre polynomial features with regularly spaced training points.
The solid curves in Figure~\ref{subfig:en_legendre} also illustrate the continued applicability of insight (b) above, which is that spatial localization allows for unusually good generalization.
Notice that for each of the different values of $q$ (recall that $q$ controls the strength of the inductive bias towards recovering the true function component), the classification error decreases with increasing $n$.
The solid curves in all the subplots of Figure~\ref{fig:error_vs_n} experimentally confirm the ``asymptotically benign contamination'' behavior that the spatial localization bias engenders for both the Fourier and Legendre families.
As summarized in Table~\ref{tab:results}, for the larger values of $q$ an independent random feature model would instead have the classification error increasing with increasing $n$.

\begin{figure*}[htpb]
  \begin{subfigure}[]{0.33\linewidth}
    \centering
    \includegraphics[width=\textwidth]{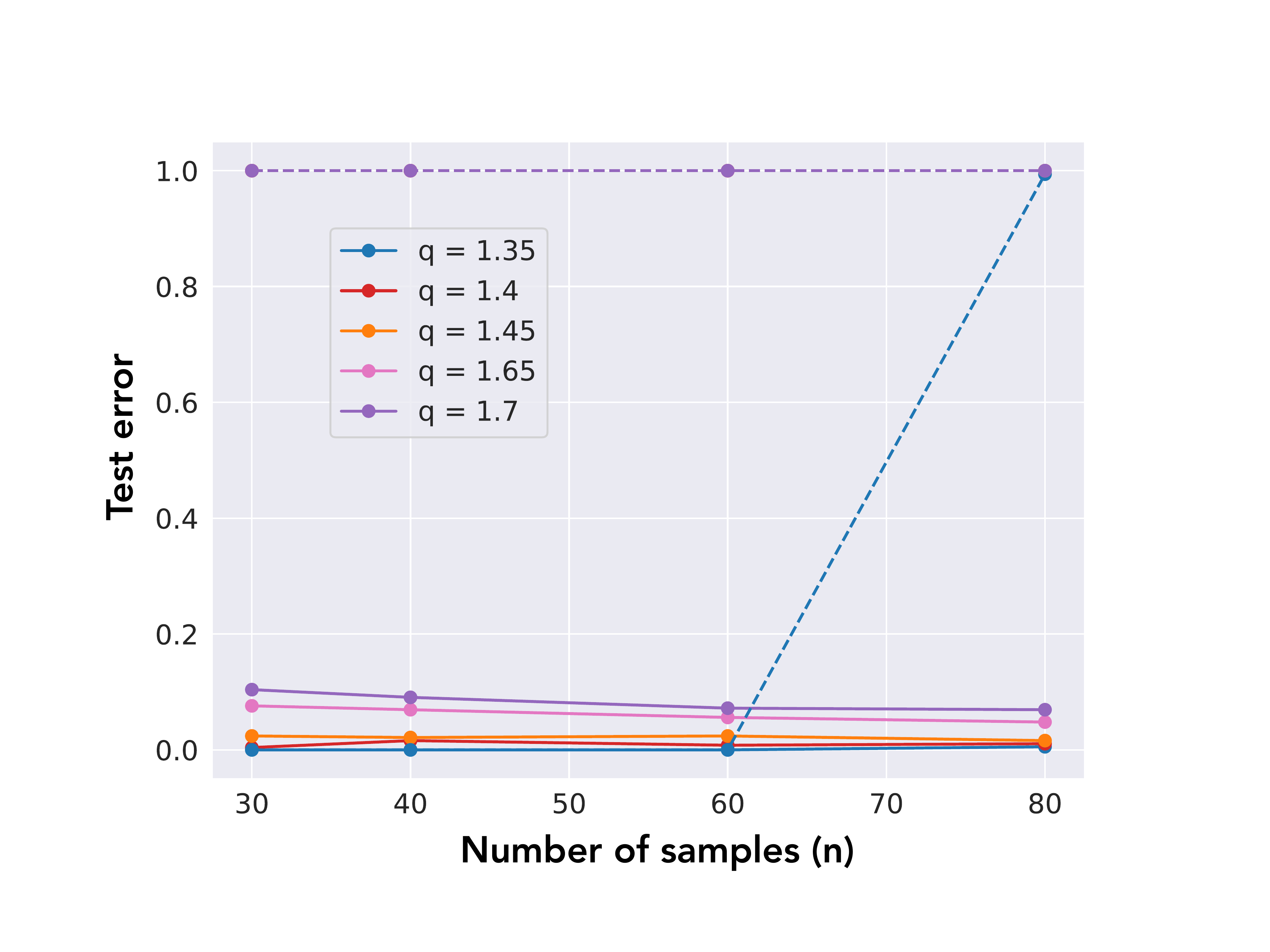}
  \caption{Fourier, regularly spaced}
  \label{subfig:en_reg_four}
  \end{subfigure}
  \begin{subfigure}[]{0.33\linewidth}
    \centering
    \includegraphics[width=\textwidth]{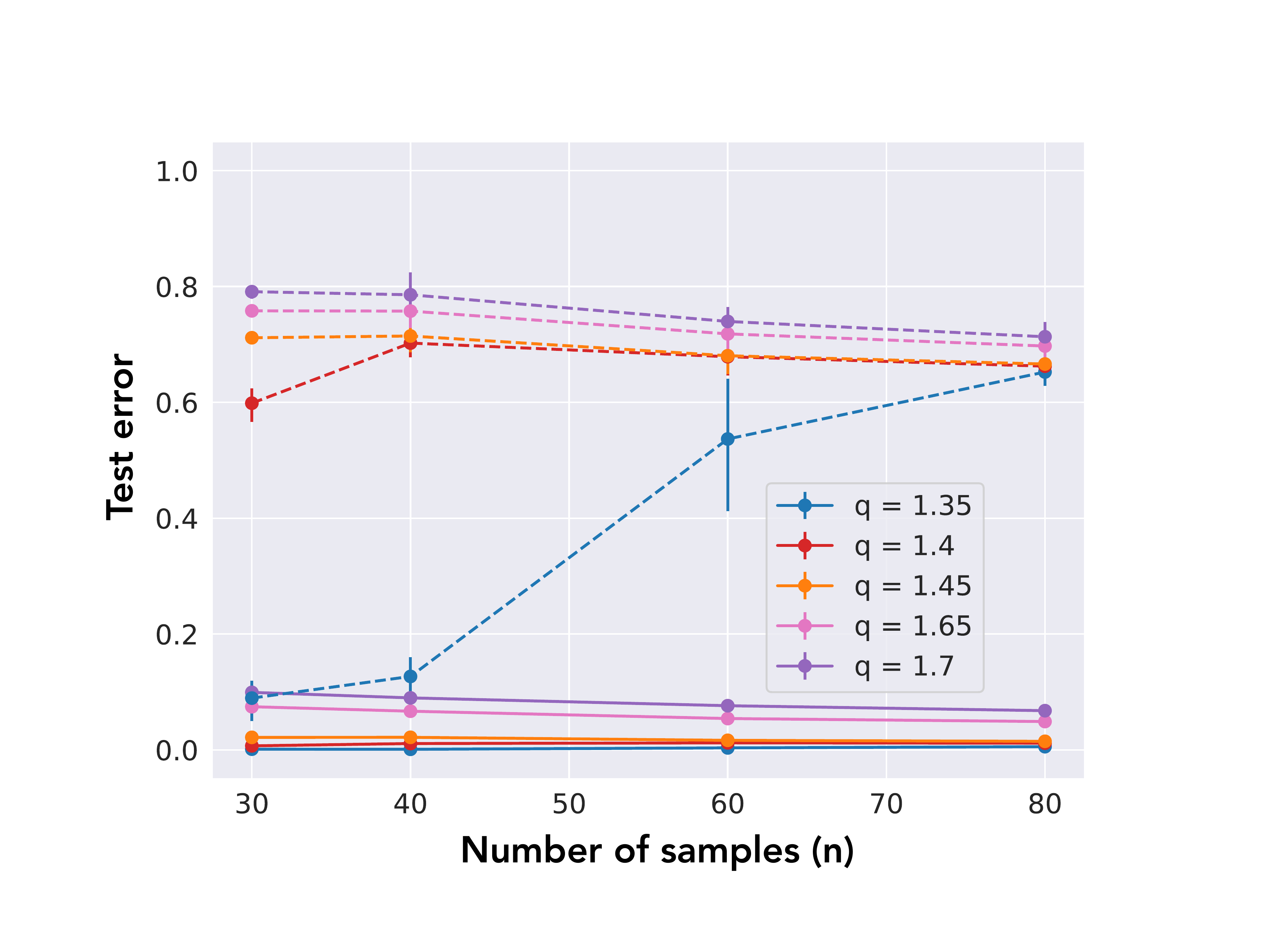}
  \caption{Fourier, uniformly sampled}
  \label{subfig:en_random_four}
  \end{subfigure}
  \begin{subfigure}[]{0.33\linewidth}
  \centering
  \includegraphics[width=\textwidth]{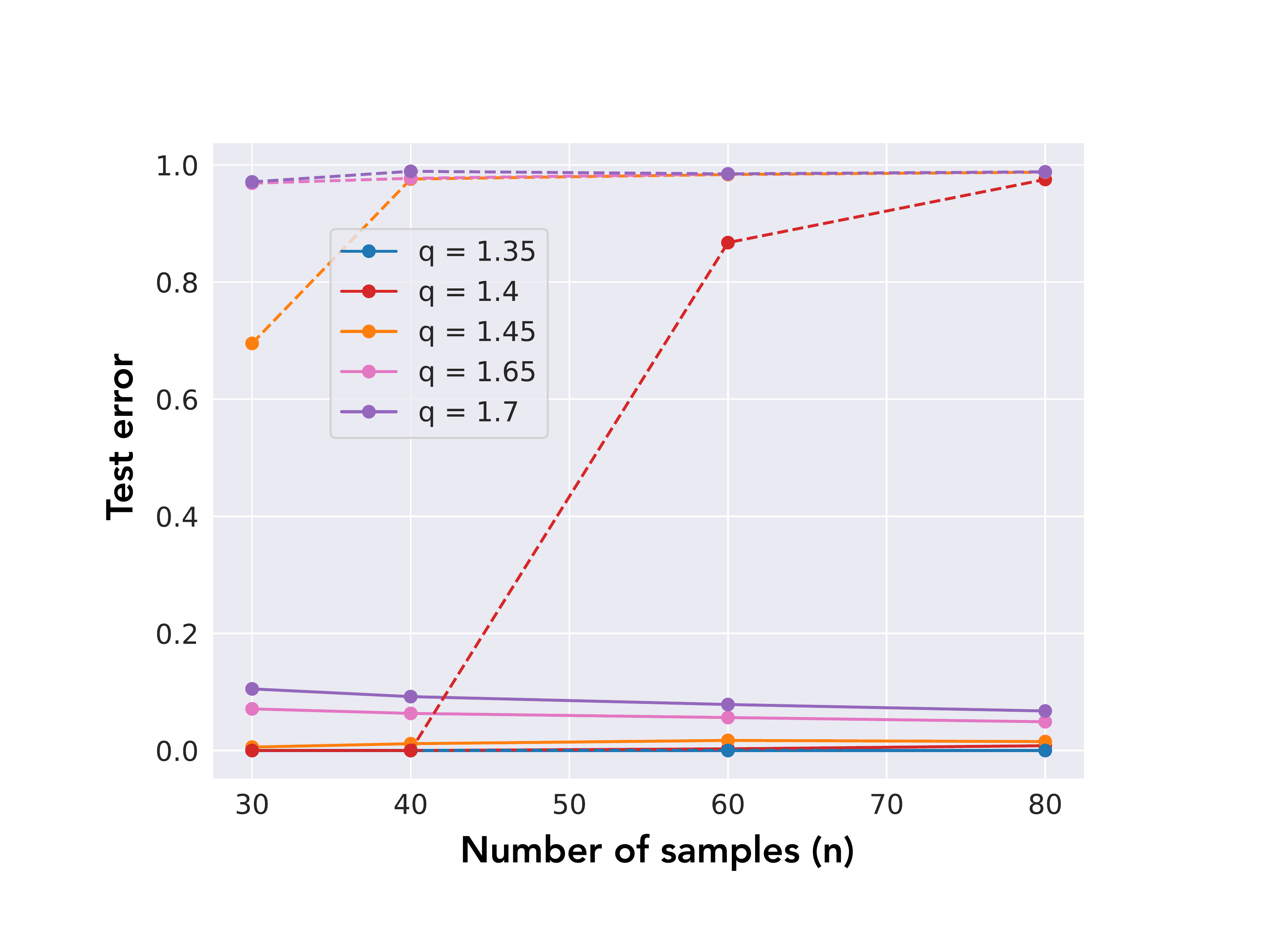}
  \caption{Legendre, regularly spaced }
  \label{subfig:en_legendre}
  \end{subfigure}
 \caption{\textbf{Classification (solid) and adversarial (dashed) risk as a function of
     n:} All parameters vary according to the bilevel scalings in
   Definiton~\eqref{def:bilevel_covariance} with $p=2$ and the
   choice of $q$ for each curve as specified in the legend.
 }
   \label{fig:error_vs_n}
\end{figure*}

\subsection{RFS Features: $d \ll B$}

In Section~\ref{sec:rfs}, we introduced the random Fourier sum (RFS) model.
We mathematically showed that if the number of RFS features, $d$, grows sufficiently faster than the number of original Fourier frequencies, $B$, the classification and adversarial test error arising from minimum-$\ell_2$-norm interpolation with the RFS features behave asymptotically exactly like the corresponding minimum-$\ell_2$-norm interpolator of the original Fourier features.
The latter essentially constitutes an \emph{infinite-width} limit of the RFS model.

Now, we explore the case where $B$ instead grows much faster than $d$.
The log-log plot shown in Figure~\ref{fig:rfs_converge_fourier} explores the effect of the number of RFS features $d$ on the amount of weight placed by the optimizer on aliases of the constant function ($\alphaRFSj, j = 2kn$) versus other non-alias features ($\alphaRFSj, j \neq 2kn$).
Recall that in the Fourier case, Lemma~\ref{lem:coeffs} illustrates that the weight placed on all non-aliases would be exactly $0$, and the weights placed on all aliases would be exactly equal to each other.

The prediction of Theorem~\ref{thm:rfs_large_d} is corroborated in this plot.
The solid lines denote the \emph{average recovered weight} of contaminating aliases $\frac{1}{N_A} \left( \sum_{k=1}^{N_A} \alphaRFSalias \right)$; these convere to the blacked dashed line ($b$ from Lemma~\ref{lem:coeffs}) that the original Fourier model would predict.
These contaminating features exhibit spatial localization and create Gibbs phenomena.
The plots also show the \emph{total energy} (squared norm) in the recovered coefficients on the non-alias Fourier features: \[
  \sum_{j = 2}^B \mathbb{I} \left[ \frac{j}{2n} \not \in \mathbb{Z} \right] (\alphaRFSj)^2
.\] 
This energy is clearly decaying to zero as the number of RFS features increases.
However, for small values of $d$, the energy in these features gives rise to non-localized contamination of the form that exists in the independent feature models of~\cite{muthukumar2020classification}.
Hence, it would stand to reason that adversarial examples in the $d \ll B$ case would not be as tightly concentrated around training points as the $d \gg B$ case.

\begin{figure}[htpb]
  \begin{subfigure}[]{0.5\textwidth}
  \begin{center}
   \includegraphics[width=\textwidth]{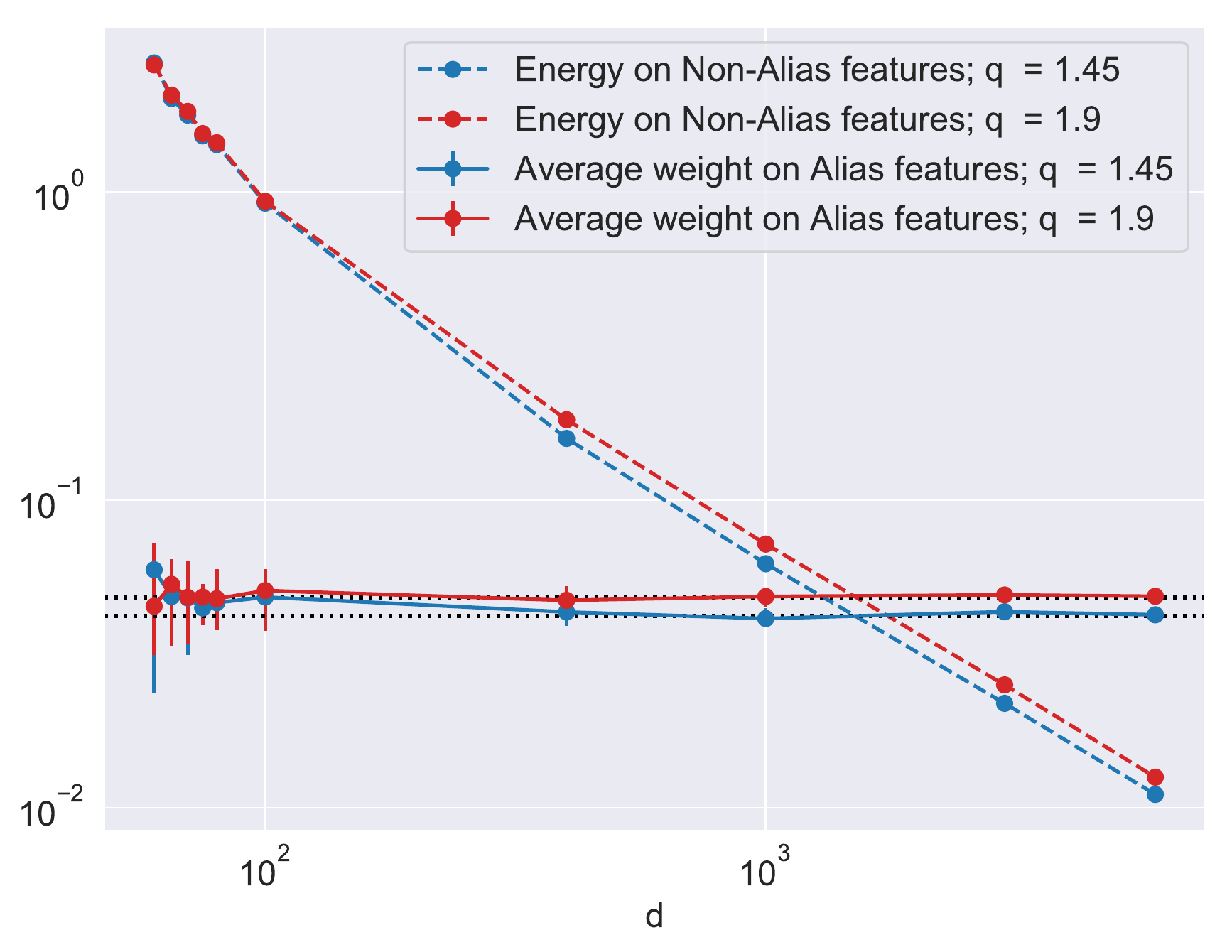}
  \end{center}
  \caption{$n=40, B = 1601$}
  \label{fig:n40}
  \end{subfigure}
  \begin{subfigure}[]{0.5\textwidth}
  \begin{center}
   \includegraphics[width=\textwidth]{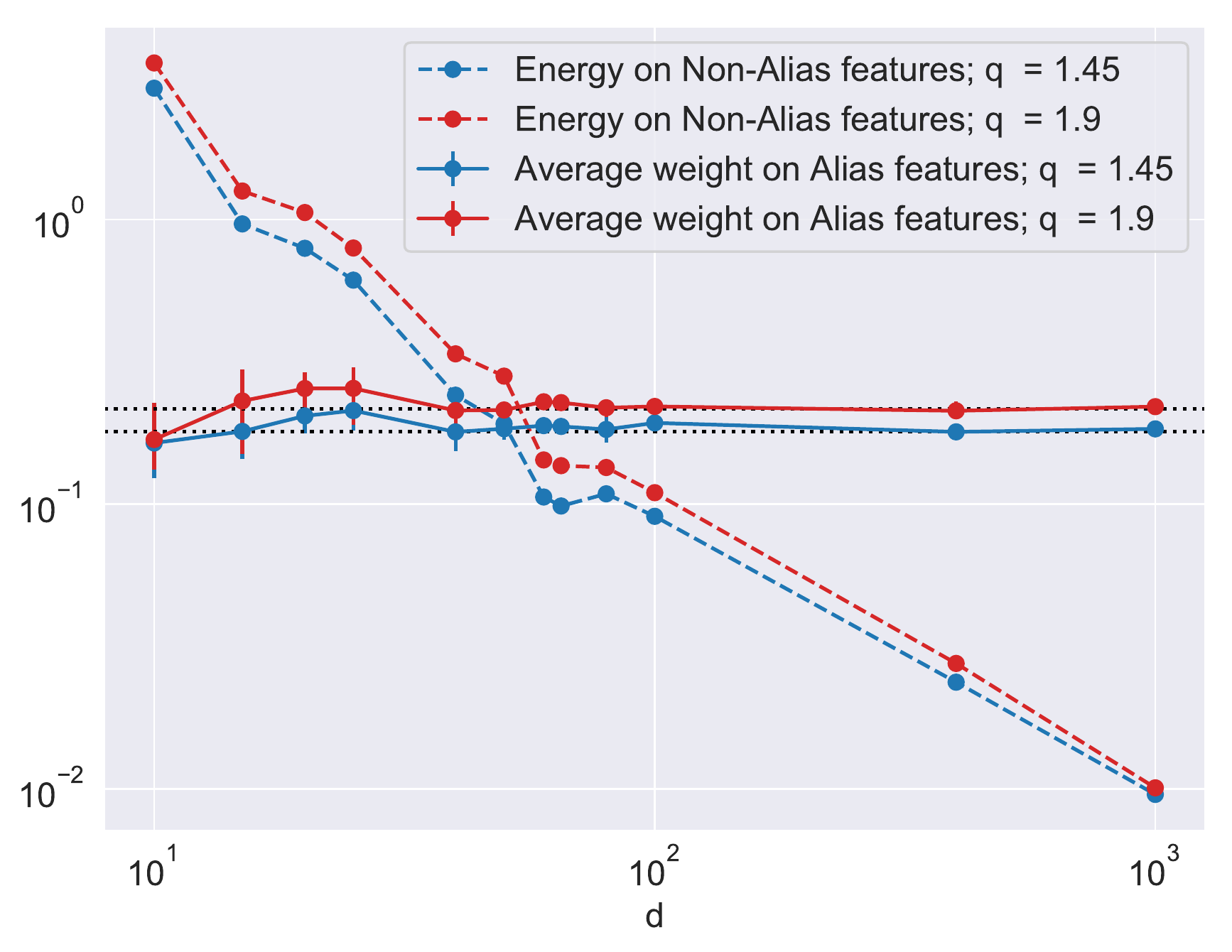}
  \end{center}
  \caption{$n=8, B = 65$}
  \label{fig:n8}
  \end{subfigure}

 \centering
 \caption{In the limit as $d \to \infty$, the min-L2 interpolator
   learned on RFS features converges to that learned on bi-level
   weighted Fourier features. The error bars on the alias weights
   capture the variation in weights on the different aliases
   ($30_{th}$ and $70_{th}$) percentile. Fix $p=2$ for both
   figures. The black dotted lines are what would be learned in the
   bi-level Fourier features model. See the accompanying text for precise definitions of ``energy'' and ``average weight''.
 }  \label{fig:rfs_converge_fourier}

\end{figure}

Indeed, this is what happens: let us look at the CDF of the distance to the nearest training point for misclassified test points.
Figure~\ref{fig:cdf} shows that the CDFs sharply concentrate close to the training points for the Fourier case, and that this behavior is very different from the uniform distribution that we would expect for independent features.
We can see that as the number of RFS features, $d$, increases, the CDFs start approaching the curve for the Fourier features case, showing that the Gibbs phenomenon is indeed encouraging misclassification near training points.
As long as this Gibbs phenomenon is active, there are adversarial examples to perturbations at the scale of $\frac{1}{n}$ as illustrated in Figure~\ref{fig:rfs_separation}.
However, for very small values of $d$, adversarial examples persist, but are now as likely to be found away from training points as in their vicinity.
This is illustrated by the solid red curve.

From these plots, it is clear that the RFS features model possesses sufficient flexibility to capture behaviors present in the original Fourier model and independent-feature models.
In particular, if $d \ll B$, the independent features model behavior persists; but if $d \gg B$, it fades away.
What about misclassifications?
In the Fourier model, they exist because of Gibbs phenomena near training points.
In the independent features model, they are a consequence of the independent contaminating features randomly overcoming the effect of (barely) surviving signal.
The RFS model is observed to flexibly model both effects.
It is an intriguing question for future work to sharply analyze the $d \ll B$ regime and mathematically show the replication of independent-feature behavior.

\begin{figure}[htpb]
  \centering
  \includegraphics[width=0.8\textwidth]{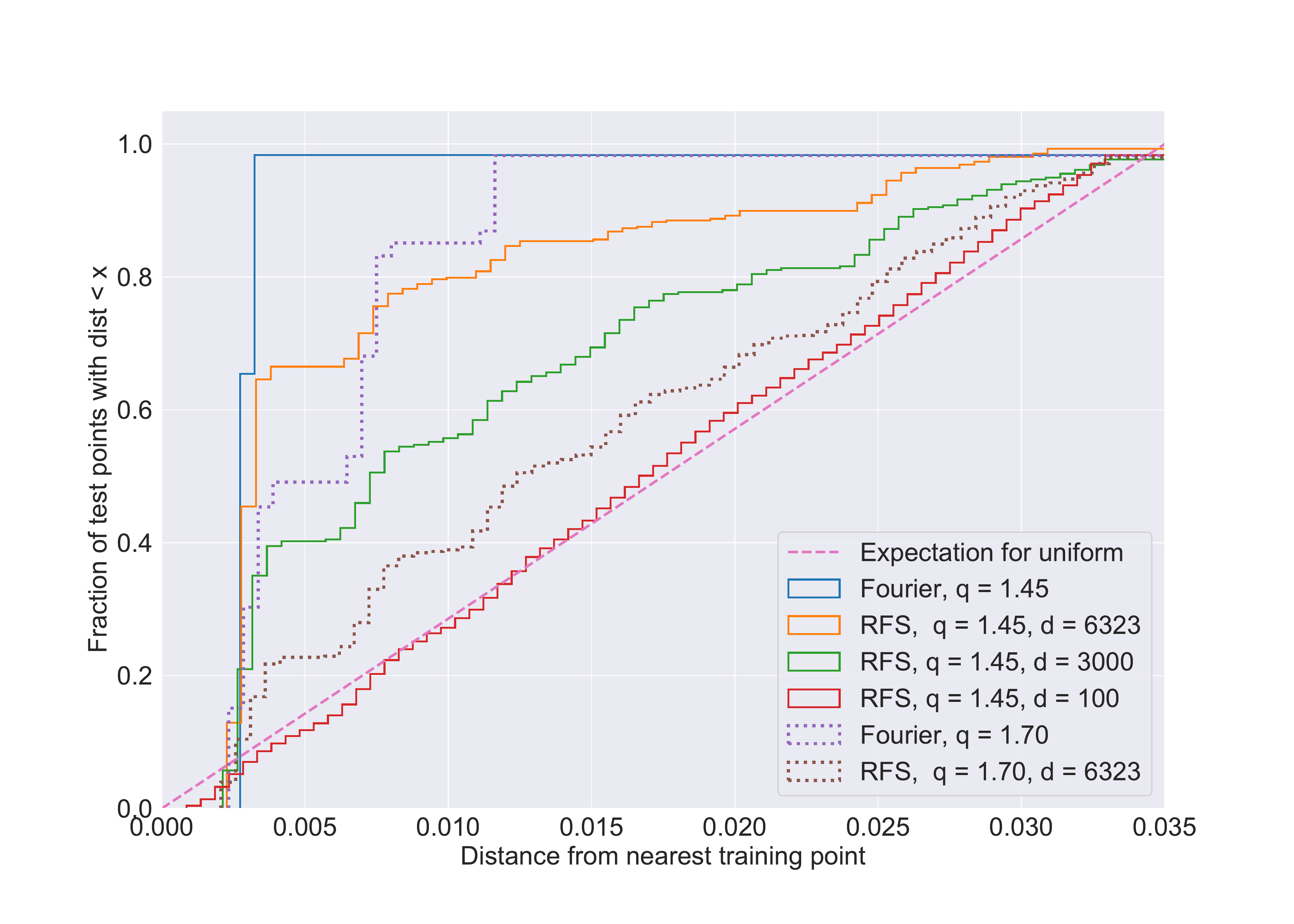}
  \caption{Cumulative histogram of distances of misclassified test
    points from the training point nearest to them. This figure fixes $n=30$ and $B = 901$.}
  \label{fig:cdf}
\end{figure}

\bibliography{references}

\begin{thebibliography}{50}
\providecommand{\natexlab}[1]{#1}
\providecommand{\url}[1]{\texttt{#1}}
\expandafter\ifx\csname urlstyle\endcsname\relax
  \providecommand{\doi}[1]{doi: #1}\else
  \providecommand{\doi}{doi: \begingroup \urlstyle{rm}\Url}\fi

\bibitem[Bartlett and Long(2020)]{bartlett2020failures}
Peter~L Bartlett and Philip~M Long.
\newblock Failures of model-dependent generalization bounds for least-norm
  interpolation.
\newblock \emph{arXiv preprint arXiv:2010.08479}, 2020.

\bibitem[Bartlett et~al.(2020)Bartlett, Long, Lugosi, and
  Tsigler]{bartlett2020benign}
Peter~L. Bartlett, Philip~M. Long, G{\'a}bor Lugosi, and Alexander Tsigler.
\newblock Benign overfitting in linear regression.
\newblock \emph{Proceedings of the National Academy of Sciences}, 2020.
\newblock ISSN 0027-8424.
\newblock \doi{10.1073/pnas.1907378117}.
\newblock URL \url{https://www.pnas.org/content/early/2020/04/22/1907378117}.

\bibitem[Belkin et~al.(2018{\natexlab{a}})Belkin, Hsu, and
  Mitra]{belkin2018overfitting}
Mikhail Belkin, Daniel Hsu, and Partha Mitra.
\newblock Overfitting or perfect fitting? risk bounds for classification and
  regression rules that interpolate.
\newblock \emph{arXiv preprint arXiv:1806.05161}, 2018{\natexlab{a}}.

\bibitem[Belkin et~al.(2018{\natexlab{b}})Belkin, Ma, and
  Mandal]{belkin2018understand}
Mikhail Belkin, Siyuan Ma, and Soumik Mandal.
\newblock To understand deep learning we need to understand kernel learning.
\newblock \emph{ICML}, 2018{\natexlab{b}}.

\bibitem[Belkin et~al.(2019{\natexlab{a}})Belkin, Hsu, Ma, and
  Mandal]{belkin2019reconciling}
Mikhail Belkin, Daniel Hsu, Siyuan Ma, and Soumik Mandal.
\newblock Reconciling modern machine-learning practice and the classical
  bias--variance trade-off.
\newblock \emph{Proceedings of the National Academy of Sciences}, 116\penalty0
  (32):\penalty0 15849--15854, 2019{\natexlab{a}}.

\bibitem[Belkin et~al.(2019{\natexlab{b}})Belkin, Hsu, and Xu]{belkin2019two}
Mikhail Belkin, Daniel Hsu, and Ji~Xu.
\newblock Two models of double descent for weak features.
\newblock \emph{arXiv preprint arXiv:1903.07571}, 2019{\natexlab{b}}.

\bibitem[Bubeck and Sellke(2021)]{bubeck2021universal}
S{\'e}bastien Bubeck and Mark Sellke.
\newblock A universal law of robustness via isoperimetry.
\newblock \emph{arXiv preprint arXiv:2105.12806}, 2021.

\bibitem[Bubeck et~al.(2020)Bubeck, Li, and Nagaraj]{bubeck2020law}
S{\'e}bastien Bubeck, Yuanzhi Li, and Dheeraj Nagaraj.
\newblock A law of robustness for two-layers neural networks.
\newblock \emph{arXiv preprint arXiv:2009.14444}, 2020.

\bibitem[Candes and Tao(2005)]{candes2005decoding}
Emmanuel~J Candes and Terence Tao.
\newblock Decoding by linear programming.
\newblock \emph{IEEE transactions on information theory}, 51\penalty0
  (12):\penalty0 4203--4215, 2005.

\bibitem[Cand{\`e}s et~al.(2006)Cand{\`e}s, Romberg, and Tao]{candes2006robust}
Emmanuel~J Cand{\`e}s, Justin Romberg, and Terence Tao.
\newblock Robust uncertainty principles: Exact signal reconstruction from
  highly incomplete frequency information.
\newblock \emph{IEEE Transactions on information theory}, 52\penalty0
  (2):\penalty0 489--509, 2006.

\bibitem[Cao et~al.(2021)Cao, Gu, and Belkin]{cao2021risk}
Yuan Cao, Quanquan Gu, and Mikhail Belkin.
\newblock Risk bounds for over-parameterized maximum margin classification on
  sub-gaussian mixtures.
\newblock \emph{arXiv preprint arXiv:2104.13628}, 2021.

\bibitem[Chatterji and Long(2021)]{chatterji2021finite}
Niladri~S Chatterji and Philip~M Long.
\newblock Finite-sample analysis of interpolating linear classifiers in the
  overparameterized regime.
\newblock \emph{Journal of Machine Learning Research}, 22\penalty0
  (129):\penalty0 1--30, 2021.

\bibitem[Chen et~al.(2001)Chen, Donoho, and Saunders]{chen2001atomic}
Scott~Shaobing Chen, David~L Donoho, and Michael~A Saunders.
\newblock Atomic decomposition by basis pursuit.
\newblock \emph{SIAM review}, 43\penalty0 (1):\penalty0 129--159, 2001.

\bibitem[Chizat et~al.(2019)Chizat, Oyallon, and Bach]{chizat2019lazy}
L{\'e}na{\"\i}c Chizat, Edouard Oyallon, and Francis Bach.
\newblock On lazy training in differentiable programming.
\newblock In \emph{Proceedings of the 33rd International Conference on Neural
  Information Processing Systems}, pages 2937--2947, 2019.

\bibitem[D'Amour et~al.(2020)D'Amour, Heller, Moldovan, Adlam, Alipanahi,
  Beutel, Chen, Deaton, Eisenstein, Hoffman, et~al.]{d2020underspecification}
Alexander D'Amour, Katherine Heller, Dan Moldovan, Ben Adlam, Babak Alipanahi,
  Alex Beutel, Christina Chen, Jonathan Deaton, Jacob Eisenstein, Matthew~D
  Hoffman, et~al.
\newblock Underspecification presents challenges for credibility in modern
  machine learning.
\newblock \emph{arXiv preprint arXiv:2011.03395}, 2020.

\bibitem[Dou and Liang(2020)]{dou2020training}
Xialiang Dou and Tengyuan Liang.
\newblock Training neural networks as learning data-adaptive kernels: Provable
  representation and approximation benefits.
\newblock \emph{Journal of the American Statistical Association}, pages 1--14,
  2020.

\bibitem[Fawzi et~al.(2016)Fawzi, Moosavi-Dezfooli, and
  Frossard]{fawzi2016robustness}
Alhussein Fawzi, Seyed-Mohsen Moosavi-Dezfooli, and Pascal Frossard.
\newblock Robustness of classifiers: from adversarial to random noise.
\newblock \emph{arXiv preprint arXiv:1608.08967}, 2016.

\bibitem[Fawzi et~al.(2018)Fawzi, Fawzi, and Fawzi]{fawzi2018adversarial}
Alhussein Fawzi, Hamza Fawzi, and Omar Fawzi.
\newblock Adversarial vulnerability for any classifier.
\newblock \emph{arXiv preprint arXiv:1802.08686}, 2018.

\bibitem[Ford et~al.(2019)Ford, Gilmer, Carlini, and
  Cubuk]{ford2019adversarial}
Nic Ford, Justin Gilmer, Nicolas Carlini, and Dogus Cubuk.
\newblock Adversarial examples are a natural consequence of test error in
  noise.
\newblock \emph{arXiv preprint arXiv:1901.10513}, 2019.

\bibitem[Geiger et~al.(2019)Geiger, Spigler, d'Ascoli, Sagun, Baity-Jesi,
  Biroli, and Wyart]{geiger2019jamming}
Mario Geiger, Stefano Spigler, St{\'e}phane d'Ascoli, Levent Sagun, Marco
  Baity-Jesi, Giulio Biroli, and Matthieu Wyart.
\newblock Jamming transition as a paradigm to understand the loss landscape of
  deep neural networks.
\newblock \emph{Physical Review E}, 100\penalty0 (1):\penalty0 012115, 2019.

\bibitem[Ghorbani et~al.(2020)Ghorbani, Mei, Misiakiewicz, and
  Montanari]{ghorbani2020neural}
Behrooz Ghorbani, Song Mei, Theodor Misiakiewicz, and Andrea Montanari.
\newblock When do neural networks outperform kernel methods?
\newblock \emph{arXiv preprint arXiv:2006.13409}, 2020.

\bibitem[Gilmer et~al.(2018)Gilmer, Metz, Faghri, Schoenholz, Raghu,
  Wattenberg, and Goodfellow]{gilmer2018adversarial}
Justin Gilmer, Luke Metz, Fartash Faghri, Samuel~S Schoenholz, Maithra Raghu,
  Martin Wattenberg, and Ian Goodfellow.
\newblock Adversarial spheres.
\newblock \emph{arXiv preprint arXiv:1801.02774}, 2018.

\bibitem[Goodfellow et~al.(2014)Goodfellow, Shlens, and
  Szegedy]{goodfellow2014explaining}
Ian~J Goodfellow, Jonathon Shlens, and Christian Szegedy.
\newblock Explaining and harnessing adversarial examples.
\newblock \emph{arXiv preprint arXiv:1412.6572}, 2014.

\bibitem[Hastie et~al.(2019)Hastie, Montanari, Rosset, and
  Tibshirani]{hastie2019surprises}
Trevor Hastie, Andrea Montanari, Saharon Rosset, and Ryan~J Tibshirani.
\newblock Surprises in high-dimensional ridgeless least squares interpolation.
\newblock \emph{arXiv preprint arXiv:1903.08560}, 2019.

\bibitem[Hsu et~al.(2020)Hsu, Muthukumar, and Xu]{hsu2020proliferation}
Daniel Hsu, Vidya Muthukumar, and Ji~Xu.
\newblock On the proliferation of support vectors in high dimensions.
\newblock \emph{arXiv preprint arXiv:2009.10670}, 2020.

\bibitem[Ilyas et~al.(2019)Ilyas, Santurkar, Tsipras, Engstrom, Tran, and
  Madry]{ilyas2019adversarial}
Andrew Ilyas, Shibani Santurkar, Dimitris Tsipras, Logan Engstrom, Brandon
  Tran, and Aleksander Madry.
\newblock Adversarial examples are not bugs, they are features.
\newblock \emph{arXiv preprint arXiv:1905.02175}, 2019.

\bibitem[Jacot et~al.(2018)Jacot, Gabriel, and Hongler]{jacot2018neural}
Arthur Jacot, Franck Gabriel, and Cl{\'e}ment Hongler.
\newblock Neural tangent kernel: Convergence and generalization in neural
  networks.
\newblock \emph{arXiv preprint arXiv:1806.07572}, 2018.

\bibitem[Javanmard and Soltanolkotabi(2020)]{javanmard2020precise}
Adel Javanmard and Mahdi Soltanolkotabi.
\newblock Precise statistical analysis of classification accuracies for
  adversarial training.
\newblock \emph{arXiv preprint arXiv:2010.11213}, 2020.

\bibitem[Long(2021)]{long2021properties}
Philip~M Long.
\newblock Properties of the after kernel.
\newblock \emph{arXiv preprint arXiv:2105.10585}, 2021.

\bibitem[Mahloujifar et~al.(2019)Mahloujifar, Diochnos, and
  Mahmoody]{mahloujifar2019curse}
Saeed Mahloujifar, Dimitrios~I Diochnos, and Mohammad Mahmoody.
\newblock The curse of concentration in robust learning: Evasion and poisoning
  attacks from concentration of measure.
\newblock In \emph{Proceedings of the AAAI Conference on Artificial
  Intelligence}, volume~33, pages 4536--4543, 2019.

\bibitem[Mei and Montanari(2019)]{mei2019generalization}
Song Mei and Andrea Montanari.
\newblock The generalization error of random features regression: {P}recise
  asymptotics and double descent curve.
\newblock \emph{arXiv preprint arXiv:1908.05355}, 2019.

\bibitem[Mitra(2019)]{mitra2019understanding}
Partha~P Mitra.
\newblock Understanding overfitting peaks in generalization error: Analytical
  risk curves for $\ell_2 $ and $\ell_1 $ penalized interpolation.
\newblock \emph{arXiv preprint arXiv:1906.03667}, 2019.

\bibitem[Muscalu and Schlag(2013)]{muscalu2013classical}
Camil Muscalu and Wilhelm Schlag.
\newblock \emph{Classical and Multilinear Harmonic Analysis: Volume 1}, volume
  137.
\newblock Cambridge University Press, 2013.

\bibitem[Muthukumar et~al.(2020{\natexlab{a}})Muthukumar, Narang, Subramanian,
  Belkin, Hsu, and Sahai]{muthukumar2020classification}
Vidya Muthukumar, Adhyyan Narang, Vignesh Subramanian, Mikhail Belkin, Daniel
  Hsu, and Anant Sahai.
\newblock Classification vs regression in overparameterized regimes: Does the
  loss function matter?
\newblock \emph{arXiv preprint arXiv:2005.08054}, 2020{\natexlab{a}}.

\bibitem[Muthukumar et~al.(2020{\natexlab{b}})Muthukumar, Vodrahalli,
  Subramanian, and Sahai]{muthukumar2020harmless}
Vidya Muthukumar, Kailas Vodrahalli, Vignesh Subramanian, and Anant Sahai.
\newblock Harmless interpolation of noisy data in regression.
\newblock \emph{IEEE Journal on Selected Areas in Information Theory},
  1\penalty0 (1):\penalty0 67--83, 2020{\natexlab{b}}.

\bibitem[{Nakkiran}(2019)]{nakkiran}
Preetum {Nakkiran}.
\newblock {More Data Can Hurt for Linear Regression: Sample-wise Double
  Descent}.
\newblock \emph{arXiv e-prints}, art. arXiv:1912.07242, December 2019.

\bibitem[Nakkiran(2019)]{nakkiran2019adversarial}
Preetum Nakkiran.
\newblock Adversarial robustness may be at odds with simplicity.
\newblock \emph{arXiv preprint arXiv:1901.00532}, 2019.

\bibitem[Neyshabur et~al.(2014)Neyshabur, Tomioka, and
  Srebro]{neyshabur2014search}
Behnam Neyshabur, Ryota Tomioka, and Nathan Srebro.
\newblock In search of the real inductive bias: On the role of implicit
  regularization in deep learning.
\newblock \emph{arXiv preprint arXiv:1412.6614}, 2014.

\bibitem[Raghunathan et~al.(2020)Raghunathan, Xie, Yang, Duchi, and
  Liang]{raghunathan2020understanding}
Aditi Raghunathan, Sang~Michael Xie, Fanny Yang, John Duchi, and Percy Liang.
\newblock Understanding and mitigating the tradeoff between robustness and
  accuracy.
\newblock \emph{arXiv preprint arXiv:2002.10716}, 2020.

\bibitem[Rahimi and Recht(2008)]{rahimi2008random}
Ali Rahimi and Benjamin Recht.
\newblock Random features for large-scale kernel machines.
\newblock In J.~Platt, D.~Koller, Y.~Singer, and S.~Roweis, editors,
  \emph{Advances in Neural Information Processing Systems}, volume~20. Curran
  Associates, Inc., 2008.
\newblock URL
  \url{https://proceedings.neurips.cc/paper/2007/file/013a006f03dbc5392effeb8f18fda755-Paper.pdf}.

\bibitem[Rakhlin and Zhai(2019)]{rakhlin2019consistency}
Alexander Rakhlin and Xiyu Zhai.
\newblock Consistency of interpolation with laplace kernels is a
  high-dimensional phenomenon.
\newblock In \emph{Conference on Learning Theory}, pages 2595--2623. PMLR,
  2019.

\bibitem[Schmidt et~al.(2018)Schmidt, Santurkar, Tsipras, Talwar, and
  Madry]{schmidt2018adversarially}
Ludwig Schmidt, Shibani Santurkar, Dimitris Tsipras, Kunal Talwar, and
  Aleksander Madry.
\newblock Adversarially robust generalization requires more data.
\newblock \emph{arXiv preprint arXiv:1804.11285}, 2018.

\bibitem[Shafahi et~al.(2018)Shafahi, Huang, Studer, Feizi, and
  Goldstein]{shafahi2018adversarial}
Ali Shafahi, W~Ronny Huang, Christoph Studer, Soheil Feizi, and Tom Goldstein.
\newblock Are adversarial examples inevitable?
\newblock \emph{arXiv preprint arXiv:1809.02104}, 2018.

\bibitem[Wainwright(2019)]{wainwright2019high}
Martin~J Wainwright.
\newblock \emph{High-dimensional statistics: A non-asymptotic viewpoint},
  volume~48.
\newblock Cambridge University Press, 2019.

\bibitem[Wang and Thrampoulidis(2021)]{wang2021benign}
Ke~Wang and Christos Thrampoulidis.
\newblock Benign overfitting in binary classification of gaussian mixtures.
\newblock In \emph{ICASSP 2021-2021 IEEE International Conference on Acoustics,
  Speech and Signal Processing (ICASSP)}, pages 4030--4034. IEEE, 2021.

\bibitem[Wang and Fan(2017)]{wang2017asymptotics}
Weichen Wang and Jianqing Fan.
\newblock Asymptotics of empirical eigenstructure for high dimensional spiked
  covariance.
\newblock \emph{Annals of statistics}, 45\penalty0 (3):\penalty0 1342, 2017.

\bibitem[Wyner et~al.(2017)Wyner, Olson, Bleich, and
  Mease]{wyner2017explaining}
Abraham~J Wyner, Matthew Olson, Justin Bleich, and David Mease.
\newblock Explaining the success of adaboost and random forests as
  interpolating classifiers.
\newblock \emph{The Journal of Machine Learning Research}, 18\penalty0
  (1):\penalty0 1558--1590, 2017.

\bibitem[Xie et~al.(2020)Xie, Ward, Rauhut, and Chou]{xie2020weighted}
Yuege Xie, Rachel Ward, Holger Rauhut, and Hung-Hsu Chou.
\newblock Weighted optimization: better generalization by smoother
  interpolation.
\newblock \emph{arXiv preprint arXiv:2006.08495}, 2020.

\bibitem[Yin et~al.(2019)Yin, Kannan, and Bartlett]{yin2019rademacher}
Dong Yin, Ramchandran Kannan, and Peter Bartlett.
\newblock Rademacher complexity for adversarially robust generalization.
\newblock In \emph{International conference on machine learning}, pages
  7085--7094. PMLR, 2019.

\bibitem[Zhang et~al.(2016)Zhang, Bengio, Hardt, Recht, and
  Vinyals]{zhang2016understanding}
Chiyuan Zhang, Samy Bengio, Moritz Hardt, Benjamin Recht, and Oriol Vinyals.
\newblock Understanding deep learning requires rethinking generalization.
\newblock \emph{arXiv preprint arXiv:1611.03530}, 2016.

\end{thebibliography}

\newpage
\appendix
\onecolumn

\section{Expressions for learned coefficients and critical lobe}
\subsection{Expressions for the learned co-efficients: Proof for Lemma~\eqref{lem:coeffs}}

The original optimization problem in Equation~\eqref{def:minl2} can be rewritten by changing coordinates to the variable $\hat{\boldzeta}_2 = \Sigmabold^{-\frac{1}{2}} \boldsymbol{\coeffs}_2$ as
\begin{equation}
\label{eq:weighted_problem}
\widehat{\boldzeta}_2 = \arg \min_{\boldzeta}  \|\boldzeta\|_2: \sum_{k=1}^\B \hat{\zeta_k} \sqrt{\lambda_k} \phi_k(x_j) = 1 \; \forall j \in [n]
\end{equation}
Let $M = \{2 k n \; \forall k \in [N_A]\}$ be the set of indices of all aliases of the constant function. Now, recognize the following two properties of the regularly spaced Fourier features:
\begin{align*}
& \phi_j (\xtrain) = \sqrt{2} \phi_1 (\xtrain) = \boldsymbol{1}, \; \; \forall j \in M \\
& \langle \phi_j (\xtrain) , \boldsymbol{1} \rangle = 0, \; \; \forall j \notin \{M \cup 1 \} \\
\end{align*}
Hence, placing any amount of weight on the non-alias features will not assist with constraint satisfaction and hence we must have $\hat{\zeta}_j = 0, \forall j \notin \{M \cup 1\}$.
Hence, problem~\eqref{eq:weighted_problem} can be rewritten as follows.
\[
  \widehat{\boldzeta}_2 = \arg \min_{\boldzeta}  \|\boldzeta\|_2: \frac{\zeta_1 \sqrt{\lambda_1}} {\sqrt{2}}  + \sum_{k \in M} \zeta_k \sqrt{\lambda_k} = 1
\]
Now, the summation is only over the true feature and its aliases, and there is only a single constraint instead of one for each training point.
We can consider the simpler problem in $N_A$ dimensions instead of $B$ dimensions by restricting to the indices in $\{M \cup 1 \}$.
\begin{equation}
\label{eq:xi_problem}
\smallcoeffs = \arg \min_{\boldxi}  \|\boldxi\|_2: \sum_{k=1}^{N_A} \xi_k \sqrt{\widetilde{\lambda}_k} = 1 \; \forall j \in [n]
\end{equation}
In the above, we define $\boldsymbol{\widehat{\xi}} \in \R^{N_A}$ as $\hat{\xi}_1 = \hat{\zeta}_1$ and $\hat{\xi}_j = \hat{\zeta}_{2jn}$ for all other $j \in \{2, \ldots, N_A\}$.
Similarly, we define $\tilde{\boldsymbol{\lambda}} \in \R^B$ as: \[
\widetilde{\lambda}_j = \begin{cases}
  \frac{\lambda_1}{2} & j = 1 \\
  \lambda_{2jn} & j \in [2, (\frac{\B-1}{2n})]
\end{cases}
.\]
%
%
With slightly unconventional notation, we denote
\[
\widetilde{\boldlambda}^{\frac{1}{2}} = \left[ \sqrt{\widetilde{\lambda}_1} \ldots \sqrt{\widetilde{\lambda}_\B} \right]
\]
as shorthand.
Now, from the Cauchy-Schwarz inequality, we realize that $ \|\boldxi\|_2 \|\widetilde{\boldlambda}^{\frac{1}{2}}\|_2 \geq |\langle \boldxi, \widetilde{\boldlambda}^{\frac{1}{2}} \rangle| = 1$; the inner product equals $1$ because of the interpolation constraint in Equation~\ref{eq:xi_problem}.
Equality in the Cauchy-Schwarz will occur when $\widehat{\boldxi}_2 = C \widetilde{\boldlambda}^{\frac{1}{2}}$ for some constant $C$.

Solving for $C$ using the interpolating constraint we have that $C = \frac{1}{\|\widetilde{\boldlambda}^{\frac{1}{2}}\|^2}$. Hence, $\widehat{\boldxi} = \frac{\widetilde{\boldlambda}^{\frac{1}{2}}}{\|\widetilde{\boldlambda}^{\frac{1}{2}}\|^2}$.
Mapping back to the $\widehat{\boldzeta}_2$ co-ordinates, we increase the dimension back to $B$.
Then, $\coeffs_2 = \Sigmabold^{\frac{1}{2}} \widehat{\boldzeta}_2$, which yields the desired expression.
\subsection{Proof of Lemma~\ref{lem:k_star} (characterizing $k^*$)}\label{sec:kstarproof}

Recall that the recovered function is given by
\begin{align*}
\widehat{f}(x) = \frac{2a - \sqrt{2}b}{2\sqrt{2}} + \frac{b}{2} \cdot D_{N_A}(n \pi x) .
\end{align*}
It is well-known (see, e.g. Exercise 1.1,~\cite{muscalu2013classical}), that
\begin{align*}
|D_{N_A}(n \pi x)| \leq E(x) := \frac{2}{\pi nx} ,
\end{align*}
Further, denote $c := \frac{2a - \sqrt{2}b}{2\sqrt{2}} =
\frac{a}{\sqrt{2}} - \frac{b}{2}$.

We first prove the upper bound on $k^*$.
Our strategy for doing this is as follows:
Noting that $E(x)$ is monotonically decreasing in $x$, we first find the intersection $\overline{x}$ of $-b E(x)$ with $-c$.
By the definition of the envelope, all lobes that \textit{begin} beyond $\overline{x}$ can never cross the line $y = -c$, and thus $\widehat{f}(x) \geq 0$ for all $x > \overline{x}$.
This means that the index of the first lobe whose left boundary is greater than $\overline{x}$ constitutes an upper bound on $k^*$.

First, solving for the intersection of $-E(x)$ with $-c$, we get
\begin{align*}
- \frac{2b}{n \pi \overline{x}} &= - \frac{2a - \sqrt{2}b}{2\sqrt{2}} \\
\implies \overline{x} &= \frac{2\sqrt{2} \cdot b \cdot 2}{\pi n(2a - \sqrt{2}b)} .
\end{align*}
Now, recall that the $k^{th}$ \textit{negative} lobe of the Dirichlet kernel $D_{N_A}(n \pi x)$ is contained in the interval $[\frac{2k -1}{h(\B,n)}, \frac{2k}{h(\B,n)}]$, where $h(\B,n) := n(N_A + 1/2) = (d -1 + n)/2$.

Therefore,
\begin{align*}
\frac{2k-1}{h(\B,n)} &> \frac{2\sqrt{2} \cdot b \cdot 2}{\pi n(2a - \sqrt{2}b)} \\
\implies k &> \frac{2\sqrt{2} \cdot b \cdot h(\B,n)}{\pi n (2a - \sqrt{2}b)} := \frac{2\sqrt{2} \cdot b \cdot (d - 1 + n) + n(2a - \sqrt{2}b)}{2 \pi n (2a - \sqrt{2} b)} .
\end{align*}
The right hand side of the above matches the expression in Equation~\eqref{eq:kstarupperbound}, and completes the proof of the upper bound on $k^*$.

Now, we prove the sufficient condition for the lower bound $k^* \geq 1$.
For this, we again used specialized properties of the Dirichlet kernel.
We know that the global minimum of the function $D_{N_A}(n\pi x)$ will be located in the \textit{first} negative lobe; therefore, it suffices to prove an upper bound on the (negative) value of the global minimum.
One can show\footnote{\label{foot:dirichlet} See, e.g. the calculation in \url{http://www-personal.umd.umich.edu/~adwiggin/TeachingFiles/FourierSeries/Resources/DirichletKernel.pdf}} that
\begin{align*}
\min_{x} D_{N_A}(n \pi x) \leq -C' \cdot (N_A + 1/2) := -C' \cdot \frac{h(\B,n)}{n} ,
\end{align*}
where $C' \sim 0.4344$ is another universal positive constant that does not depend on $d,n,a,b$.
(The approximation is upto $4$ decimal places.)
Thus, we will have $\widehat{f}(x) < 0$ for some $x \in [\frac{1}{h(\B,n)}, \frac{2}{h(\B,n)}]$ provided that
\begin{align*}
\frac{2a - \sqrt{2}b}{2\sqrt{2}} + -0.4344 \frac{b}{2} \cdot \frac{h(\B,n)}{n} &< 0 \\
\implies 0.2172 \sqrt{2}b \cdot (d - 1 + n) > n (2a - \sqrt{2}b) .
\end{align*}
This matches the expression in Equation~\eqref{eq:kstarlowerbound} and completes the proof of Lemma~\ref{lem:k_star}.

\section{Risk calculations}

In this section, we collect proofs of our calculations for the classification and adversarial risk.

\subsection{Proof of Theorem~\ref{thm:riskabdn}}\label{sec:riskabdnproof}

We start by getting an upper bound on the classification risk for a trigonometric function of the form $f_{(a,b,d,n)}(\cdot)$.
Because $f(\cdot)$ is periodic with period equal to $2/n$, it suffices to evaluate the risk on a single period of length $2/n$.
We first note that $\mathcal{C}(f) \leq \mathcal{C}_{\mathsf{adv}}(f,2\pi/h(\B,n))$.
Thus, it suffices to obtain an upper bound for the latter.
We use the lobe structure of the Dirichlet kernel to characterize some important properties of the zero-crossings of the function $f$ and their neighborhoods, in the following lemma.

\begin{lemma}
 \label{lem:overlaps}

 For the choice of perturbation $\epsilon = \frac{2\pi}{h(\B,n)}$, the padding around zero-crossings are given by:
\begin{align*}
   \widetilde{r}_{k-1} = \widetilde{l_k} \text{ and } \widetilde{l}_1 = 0
\end{align*}
for all lobes indexed by $k \leq k^*$.
\end{lemma}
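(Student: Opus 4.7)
The plan is to exploit the negative-lobe structure of the Dirichlet kernel, which controls the locations of the zero crossings of $\widehat{f}$, and then to compare the resulting inter-crossing gaps with $2\epsilon$. Since $\widehat{f}(x)=c+\tfrac{b}{2}D_{N_A}(n\pi x)$ with $c=(2a-\sqrt{2}b)/(2\sqrt{2})$ is just a vertical shift (and scaling) of the Dirichlet kernel, its sign structure follows that of the shifted kernel.

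First, by even symmetry of $D_{N_A}(n\pi x)$ and periodicity $2/n$, I would restrict attention to the half-period $[0,1/n]$ and adopt the boundary convention $\widetilde{r}_0:=0$. Using the identity $(N_A+\tfrac12)n=(\B-1+n)/2=h(\B,n)$, we can write $D_{N_A}(n\pi x)=\sin(h(\B,n)\pi x)/\sin(n\pi x/2)$, whose sign on $[0,1/n]$ is determined by the numerator. Therefore the $k$-th negative lobe of $D_{N_A}(n\pi x)$ is exactly $I_k:=[\tfrac{2k-1}{h(\B,n)},\tfrac{2k}{h(\B,n)}]$. The Dirichlet kernel is unimodal on each lobe, so $\widehat{f}$ crosses zero at most twice inside $I_k$; for any $k\le k^*-1$ (where $k^*$ is defined precisely so $\widehat{f}>0$ throughout $I_{k^*}$ but not $I_{k^*-1}$), there are exactly two such crossings, and the misclassification interval $(l_k,r_k)\subseteq I_k$.

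Next, from the containments $(l_k,r_k)\subseteq I_k$ and $(l_{k-1},r_{k-1})\subseteq I_{k-1}$ I would deduce $l_k\le \tfrac{2k}{h(\B,n)}$ and $r_{k-1}\ge \tfrac{2k-3}{h(\B,n)}$, giving the gap bound
\[
l_k - r_{k-1} \;\le\; \frac{3}{h(\B,n)} .
\]
With $\epsilon=\tfrac{2\pi}{h(\B,n)}$, we have $2\epsilon=\tfrac{4\pi}{h(\B,n)}>\tfrac{3}{h(\B,n)}$, so $l_k-\epsilon<r_{k-1}+\epsilon=\widetilde{r}_{k-1}$. The definition $\widetilde{l_k}=\max(l_k-\epsilon,\widetilde{r}_{k-1})$ then forces $\widetilde{l_k}=\widetilde{r}_{k-1}$, as claimed. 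For the base case $k=1$, the containment gives $l_1\ge \tfrac{1}{h(\B,n)}<\epsilon$, so $l_1-\epsilon<0=\widetilde{r}_0$, yielding $\widetilde{l_1}=0$.

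The argument is elementary once the lobe containment is established, and hinges on the slack inequality $4\pi>3$. The only real obstacle is bookkeeping: working on the half-period $[0,1/n]$ (where $k^*$ is naturally defined), fixing the left-boundary convention $\widetilde{r}_0:=0$, and carefully invoking unimodality of each Dirichlet lobe to guarantee that there is exactly one misclassification sub-interval per non-benign lobe.
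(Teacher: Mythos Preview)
Your proposal is correct and follows essentially the same route as the paper: bound the gap $l_k - r_{k-1}$ between consecutive misclassification intervals using the negative-lobe structure of the Dirichlet kernel, then observe that this gap is smaller than $2\epsilon$, forcing the padded intervals to merge. The paper bounds the gap by the spacing between successive lobe minima; you instead use the lobe containment $(l_k,r_k)\subseteq I_k$ to obtain $l_k-r_{k-1}\le 3/h(\B,n)$, which is a slightly different but equally valid bound.

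One small slip to fix in your base case: you write ``$l_1\ge \tfrac{1}{h(\B,n)}<\epsilon$'', but the inequality you actually need is the \emph{upper} bound $l_1\le \tfrac{2}{h(\B,n)}<\tfrac{2\pi}{h(\B,n)}=\epsilon$ (from $l_1\in I_1$), which then gives $l_1-\epsilon<0$ and hence $\widetilde{l}_1=0$. With that correction the argument is complete.
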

\begin{proof}
The distance between the locations of minima of any two successive lobes is exactly $\frac{2\pi}{h(\B,n)}$. We know that $d_k = \widetilde{l}_k - \widetilde{r}_{k-1}$ cannot be larger than this. We recall that
\begin{align*}
\widetilde{l}_k = \begin{cases}
\widetilde{r}_{k-1},& d_k < 2\epsilon,\\
l_k - \epsilon, & \text{otherwise.}
\end{cases}
\end{align*}
Our choice of $\epsilon$ implies that we will always hit the first case. Further, $l_1 < \frac{2\pi}{h(\B,n)}$ must be true because $\frac{2\pi}{h(\B,n)}$ is the co-ordinate of the end of the first lobe. Since $\widetilde{l_1} = \max(l_1 - \epsilon, 0)$, we will always choose  $0$.
\end{proof}

Then, plugging Lemma~\ref{lem:overlaps} directly into the expression for the adversarial risk as a function of the zero-crossings gives us
\begin{align*}
\mathcal{C}_{\mathsf{adv}}(f,2\pi/h(\B,n)) = k^* \cdot \frac{\frac{4 \pi}{(h(\B,n))} + \frac{2 \pi}{h(\B,n)}}{1/n} &\leq 2 k^* \cdot \frac{4 \pi n}{(h(\B,n))} .
\end{align*}
Substituting the upper bound for $k^*$ from Lemma~\ref{lem:k_star} yields Equation~\eqref{eq:risk_abdn} and Equation~\eqref{eq:advsmall_abdn}, and completes the proof of upper bounds on classification and adversarial risk to perturbations on the order of $1/h(\B,n)$.

We now exactly characterize the adversarial risk for perturbation $\epsilon = 2/n$.
Note that there is a sharp phase transition in the behavior depending on the value of $k^*$:
\begin{enumerate}
\item If $k^* \geq 1$, we know that the global minima of $f(\cdot)$ are strictly negative.
Then, the periodicity of the function $f(\cdot)$ directly implies that any perturbation of magnitude at least $2/n$ would find this negative-valued point.
\item On the other hand, if $k^* = 0$, it means that $f(\cdot) \geq 0$ for all $x$ and the adversarial error will always be equal to $0$.
\end{enumerate}

This directly tells us that if Equation~\eqref{eq:kstarlowerbound} holds, the adversarial risk will be $1$; otherwise, it will be equal to $0$.
This completes the proof of the theorem.
\qed

\subsection{Proof of Corollary~\ref{cor:riskbilevel}}

In this section, we prove Corollary~\ref{cor:riskbilevel} as a consequence of Theorem~\ref{thm:riskabdn} for the bilevel ensemble.

First, we characterize the classification risk (and adversarial risk to perturbations of $\epsilon = 2\pi/h(\B,n))$).
Using the expressions from Lemma~\ref{lem:coeffs}, simple algebra gives us
\begin{subequations}
\begin{align}
a &= \frac{1}{1 + n^{q - 1}} \text{ and } \label{eq:a} \\
b &= \frac{1}{n^{p - q} + n^{p - 1}} \label{eq:b}.
\end{align}
\end{subequations}
For the case $q > 1$, we have
\begin{align*}
a &\asymp n^{-(q-1)} \\
b &\asymp n^{-(p - q)} .
\end{align*}
Then, our upper bound on the classification risk was
\begin{align*}
\frac{\sqrt{2} C b \cdot h(\B,n) + n(2a - \sqrt{2}b)}{2h(\B,n) (2a - \sqrt{2}b)} &= \frac{b}{2(2a - \sqrt{2}b)} + \frac{n}{h(\B,n)} \\
&= \frac{n^{q - p}}{2(2 - \sqrt{2} n^{q - p})} + n^{-(p-1)} \\
&\leq C n^{-(p-q)} + n^{-(p-1)} \\
&\leq C n^{-(p-q)}
\end{align*}
for some constant $C$ that is independent of $n$.
This verifies the statement of Equation~\eqref{eq:risk_bilevel}, as well as Equation~\eqref{eq:advsmall_bilevel}.

Next, we characterize the adversarial risk to perturbations of $\epsilon = 2/n$.
Observe that we needed the condition
\begin{align*}
2\sqrt{2} \cdot 0.2172 \cdot b \cdot (\B  - 1  + n) > n (2a - \sqrt{2}b)
\end{align*}
to hold.
Substituting Equations~\eqref{eq:a} and~\eqref{eq:b} tells us that the condition holds if and only if we have
\begin{align*}
t(n) := \frac{2\sqrt{2} \cdot 0.2172 \cdot (n^{p+1} - n + n^2 + n^{p+q} - n^q + n^{q+1})}{n^{p-q}+n^{p+1} -\sqrt{2}n^2 + \sqrt{2} n^{q+1}} > 1 .
\end{align*}
For the case $q > 1$, observe that $t(n) \geq \frac{0.2172 \sqrt{2}}{2 + \sqrt{2}} \cdot n^{q - 1} > 1$ if
\begin{align*}
n \geq n_0(q) := \left(\frac{2 + \sqrt{2}}{0.2172 \cdot 2\sqrt{2}}\right)^{\frac{1}{q - 1}} \approxeq (5.55)^{\frac{1}{q-1}} .
\end{align*}
On the other hand, for $q < 1$, observe that $t(n) \leq \frac{2\sqrt{2} \cdot 0.2172 (1 + n^{-(1 - q)} + n^{-(p-q)})}{1 - \sqrt{2} n^{- (p-1)}}$, and since all of the exponents in $n$ are strictly negative, we have $t(n) \leq 0.8 < 1$ for large enough $n$.
This completes the proof of the corollary.
\qed

\section{Heuristic for randomly spaced data}\label{sec:randomlyspaced}

In this section, we provide a heuristic calculation to show that when
the training data is drawn uniformly-at-random instead of being
regularly spaced, we will get
\begin{align}\label{eq:advriskrandom}
\E\left[\mathcal{C}_{\mathsf{adv}}(f,1/n)\right] \geq \left(1  - \frac{1}{e}\right) ,
\end{align}
as long as Equation~\eqref{eq:kstarlowerbound} holds. Above, the
expectation is taken over the randomness in the training points in
addition to the randomness in the test points.

The calculation is heuristic because of the following assumption:

\begin{center}
\textit{When the condition of Equation~\eqref{eq:kstarlowerbound}
  holds, each training point, regardless of its location, generates
  two \textit{distinct} adversarial examples in its immediate vicinity
  due to the Gibbs phenomenon. Setting $\epsilon_n = 1/n$, we get $\inf_{x' \in [x_i - \epsilon_n, x_i]} f(x') < 0$ and $\inf_{x'
    \in [x_i, x_i + \epsilon_n]} f(x') < 0$. }
\end{center}

This assumption is borne out by the visualizations of the recovered
function from randomly spaced training data in
Figure~\ref{fig:differentfamilies}.

\begin{figure}[htpb]
\begin{subfigure}[]{0.49\linewidth}
  \centering
  \includegraphics[width=\textwidth]{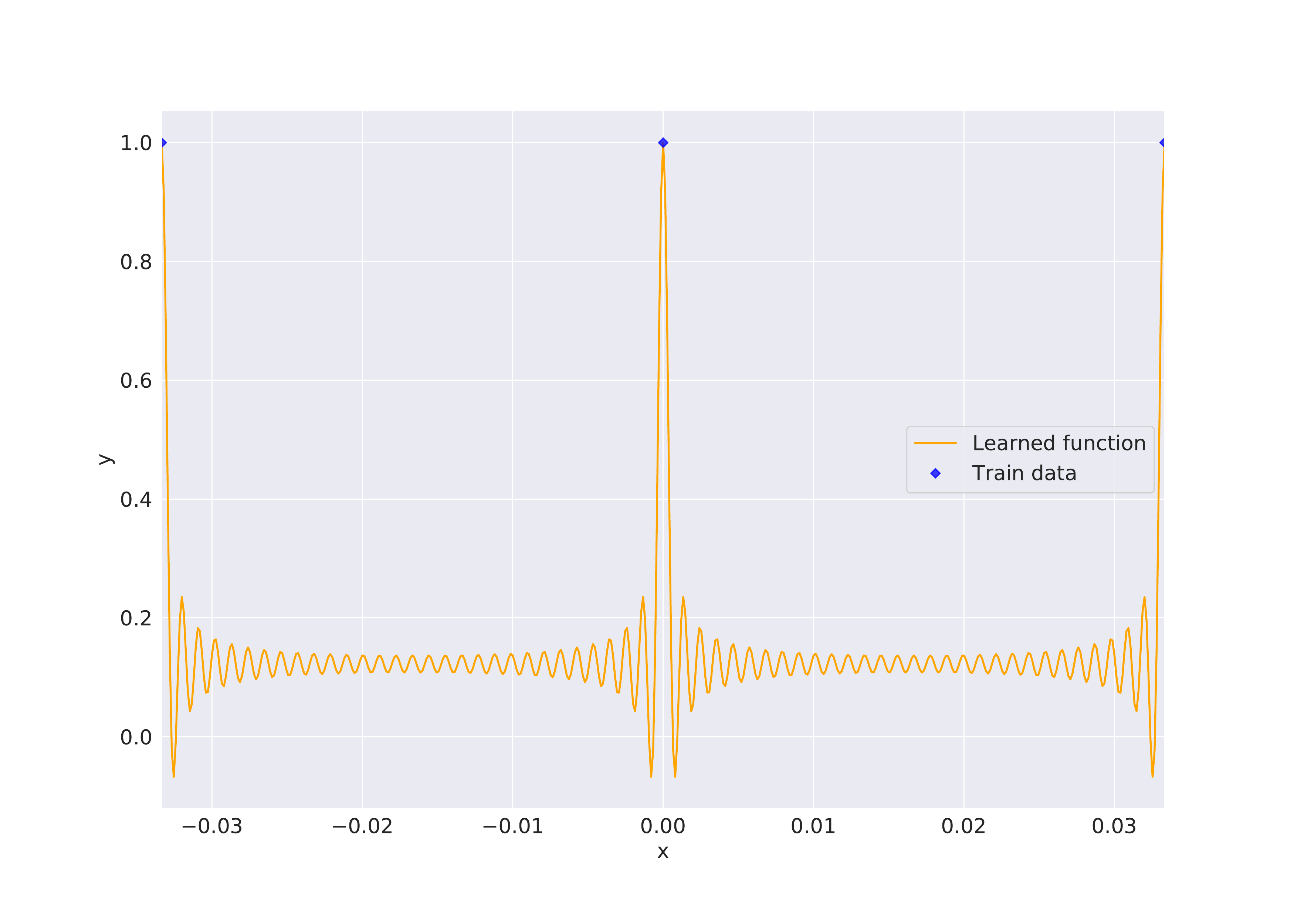}
 \caption{Fourier, regularly spaced }
 \label{subfig:fp_reg_four}
 \end{subfigure}
 \begin{subfigure}[]{0.49\linewidth}
   \centering
   \includegraphics[width=\textwidth]{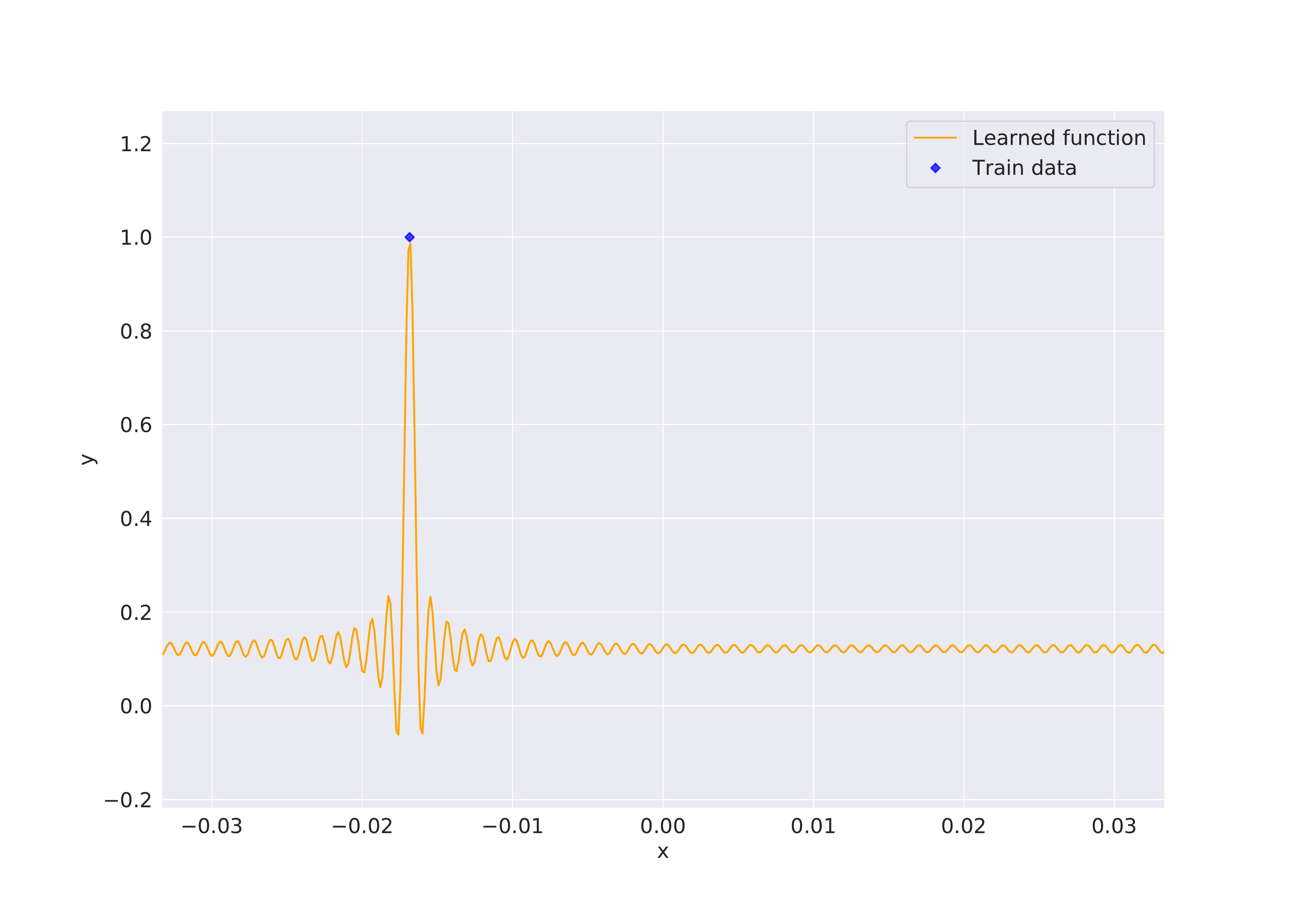}
 \caption{Fourier, randomly sampled}
 \label{subfig:fp_random_four}
 \end{subfigure}
 \caption{Zoomed-in recovered function plots for different samplings
   of training data. Throughout, $n=60, p = 2, r = 0, q = 1.45$, and all
  parameters vary according to the bilevel scalings in
  Definiton~\eqref{def:bilevel_covariance}. In each of the cases, the
  learned function can be seen to cross the x-axis near the training
  point due to Gibbs-ean effects; however, the interval of $x$ for which the
  function's output is negative is very small. This causes the
  function to do well on average but do poorly in the face of an
  adversary who has the ability to search for these small regions of
  misclassification.}
 \label{fig:differentfamilies}
\end{figure}

From this assumption, we will use a balls-and-bins argument to
heuristically approximate the adversarial error in the
following fashion. A test point cannot be pushed within $\epsilon_n$ to
a training point if and only if none of those training points happened
to land within the size $2\epsilon_n$ neighborhood around the test
point. But each training point is assumed to have been drawn
independently at random from a uniform distribution from $[-1,+1]$ and
so the probability of this happening for any single training point is
$1 - \frac{1}{n}$. This has to happen for each of the training points,
which gives $(1-\frac{1}{n})^n$ by independence, and as $n \rightarrow
\infty$, this tends to $e^{-1}$. Consequently, the probability of not
having an adversarial error is $e^{-1}$ and so the probability of
having an adversarial error is $1-e^{-1} \approx 0.623$.

This yields corresponding predictions for the adversarial risk to
Equations~\eqref{eq:advmed_abdn} and~\eqref{eq:advmed_bilevel}, with
$1$ replaced by $\approx 0.623$. This calculation demonstrates that
while the adversarial risk is less for randomly spaced training data
than for regularly spaced training data, it still persists at
\textit{at least} a large constant fraction value as $n \to \infty$
under various overparameterized ensembles.

\section{Proof of Theorem~\ref{thm:rfs_large_d}}\label{sec:rfsproof}

Denote $\boldsymbol{W} \in \R^{B \times d}$ as $\boldsymbol{W} = [\boldsymbol{w_1}, \boldsymbol{w_2}, \ldots \boldsymbol{w_d}]$.
Then $\phirfs(\xtrain) = \phi(\xtrain) \boldsymbol{W}$.
First, we state the following important lemma that follows as an immediate consequence of Random Matrix Theory: see Equation (6.12) in \cite{wainwright2019high}.
\begin{lemma}
  \label{lem:random_matrix}
  Under the Definition~\ref{def:rfs}, with probability $1 - 2e^{-d \delta^2/2}$,
\begin{equation}
  \frac{\norm{\frac{1}{d} \boldsymbol{W} \boldsymbol{W}^T - \Sigmabold}}{\norm{\Sigmabold}} \leq 2 \sqrt{\frac{B}{d}} + 2\delta + \left(\sqrt{\frac{B}{d}} + \delta \right)^2
\end{equation}
\end{lemma}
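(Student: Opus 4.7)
The plan is to reduce the statement to a concentration result for a Wishart-type matrix with identity covariance, for which sharp singular-value bounds are classical. The main steps follow the standard derivation that underlies Equation (6.12) in \cite{wainwright2019high}.

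First, I would rewrite the matrix $\boldsymbol{W}$ in a form that isolates the covariance structure. Since each column $\boldsymbol{w_i} \sim \mathcal{N}(\boldsymbol{0}, \Sigmabold)$ independently, we can write $\boldsymbol{W} = \Sigmabold^{1/2} \boldsymbol{G}$, where $\boldsymbol{G} \in \R^{B \times d}$ has i.i.d.\ $\mathcal{N}(0,1)$ entries. Then
\begin{align*}
  \tfrac{1}{d} \boldsymbol{W} \boldsymbol{W}^T - \Sigmabold
  = \Sigmabold^{1/2}\!\left(\tfrac{1}{d}\boldsymbol{G}\boldsymbol{G}^T - \mathbf{I}_B\right)\!\Sigmabold^{1/2},
\end{align*}
and submultiplicativity of the spectral norm together with $\|\Sigmabold^{1/2}\|^2 = \|\Sigmabold\|$ yields
\begin{align*}
  \frac{\norm{\tfrac{1}{d}\boldsymbol{W}\boldsymbol{W}^T - \Sigmabold}}{\norm{\Sigmabold}}
  \;\leq\; \norm{\tfrac{1}{d}\boldsymbol{G}\boldsymbol{G}^T - \mathbf{I}_B}.
\end{align*}
This step reduces the problem to bounding the spectral deviation of a normalized Wishart matrix with identity population covariance, in which the bilevel/conditioning of $\Sigmabold$ plays no role.

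Next, I would invoke the Davidson--Szarek/Gordon concentration inequality for singular values of standard Gaussian matrices: with probability at least $1 - 2e^{-t^2/2}$,
\begin{align*}
  \sqrt{d} - \sqrt{B} - t \;\leq\; \sigma_{\min}(\boldsymbol{G}) \;\leq\; \sigma_{\max}(\boldsymbol{G}) \;\leq\; \sqrt{d} + \sqrt{B} + t.
\end{align*}
Setting $t = \delta \sqrt{d}$ converts this into the multiplicative form $\sqrt{d}\,(1 \pm \sqrt{B/d} \pm \delta)$ with the stated failure probability $2 e^{-d\delta^2/2}$.

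The final step is to pass from singular-value bounds on $\boldsymbol{G}$ to a spectral bound on $\tfrac{1}{d}\boldsymbol{G}\boldsymbol{G}^T - \mathbf{I}_B$. Since
\begin{align*}
  \norm{\tfrac{1}{d}\boldsymbol{G}\boldsymbol{G}^T - \mathbf{I}_B}
  = \max\!\left(\,\tfrac{\sigma_{\max}(\boldsymbol{G})^2}{d} - 1,\; 1 - \tfrac{\sigma_{\min}(\boldsymbol{G})^2}{d}\right),
\end{align*}
squaring the upper bound $(1 + \sqrt{B/d} + \delta)^2 = 1 + 2\sqrt{B/d} + 2\delta + (\sqrt{B/d} + \delta)^2$ immediately delivers the claimed right-hand side for the upper tail. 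A short check shows that $1 - (1 - \sqrt{B/d} - \delta)^2 \leq 2\sqrt{B/d} + 2\delta$ (when the inner quantity is nonnegative; otherwise the bound is vacuous since the claimed bound already exceeds $1$), which is dominated by the upper-tail bound, so the maximum is controlled by the same expression.

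There is no real obstacle here: the only mildly delicate point is verifying that the lower-tail contribution is indeed absorbed by the upper-tail bound, so that a single application of the Gaussian singular-value concentration inequality controls both $\sigma_{\max}$ and $\sigma_{\min}$ simultaneously via a union bound absorbed into the constant $2$ in front of $e^{-d\delta^2/2}$. Once this is checked, chaining the three displays above produces the stated inequality.
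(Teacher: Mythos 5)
Your proposal is correct, and it gives precisely the standard argument underlying Wainwright's Theorem 6.1 / Equation (6.12), which is exactly what the paper cites (without reproducing a proof) to justify this lemma. The decomposition $\boldsymbol{W}=\Sigmabold^{1/2}\boldsymbol{G}$, the reduction via submultiplicativity to $\|\tfrac{1}{d}\boldsymbol{G}\boldsymbol{G}^{T}-\mathbf{I}_{B}\|$, the Davidson--Szarek two-sided singular-value concentration with $t=\delta\sqrt{d}$, and the observation that the lower-tail contribution $1-(1-\sqrt{B/d}-\delta)^2 \le 2\sqrt{B/d}+2\delta$ is dominated by the upper-tail bound are all exactly the ingredients of the cited result specialized to the Gaussian case, so you have supplied the proof the paper delegates to the reference.
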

For a fixed $B$ and growing $d$, the sequence of empirical covariance matrices that varies with $d$, $\{\boldsymbol{W} \boldsymbol{W}^T\}$ converges to $\Sigmabold$. Now, recall that the solution to the min-$L_2$ interpolation problem can be written using the formula for the pseudoinverse.
\begin{equation}
  \label{eq:coeffs_two}
  \coeffs_2 = \Sigmabold^{\frac{1}{2}} \left(\phi(\xtrain) \Sigmabold^{\frac{1}{2}} \right)^\dagger \ytrain = \Sigmabold  \phi(\xtrain)^T (\phi(\xtrain) \Sigmabold \phi(\xtrain)^T)^{-1} \ytrain
\end{equation}
Similarly, the RFS coefficients can be written as:
\[
\coeffsRFS = \boldsymbol{W}^T \phi(\xtrain)^T (\phi(\xtrain) \boldsymbol{W} \boldsymbol{W}^T \phi(\xtrain)^T)^{-1} \ytrain
.\]
Writing the weights in the original Fourier space, we have
\begin{equation}
  \label{eq:coeffs_rfs}
  \coeffs^{RFS} = \boldsymbol{W} \coeffsRFS = \boldsymbol{W} \boldsymbol{W}^T \phi(\xtrain)^T (\phi(\xtrain) \boldsymbol{W} \boldsymbol{W}^T \phi(\xtrain)^T)^{-1} \ytrain
\end{equation}
Comparing $\coeffs^{RFS}$ in Equation~\ref{eq:coeffs_rfs} to $\coeffs_2$ in Equation~\ref{eq:coeffs_two}, we recognize that each $\Sigmabold$ has been replaced by the empirical covariance $\boldsymbol{W} \boldsymbol{W}^T$. Denote $\boldsymbol{M} = \phi(\xtrain)$ and write the Euclidean-norm of the difference in coefficients as:
\begin{align*}
  \norm{\coeffs_2 - \alphaRFS} &= \norm{\Sigmabold \boldsymbol{M}^\top (\boldsymbol{M} \Sigmabold \boldsymbol{M}^\top)^{-1} \ytrain  - \boldsymbol{W} \boldsymbol{W}^T \boldsymbol{M}^\top (\boldsymbol{M} \boldsymbol{W} \boldsymbol{W}^T \boldsymbol{M}^\top)^{-1} \ytrain}  \\
			       & \leq \underbrace{\norm{ \Sigmabold \boldsymbol{M}^\top (\boldsymbol{M} \Sigmabold \boldsymbol{M}^\top)^{-1} \ytrain  - \frac{1}{d} \boldsymbol{W} \boldsymbol{W}^\top \boldsymbol{M}^\top (\boldsymbol{M} \Sigmabold \boldsymbol{M}^\top)^{-1} \ytrain} }_{T_1} + \\ 
			       & \underbrace{\norm{\boldsymbol{W} \boldsymbol{W}^T \boldsymbol{M}^\top (\boldsymbol{M} \boldsymbol{W} \boldsymbol{W}^T \boldsymbol{M}^\top)^{-1} \ytrain -  \frac{1}{d} \boldsymbol{W} \boldsymbol{W}^\top \boldsymbol{M}^\top (\boldsymbol{M} \Sigmabold \boldsymbol{M}^\top)^{-1} \ytrain} }_{T_2}
\end{align*}
The second step above was obtained by adding and subtracting terms and using the triangle inequality. Now, we bound each of the terms independently.

\noindent \textbf{Bounding $\mathbf{T_1}$:}
Let $\boldsymbol{\Delta} = (\frac{1}{d} \boldsymbol{W} \boldsymbol{W}^\top - \Sigmabold)$ be the error in covariance estimate. Below, we use the spectral norm for matrices.

\begin{equation}
  \label{eq:t1_split}
  \norm{ (\Sigmabold - \frac{1}{d} \boldsymbol{W} \boldsymbol{W}^\top)  (\boldsymbol{M}^\top (\boldsymbol{M} \Sigmabold \boldsymbol{M}^\top)^{-1} \ytrain)}  \leq \norm{ \Sigmabold - \frac{1}{d} \boldsymbol{W} \boldsymbol{W}^\top}  \; \norm{\boldsymbol{M}^\top (\boldsymbol{M} \Sigmabold \boldsymbol{M}^\top)^{-1} \ytrain}
\end{equation}
From Lemma~\ref{lem:random_matrix}, we have that $\norm{\boldsymbol{\Delta}} \leq \bigO\left(\sqrt{\frac{B}{d}}\right) \norm{\Sigmabold} = n^{\frac{3p}{2} - q - \frac{T}{2}}$. To obtain the scalings with $n$, we used from Definition~\ref{def:bilevel_covariance} that $B = n^p, d = n^T, \norm{\Sigmabold} = n^{p-q}$.

For the second term in Equation~\ref{eq:t1_split}, we recall from Equation~\ref{eq:coeffs_two} that $\coeffs_2 = \Sigmabold \boldsymbol{M}^\top (\boldsymbol{M} \Sigmabold \boldsymbol{M}^\top)^{-1} \ytrain$. And we have a closed form expression for this from Lemma~\ref{lem:coeffs}. Hence, we can write the second term as
\begin{equation}
  \norm{\boldsymbol{M}^\top (\boldsymbol{M} \Sigmabold \boldsymbol{M}^\top)^{-1} \ytrain}  = \norm{\Sigmabold^{-1} \coeffs_2}  \leq
  \norm{ \Sigmabold^{-1}} \norm{ \begin{bmatrix} \frac{\sqrt{2} \lambda_1}{\lambda_1 + \lambda_L \frac{B-1}{n}} \\ 0 \\ \vdots \\ \frac{2 \lambda_L}{\lambda_1 + \lambda_L \frac{B-1}{n}} \\ \vdots  \end{bmatrix}}
\end{equation}
We know that $\norm{\Sigmabold^{-1}} = \frac{1}{\lambda_d} = \frac{1}{1 - n^{-q}}$. In the vector $\coeffs_2$ above, there are $0$s in the positions of all non-alises of $\ytrain$ and the weights of the first true feature and the $N_A$ aliases are as specified. We can write the norm of the vector as
\begin{align*}
  \norm{\coeffs_2} & = \frac{1}{\lambda_1 + \lambda_L (\frac{B-1}{n})} \sqrt{2 \lambda_1^2 + 4 \left(\frac{B-1}{n} \right) \lambda_L^2}  \\
 & = \frac{1}{\gamma \frac{n-1}{n} + \frac{1}{n}} \sqrt{\gamma^2 + \frac{B}{n} (1 + \gamma^2 - 2 \gamma)} \\
 & = \bigO \left(\frac{n^{-q} + n^{\frac{p-1}{2}}}{n^{-q} + n^{-1}} \right) = \bigO (n^{\frac{p+1}{2}})
\end{align*}
Hence, $T_1$ scales as  $\bigO(\frac{n^{2p + \frac{1}{2} - q - \frac{T}{2}}}{1 - n^{-q}})$. This uses the assumption $q > 1$.

\noindent \textbf{Bounding $\mathbf{T_2}$:} Define $\boldsymbol{G} = \boldsymbol{M} \Sigmabold \boldsymbol{M}^\top, \boldsymbol{S} = \boldsymbol{M} \boldsymbol{\Delta} \boldsymbol{M}^\top$. Then, write $T_2$ as:
\begin{align*}
T_2  & = \norm{\frac{1}{d} \boldsymbol{W} \boldsymbol{W}^T \boldsymbol{M}^\top \left(\frac{1}{d} \boldsymbol{M} \boldsymbol{W} \boldsymbol{W}^T \boldsymbol{M}^\top \right)^{-1} \ytrain -  \frac{1}{d} \boldsymbol{W} \boldsymbol{W}^\top \boldsymbol{M}^\top (\boldsymbol{M} \Sigmabold \boldsymbol{M}^\top)^{-1} \ytrain}  \\
&= \norm{\frac{1}{d} \boldsymbol{W} \boldsymbol{W}^T \boldsymbol{M}^\top \left[\left(\frac{1}{d} \boldsymbol{M} \boldsymbol{W} \boldsymbol{W}^T \boldsymbol{M}^\top \right)^{-1} - (\boldsymbol{M} \Sigmabold \boldsymbol{M}^\top)^{-1}\right] \ytrain}  \\
&= \norm{\frac{1}{d} \boldsymbol{W} \boldsymbol{W}^T \boldsymbol{M}^\top \left[ (\boldsymbol{G} + \boldsymbol{S})^{-1} - \boldsymbol{G}^{-1} \right] \ytrain} \numberthis \label{eq:t1_first_align}
\end{align*}
Consider the infinite-series expansion of the inverse of the sum of two matrices:
\begin{equation}
\label{eq:matrix_identity}
(\boldsymbol{G} + \boldsymbol{S})^{-1} =  (\boldsymbol{I} + \boldsymbol{G}^{-1} \boldsymbol{S})^{-1} \boldsymbol{G}^{-1} = \boldsymbol{G}^{-1} + \boldsymbol{G}^{-2} \boldsymbol{S} + \boldsymbol{G}^{-3} \boldsymbol{S}^2 \cdots
\end{equation}
We can rewrite Equation~\eqref{eq:t1_first_align} as follows, using Equation~\eqref{eq:matrix_identity} and the simple facts about norms for matrices: $\norm{A + B} \leq \norm{A} + \norm{B}$ and $\norm{A B} \leq \norm{A} \norm{B}$.

\begin{align*}
T_2 & = \frac{1}{d} \norm{\boldsymbol{W} \boldsymbol{W}^\top \boldsymbol{M}^\top} \norm{ \boldsymbol{G}^{-2} \boldsymbol{S} + \boldsymbol{G}^{-3} \boldsymbol{S}^2 \cdots}  \norm{\boldsymbol{y}} \\
& \leq \frac{1}{d} \norm{\boldsymbol{W} \boldsymbol{W}^\top \boldsymbol{M}^\top} \left[ \sum_{i=1}^\infty \norm{\boldsymbol{G}^{-{i+1}} \boldsymbol{S}^i}  \right] \norm{\boldsymbol{y}} \\ 
& \leq \frac{1}{d} \norm{\boldsymbol{W} \boldsymbol{W}^\top \boldsymbol{M}^\top} \left[\sum_{i=1}^\infty \norm{\boldsymbol{G}^{-1}}^{i+1} \norm{\boldsymbol{S}}^{i} \right] \norm{\boldsymbol{y}} \numberthis \label{eq:T2_inter}
\end{align*}
Notice that the series above adopts the form of a geometric series with first term $a = \norm{\boldsymbol{G}^{-1}}^2 \norm{\boldsymbol{S}}$ and multplicative factor $r = \norm{\boldsymbol{G}^{-1}} \norm{\boldsymbol{S}}$. Hence, since $r = n^{-q - T}$ (see calculation below), for large enough $n$, $r < 1$. Then the series evaluates to  \[
  \frac{\norm{\boldsymbol{G}^{-1}}^2 \norm{\boldsymbol{S}}}{1 - \norm{\boldsymbol{G}}^{-1} \norm{\boldsymbol{S}}}
.\]
Plugging back in Equation~\eqref{eq:T2_inter}, we have:
\[
T_2  \leq \frac{1}{d} \norm{\boldsymbol{W} \boldsymbol{W}^\top}  \norm{ \boldsymbol{M}^\top} \left[ \frac{\norm{\boldsymbol{G}^{-1}}^2 \norm{\boldsymbol{S}}}{1 - \norm{\boldsymbol{G}^{-1}} \norm{\boldsymbol{S}}} \right] n
.\]
From Lemma~\ref{lem:random_matrix}, we know that $\frac{1}{d} \norm{\boldsymbol{W} \boldsymbol{W}^\top}$ scales as $\norm{\Sigmabold} = n^{p-q}$. Since $\boldsymbol{M}$  can be written as the $n$-dimensional DFT matrix horizontally stacked  $\frac{B}{n}$ times, it is easy to check that $\boldsymbol{M} \boldsymbol{M}^T = B I_n$; hence  $\norm{\boldsymbol{M}^\top} = \sqrt{B}$, and $\sigma_{min}(\boldsymbol{M}) = \sqrt{B}$.

Recognize that $\norm{\boldsymbol{S}} \leq \norm{\boldsymbol{M}}^2 \norm{\boldsymbol{\Delta}} = n^{p} n^{\frac{3p}{2} - q - \frac{T}{2}} = n^{\frac{5p}{2} - q - \frac{T}{2}}$. Let $\sigma_{min} (\cdot)$ denote the minimum singular value of a matrix; then, using the fact that the minimum singular value of a product of matrices is always lower-bounded by the product of their individual singular values, we have:
\begin{equation}
\label{eq:scaling_B}
\sigma_{min} (\boldsymbol{G}) \geq \sigma_{min} (\boldsymbol{M})^2 \lambda_d = n^{p} \times (1 - n^{-q}) = n^{p} - n^{p - q}
\end{equation}
Then, \[
  \norm{\boldsymbol{G}^{-1}} = \frac{1}{\sigma_{min} (\boldsymbol{G})} \leq \frac{1}{n^{p} - n^{p - q}} = \bigO(n^{-p})
.\]
Plugging into the formula for the series, we get
\begin{equation}
\label{eq:}
\frac{\norm{\boldsymbol{G}^{-1}}^2 \norm{\boldsymbol{S}}}{1 - \norm{\boldsymbol{G}^{-1}} \norm{\boldsymbol{S}}} = \bigO \left(\frac{n^{\frac{p}{2} - q - \frac{T}{2}}}{1 - n^{\frac{3p}{2}-q-\frac{T}{2}}} \right).
\end{equation}
Combining all terms for $T_2$, we obtain that
 \begin{equation}
\label{eq:T2_scaling}
T_2 \leq \bigO(n n^{p-q} n^{\frac{p}{2}} \frac{n^{\frac{p}{2}} - q - \frac{T}{2}}{1 - n^{\frac{3p}{2}-q-\frac{T}{2}}}) = \bigO \left(\frac{n^{2p - 2q - \frac{T}{2} + 1}}{1 - n^{\frac{3p}{2} - q - \frac{T}{2}}}\right)
\end{equation}

\noindent \textbf{From coefficients to function convergence}

Define $e(x) = \langle \coeffs^{RFS}, \phi(x) \rangle - \langle \coeffs, \phi(x) \rangle = \sum_{i=1}^n (\coeffs^{RFS} - \coeffs_2) \phi_i(x)$.
\begin{align*}
\label{eq:ex_bound}
  \left|e(x) \right| &= \left| \sum_{i=1}^B (\coeffs^{RFS} - \coeffs_2) \cos(2\pi i x) \right|\\
       & \le \| \coeffs^{RFS} - \coeffs_2 \|_\infty \left|\sum_{i=1}^B \cos(2 \pi i x) \right| \\
       & \le \| \coeffs^{RFS} - \coeffs_2 \|_2 B \\
       & =  \bigO \left(B \cdot \frac{n^{2p + \frac{1}{2} - q - \frac{T}{2}}}{1 - n^{-q}}\right) +  \bigO \left(B \cdot \frac{n^{2p - 2q - \frac{T}{2} + 1}}{1 - n^{\frac{3p}{2} - q - \frac{T}{2}}}\right) \\
       & = \bigO \left(\frac{n^{3p + \frac{1}{2} - q - \frac{T}{2}}}{1 - n^{-q}}\right) + \bigO \left(\frac{n^{3p - 2q - \frac{T}{2} + 1}}{1 - n^{\frac{3p}{2} - q - \frac{T}{2}}}\right)
\end{align*}
Hence when $T > \max(6p + 1 -2q, 6p - 4q + 2, 3p - 2q) = \max(6p+1-2q, 6p - 4q + 2)$, the error above can be confirmed to decay to $0$. When $q > \frac{1}{2}$, the $\max$ can be simplified to $(6p + 1 - 2q)$.

\vspace{1em}

\noindent \textbf{Adversarial risk:} From classical results on the Dirichlet kernel (see Footnote~\ref{foot:dirichlet}), we identify that the size of the Gibbs-dip scales as $\frac{B}{n}$. Hence, at the location of the gibbs-dip,
\begin{align}
  \langle \coeffs^{RFS}, \phi(x) \rangle & \leq \langle \coeffs, x \rangle + \|e(x)\|_\infty \\ 
					 & \leq \frac{2a - b\sqrt{2}}{2\sqrt{2}} 1 - \frac{B b}{n} + \bigO \left(\frac{n^{3p + \frac{1}{2} - q - \frac{T}{2}}}{1 - n^{-q}}\right) + \bigO \left(\frac{n^{3p - 2q - \frac{T}{2} + 1}}{1 - n^{\frac{3p}{2} - q - \frac{T}{2}}}\right) \label{eq:adv_risk_rfs}
.\end{align}
From the assumption on $T$ for the theorem, it follows that  $T > 3p - 2q$. Hence the constant is dominant in the denominator for the final two terms in the expression above. Substituting the scalings for $a \asymp n^{1 - q}$ and $b \asymp n^{1-p}$ which hold when $q > 1$ and ignoring constants, the RHS of Equation~\eqref{eq:adv_risk_rfs} is simplified as:
\begin{equation}
\label{eq:}
\bigO \left(n^{1-q} - n^{1-p} - 1 + n^{3p + \frac{1}{2} - q - \frac{T}{2}} + n^{3p - 2q - \frac{T}{2} + 1} \right)
\end{equation}
Since $q > 1$ and  $p > 1$ is always true, the first two terms decay to $0$ in the limit.  If we ensure that the limit is negative, then the classifier must make a mistake in the limit since the true label is positive. To ascertain this, the final two terms must decay as well, which gives us the condition on $T$ as
 \begin{equation}
\label{eq:T_cond}
T > \max(6p + 1 - 2q, 6p + 2 - 4q) = 6p + 1 - 2q
\end{equation}

\noindent \textbf{Classification risk:} Here we reproduce the proof-strategy of the Fourier case, whilst accounting for the difference in function values $e(x)$.
\begin{align*}
  \langle \coeffs^{RFS}, \phi(x) \rangle  & \geq \langle \coeffs, x \rangle - \|e(x)\|_\infty \\ 
					  & \geq c -\|e(x) \| _\infty + \frac{b}{2} D_{N_A} (n \pi x)
\end{align*}
In the above, $c = \frac{2a - b\sqrt{2}}{2\sqrt{2}}$ and $E(x) = \frac{2b}{n\pi x}$ is the envelope of the Dirichlet kernel, as in the Fourier case. Now, we want to find a condition on $T$ to ensure that the RHS above is positive for almost the entire domain for all $q > 1$.

Since  $E(x)$ is a monotonically decreasing function,  $- E(x)$ is monotonically increasing and consequently, so is the RHS of the expression in the previous display. Hence, if we find the intersection  $\overline{x}$ of the RHS with $0$, then for all  $x > \overline{x}$, we are guaranteed that the RHS will be positive. Setting the RHS to $0$, it is found that
 \begin{equation}
\label{eq:x_bar_RFS}
\overline{x} = \frac{2b}{n \pi (c - \|e(x)\|_\infty)} \geq \frac{2b}{n \pi \tilde{c}}
\end{equation}
We define $\tilde{c}$ by substituting the bound for $\|e(x)\|_\infty$. We simplify the bound in a manner using the same argument as the adversarial risk, since $T > 3p - 2q$ follows from the assumption, and $3p + \frac{1}{2} - q - \frac{T}{2} > 3p - 2q - \frac{T}{2} + 1$.
\begin{align*}
  \tilde{c} & = c - \bigO \left(\frac{n^{3p + \frac{1}{2} - q - \frac{T}{2}}}{1 - n^{-q}}\right) + \bigO \left(\frac{n^{3p - 2q - \frac{T}{2} + 1}}{1 - n^{\frac{3p}{2} - q - \frac{T}{2}}}\right) \\
 &  \asymp c - n^{3p + \frac{1}{2} - q - \frac{T}{2}}
\end{align*}
Hence, for a single period in $(0, \frac{1}{n})$, any misclassifications are guaranteed to be contained in the interval $(0, \overline{x})$. We want to ensure that the size of this interval decays to $0$ as  $n \to \infty$.
\begin{align*}
  \lim_{n \to \infty} \frac{\overline{x}}{\frac{1}{n}} & = \frac{2b}{n \pi \tilde{c}} \times n  \\
						       & = \frac{2b}{\tilde{c}} =  \frac{n^{1 - p}}{n^{1-q} - n^{1-p} - n^{3p + \frac{1}{2} - q - \frac{T}{2}}}
.\end{align*}
For the above limit to evaluate to $0$, it is sufficient to ensure:
\begin{enumerate}
  \item $1 - p < 1 - q \implies q < p$: This is the same condition as the deterministic Fourier case.
  \item $1 - q > 3p + \frac{1}{2} - q - \frac{T}{2} \implies T > 6p - 1$: This ensures that the survived part of the signal $\tilde{c}$ is always positive with high probability.
\end{enumerate}

This completes the proof.
\qed

\end{document}